\theoremstyle{plain}
\newtheorem{theorem}{Theorem}[section]
\newtheorem{proposition}[theorem]{Proposition}
\newtheorem{lemma}[theorem]{Lemma}
\newtheorem{corollary}[theorem]{Corollary}
\theoremstyle{definition}
\newtheorem{definition}[theorem]{Definition}
\theoremstyle{remark}
\newtheorem{remark}[theorem]{Remark}
\newcommand{\R}{\mathbb{R}}
\DeclareMathOperator{\fdim}{dim_\text{fun}}
\DeclareMathOperator{\rk}{rank}
\icmltitlerunning{Hidden Symmetries of Re{LU} Networks}
\begin{document}

\twocolumn[
\icmltitle{Hidden Symmetries of Re{LU} Networks}



\icmlsetsymbol{equal}{*}

\begin{icmlauthorlist}
\icmlauthor{J.~Elisenda Grigsby}{equal,bc}
\icmlauthor{Kathryn Lindsey}{equal,bc}
\icmlauthor{David Rolnick}{equal,mcgill,mila}
\end{icmlauthorlist}

\icmlaffiliation{bc}{Department of Mathematics, Boston College, Boston, USA}
\icmlaffiliation{mcgill}{School of Computer Science, McGill University, Montreal, Canada}
\icmlaffiliation{mila}{Mila -- Quebec AI Institute, Montreal, Canada}

\icmlcorrespondingauthor{J.~Elisenda Grigsby}{grigsbyj@bc.edu}

\icmlkeywords{deep learning theory, functional dimension, parameter space, linear region, activation pattern, bent hyperplane arrangement}

\vskip 0.3in
]



\printAffiliationsAndNotice{\icmlEqualContribution} 

\begin{abstract}
The parameter space for any fixed architecture of feedforward ReLU neural networks serves 
as a proxy during training for the associated class of functions -- but how faithful is this representation?  It is known that many different parameter settings $\theta$ can determine the same function $f$. Moreover, the degree of this redundancy is inhomogeneous: for some networks, the only symmetries are permutation of neurons in a layer and positive scaling of parameters at a neuron, while other networks admit additional \emph{hidden symmetries}. In this work, we prove that, for any network architecture where no layer is narrower than the input, there exist parameter settings with no hidden symmetries. We also describe a number of mechanisms through which hidden symmetries can arise, and empirically approximate the functional dimension of different network architectures at initialization. These experiments indicate that the probability that a network has no hidden symmetries decreases towards 0 as depth increases, while increasing towards 1 as width and input dimension increase.
\end{abstract}

\section{Introduction}
The success of deep learning relies upon the effectiveness of neural networks in expressing a wide variety of functions. However, it is generally impractical to explicitly write down the function computed by a network, so networks of a given architecture are described and learned via parameter vectors (encompassing weights and biases). The space of parameter vectors serves as a convenient proxy for the space of functions represented by a given network architecture, but it is an imperfect proxy since it is possible for two different parameter vectors to map to the same function.

Indeed, for any fully connected neural network with ReLU activation, it has been observed that the following transformations to the parameters are symmetries -- i.e., they do not change the function computed by the network \citep{RolnickKording, PhuongLampert}:

\begin{itemize}
\item \textbf{Permutation (P).} Reordering the neurons in any hidden layer, along with the corresponding permutation of the weights and biases associated with them,
\item \textbf{Scaling (S).} For any neuron in any hidden layer, multiplying the incoming weights and the bias by any $c > 0$, while dividing the outgoing weights by $c$.
\end{itemize}
Such symmetries can have important implications for gradient-based learning algorithms that operate on parameters. Many authors have considered methods to optimize neural networks accounting for scaling symmetries (see e.g.~\citet{neyshabur2015path}). While networks trained on the same data from different initializations are far apart in parameter space, they express similar functions;  recent work suggests that such networks may in fact be close in parameter space if one accounts for permutation symmetries (see e.g.~\citet{ainsworth2022git}).

It remains unknown, however, in what cases permutation and scaling are the only symmetries admitted by the parameters of a neural network, and how often there are other \emph{hidden symmetries} (formalization in Definition \ref{def:noHiddenSym}). \citet{RolnickKording} prove that under certain conditions, no hidden symmetries exist, and indeed that under these conditions it is possible to reverse-engineer a network's parameters up to permutation and scaling. \citet{PhuongLampert} prove that for all architectures with non-increasing widths, there exist parameter settings with no hidden symmetries. Work by \citet{GLMW} on the functional dimension of networks suggests that a wide variety of hidden symmetries may exist depending on the parameter setting.

Our key results in this paper are as follows:
\begin{itemize}
    \item We prove (Theorem \ref{thm:TPIC}) that if all layers in a fully connected ReLU network are at least as wide as the input layer, then there exists some setting of the parameters such that the network has no hidden symmetries. Indeed, we show that a positive-measure subset of parameter space admits no hidden symmetries.

   \item  We describe four mechanisms through which hidden symmetries can arise (Subsection \S \ref{ss:mechanisms}).  In particular, we prove (Proposition \ref{prop:containedInSubspace}) that if the image of the domain in a hidden layer is contained in a subspace of positive codimension, then there is ambiguity in the neuron of the next layer map. 
    
    \item We experimentally estimate the functional dimension of randomly initialized network {parameter settings}. Our results suggest that the probability that a network has no hidden symmetries decreases with depth, but increases as input dimension and width increase together.
\end{itemize}

\section{Related Work}

Several important lines of work have considered the symmetries of the parametric representations of deep ReLU networks and their implications for learning. One focus area has been in designing optimization methods for neural networks that are invariant to scaling symmetries at individual neurons. Approaches for achieving this goal have include path normalization \citep{neyshabur2015path}, manifold optimization \citep{badrinarayanan2015symmetry}, proceeding in a different vector space \citep{meng2019mathcal}, and projection onto a normalized manifold \citep{huang2020projection}.

Another fruitful direction of work has been in understanding how permutation symmetries in parameter space affect connectivity of the loss landscape. \citet{kuditipudi2019explaining} consider when different parameter permutations of a trained network are connected via piecewise linear paths in parameter space with low loss. \citet{brea2019weight} show that linearly interpolating between different permutations of a network leads to flat regions of the loss landscape. Several recent works have shown that, if permutation symmetries are taken into account, then it is possible to interpolate between networks trained from different initializations, while maintaining a low loss barrier \citep{entezari2021role,ainsworth2022git,jordan2022repair}.

In \citet{ArmentaJodoin}, the authors define and study the {\em moduli space} of neural network functions using quiver representation theory. This theory provides a framework for extracting global symmetries of parameter space of a network architecture from symmetries of the computational graph and the activation functions involved (see also \citet{GanevWalters}), \citet{armenta2023neural, SymmTeleport} build on these ideas to define {\em neural teleportation} algorithms aimed at using symmetries in the loss landscape to improve the efficiency of gradient descent in finding a minimal-loss solution. In \citet{NeuralMechanics} the authors argue that symmetries in the loss landscape have associated conserved quantities that impact training dynamics. 

A number of works have explicitly considered which symmetries are admitted by different ReLU networks. 
A number of authors consider the relationship between the parameters of a ReLU network and the geometry of its {\em bent hyperplane arrangement} (aka {\em fold set}); \citet{milli2019model}, \citet{RolnickKording}, and \citet{carlini2020cryptanalytic} use these properties to reverse-engineer the parameters of certain networks up to permutation and scaling symmetries, and \citet{PhuongLampert} proves that for certain architectures there exist parameter settings without hidden symmetries.
In particular, in \citet{RolnickKording}, the authors provide a geometric condition on the bent hyperplane arrangement of a parameter ensuring that the parameters can be reverse-engineered up to permutation and scaling, hence has no hidden symmetries. It follows nearly immediately that all depth 2 networks and a positive measure subset of any depth 3 network have no hidden symmetries. In \citet{PhuongLampert}, the authors prove that a  positive measure subset of parameters in every non-widening ($n_0 \geq n_1 \geq \ldots n_d$) architecture has no hidden symmetries. In the present work, we prove the complementary result that a positive measure subset of parameters in every architecture whose hidden layers are at least as wide as the input layer (that is, $n_0 \leq n_\ell$
 for all $\ell < d$) has no hidden symmetries.  An example of a family of architectures for which the question of the existence of parameters without hidden symmetries remains unresolved after the present work is an architecture of the form ($n_0, n_1, n_2, n_3, n_4$) with $n_0 < n_1$ and $n_2 <n_0$.
\color{black}

\citet{GLMW} study the {\em functional dimension} of a network parameter setting  -- the dimension of the space of functions that can be achieved by infinitesimally perturbing the parameters -- proving an upper bound on functional dimension that we conjecture is achieved for almost all parameter settings without hidden symmetries (cf.~Lemma \ref{lem:AchievesUpperBound}). 
\color{black}

\section{Notation and Background} \label{sec:background}
We consider fully connected neural networks with ReLU activation, denoting by $(n_0, \ldots, n_d)$ the architecture with input dimension $n_0$, hidden layer widths $n_1,n_2,\ldots,n_{d-1}$, and output dimension $n_d$.

Formally, let $\sigma: \mathbb{R}^n \rightarrow \mathbb{R}^n$ denote the function that applies the activation function $\mbox{ReLU}(x):= \max\{0,x\}$ component-wise. For an architecture $(n_0, \ldots, n_d)$, we define a parameter space $\Omega := \mathbb{R}^D$ where a parameter $\theta := (W^1,b^1, \ldots, W^d) \in \Omega$ consists of weight matrices $W^i \in \mathbb{R}^{n_{i+1} \times n_{i}}$ and bias vectors $b^i \in \mathbb{R}^{n_i}$ for $i=0, \ldots, d-1$. Accordingly, $D:=-n_d + \sum_{i=1}^{d} n_{i}(n_{i-1} + 1)$. From a parameter $\theta$ we define a neural network function: \begin{equation} \label{eqn:ReLUFunction} F_\theta: \xymatrix{\mathbb{R}^{n_0} \ar[r]^-{F^1} & \mathbb{R}^{n_1} \ar[r]^-{F^2} & \ldots \ar[r]^-{F^d} & \mathbb{R}^{n_d}},\end{equation} with layer maps given by: \begin{equation} \label{eqn:layermap}
F^i(x) := \left\{\begin{array}{cl}\sigma(W^ix + b^i) & \mbox{for $1 \leq i < d$}\\
W^ix  & \mbox{for $i = d$}.\end{array}\right.\end{equation}

Note that for any $\theta\in \Omega$, $F_\theta$ is a \emph{finite piecewise-linear} function -- that is, a continuous function for which the domain may be decomposed as the union of finitely many closed, convex pieces, on each of which the function is affine.

Following notation in  Definition 4 of \citet{Masden}, let  $F_{(\ell)} := F^\ell \circ \ldots \circ F^1$ denote the composition of layer maps from the domain, ending with the $\ell$th layer map,  and let $F^{(\ell)} := F^d \circ \ldots \circ F^\ell$ denote the composition of the layer maps ending at the codomain, beginning with the $\ell$th layer map.  In particular, $F_\theta = F^{(\ell+1)} \circ F_{(\ell)}$.

We refer to the components of $F_{(\ell)}$ as the {\em neurons} in the $\ell$th layer. The {\em pre-activation} map $z_{(\ell),i}: \mathbb{R}^{n_0} \rightarrow \mathbb{R}$ associated to the $i$th neuron in the $\ell$th layer is given by: \begin{equation} \label{eqn:preactneuron} z_{(\ell),i}(x) = \pi_i\left(W^\ell(F_{(\ell-1)}(x)) + b^\ell\right),\end{equation} where $\pi_i: \mathbb{R}^{n_\ell} \rightarrow \mathbb{R}$ denotes the projection onto the $i$th component. 
Following \citet{HaninRolnick}, we refer to the zero-set of the pre-activation map for the $i$th neuron in the $\ell$th layer as its associated {\em bent hyperplane}, $\hat{H}^{\ell}_i := z_{(\ell),i}^{-1}\{0\}$. 

The following notions from \citet{RolnickKording} (see also \citet{HaninRolnick}, \citet{GrigsbyLindsey}, and \citet{Masden}), will play a crucial role in the proofs of our results. A {\em (ternary) activation pattern} (aka {\em neural code} or {\em sign sequence}) for a network architecture $(n_0, \ldots, n_d)$ is an $N$--tuple $s \in \{-1,0,+1\}^N$ of signs for $N = \sum_{i=1}^d n_i$ (Def.~\ref{def:ternaryactivationpattern}). Each activation pattern determines a (frequently empty) subset of $\mathbb{R}^{n_0}$ called its associated {\em activation region}. Informally, an activation region is the collection of points $x$ for which the pre-activation sign of a non-input neuron matches the sign of the corresponding component of $s$ (Def.~\ref{def:activationregion}, or Def.~1 of \citet{HaninRolnick} and Def.~13 of \citet{Masden}.) A {\em linear region} of a finite piecewise linear function is a maximal connected set
on which the function is affine-linear. 
A ReLU network map $F_\theta: \mathbb{R}^{n_0} \rightarrow \mathbb{R}^{n_d}$ for an architecture $(n_0, \ldots, n_d)$ is said to satisfy the {\em Linear Regions Assumption} (LRA) (Def.~\ref{def:LRA}) if each linear region is the closure of a single non-empty activation region corresponding to a an activation pattern with all nonzero entries.

\citet{GrigsbyLindsey} proved that for almost all parameters in any fixed architecture $(n_0, \ldots, n_d)$, the {\em bent hyperplane} (zero set of the pre-activation output) associated to a non-input neuron has codimension $1$ (i.e., dimension $n_0 -1$) in the domain.\footnote{Note that it may also be empty.}

Moreover, it is proved in \citet{Masden} that for almost all parameters in any fixed architecture $(n_0, \ldots, n_d)$, the intersection of  $k$ bent hyperplanes has codimension $k$
in the domain. Following \citet{Masden}, we shall call a network whose bent hyperplanes satisfy this enhanced condition {\em supertransversal}.\footnote{The formal definition of supertransversality, given in Definition \ref{defn:supertransversal}, is stronger than what is stated here, but implies it.}

It is proved in Theorem 2 of \citet{RolnickKording} that if a supertransversal ReLU network map $F_\theta$ satisfies LRA,\footnote{Note that \cite{RolnickKording} do not need the LRA to be satisfied on the entire domain--only on a relevant subset  for their algorithm. See Section \ref{sec:LRA} in the Appendix.} and each pair of bent hyperplanes associated to each pair of neurons in each pair of adjacent layers has non-empty intersection, then the network admits no hidden symmetries. Indeed, the authors detail an algorithm allowing the parameters to be extracted from the local geometry of the intersections, up to permutation and scaling.

Accordingly, we will say that a network map $F_\theta$ associated to a parameter $\theta \in \Omega$ satisfies the {\em transverse pairwise-intersection condition}, abbreviated TPIC, if its associated bent hyperplane arrangement is supertransversal, and each pair of bent hyperplanes associated to each pair of neurons in each pair of adjacent layers has non-empty intersection. 
See Figure \ref{fig:2533}.

\color{black}

\section{Main Result}

\begin{theorem} \label{thm:TPIC} Let $(n_0, \ldots, n_d)$ be a feedforward ReLU network architecture satisfying $(n_0 = k) \leq n_\ell$ for all $\ell$, and let $\Omega$ denote its parameter space. A positive-measure subset of $\Omega$ has no hidden symmetries.
\end{theorem}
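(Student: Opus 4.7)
The strategy is to apply Theorem 2 of Rolnick-Kording, which says that any parameter satisfying TPIC admits no hidden symmetries. Since supertransversality holds on an open, full-measure subset of $\Omega$ (by the result of Masden cited above), and the transverse non-empty intersection of a finite collection of codimension-$1$ PL subsets of $\mathbb{R}^{n_0}$ is preserved under small perturbations of the parameter, the TPIC condition is itself an open condition on $\Omega$. Hence it suffices to exhibit a single $\theta^{\ast} \in \Omega$ satisfying TPIC; a sufficiently small neighborhood of $\theta^{\ast}$ then provides the desired positive-measure subset.

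To construct $\theta^{\ast}$, I would proceed layer by layer, leveraging the hypothesis $n_0 \leq n_\ell$ at each step. First, choose $W^1 \in \mathbb{R}^{n_1 \times n_0}$ of rank $n_0$ (possible since $n_1 \geq n_0$) with rows in general position, and biases $b^1$ small enough that the $n_1$ affine hyperplanes $\hat H^1_1, \ldots, \hat H^1_{n_1}$ in $\mathbb{R}^{n_0}$ pairwise intersect transversally, with all pairwise intersections lying in a common bounded open region $U \subset \mathbb{R}^{n_0}$. Next, inductively for each hidden layer $\ell \geq 2$, select $W^\ell$ close to a rank-$n_0$ injection (possible since $n_\ell \geq n_0$) and $b^\ell$ small, so that on $F_{(\ell-1)}(U)$ the layer map $F^\ell$ is close to an affine embedding. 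This ensures each bent hyperplane $\hat H^\ell_j$ is a small PL perturbation of an affine hyperplane in $\mathbb{R}^{n_0}$, transversally meeting every previously constructed bent hyperplane inside $U$. Finally, choose $W^d$ generically so that supertransversality is witnessed on a dense subset of every neighborhood of $\theta^{\ast}$.

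The main obstacle will be guaranteeing that the cross-layer pairwise intersections lie in activation regions where the relevant pre-activation maps are simultaneously realized as honest affine functions on their linear pieces---so that a bent hyperplane $\hat H^\ell_j$ is genuinely a codimension-$1$ PL subset of the expected combinatorial type, rather than being fragmented or collapsed by an earlier ReLU shutoff. The width hypothesis $n_\ell \geq n_0$ is used precisely to keep each $F_{(\ell-1)}$ injective on each of its linear pieces (rank exactly $n_0$), so that bent-hyperplane geometry in later layers faithfully reflects arrangements of affine hyperplanes in $\mathbb{R}^{n_{\ell-1}}$ pulled back by a PL embedding. Choosing the perturbations small enough at each inductive step confines all pairwise intersections inside $U$; openness of supertransversality then yields, within a neighborhood of $\theta^{\ast}$, an open positive-measure subset of parameters satisfying TPIC, completing the proof.
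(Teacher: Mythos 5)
Your high-level skeleton --- TPIC is an open condition, so it suffices to exhibit one parameter satisfying it, constructed layer by layer --- matches the paper's strategy, but there are two genuine gaps. The first is that Theorem 2 of Rolnick--Kording does not say that TPIC alone rules out hidden symmetries: it also requires the Linear Regions Assumption, at least on a neighborhood of the pairwise intersection points, i.e.\ that the four $\pm$-activation regions adjacent to each point of $\hat H^{\ell-1}_i \cap \hat H^{\ell}_j$ carry genuinely distinct affine maps. This is not automatic (it is precisely what fails under the ``collapse'' mechanism \eqref{mech:collapse}), and the paper devotes Lemmas \ref{lem:openpaths}, \ref{lem:LRAinanhd} and \ref{lem:PreciseLRA} to verifying it, by arranging that at least $k$ neurons are active in every layer at every intersection point, so that the four polynomials built from open complete paths in the augmented computational graph are pairwise distinct away from a measure-zero set of parameters. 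Your proposal omits this requirement entirely.

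The second, more serious gap is that your construction does not actually secure the nonempty adjacent-layer intersections that TPIC demands. You ask for $F^\ell$ to be ``close to an affine embedding'' on $F_{(\ell-1)}(U)$ with small biases, while simultaneously requiring the bent hyperplanes $\hat H^\ell_j$ to meet the earlier bent hyperplanes inside $U$; these are in tension, since $\hat H^\ell_j$ meets $U$ only if the hyperplane $H^\ell_j$ crosses $F_{(\ell-1)}(U)$, i.e.\ only if neuron $j$ changes sign there, in which case $F^\ell$ is not affine on that set. More to the point, you never argue \emph{why} $H^\ell_j$ should intersect $F_{(\ell-1)}(\hat H^{\ell-1}_i)$ for every $i$: the image of the domain is confined to the nonnegative orthant and can easily miss a given hyperplane, and this is the crux of the entire theorem. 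The paper's mechanism is essentially the opposite of ``small biases'': it tracks a nested family of unbounded $k$-cells $\mathcal S_1 \supseteq \cdots \supseteq \mathcal S_d$ on which exactly $k$ neurons per layer are active (this is where $n_\ell \geq n_0$ enters, via Proposition \ref{prop:unbddimage}), shows their images contain unbounded rays in $\mathbb{O}^{\geq 0}$, and then realizes each layer-$\ell$ hyperplane as a positive-axis hyperplane with sufficiently large \emph{negative} bias, which is forced to cross every such ray (Lemma \ref{lem:intunbddcell}); a final translation by a vector in the characteristic cone (Lemma \ref{lem:TransByCvxCone}) pulls the relevant intersection points back into the image of the distinguished cell. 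Without a replacement for this mechanism, your inductive step does not go through.
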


{\flushleft \bf Proof Sketch:} 
 A more explicit version of Theorem 2 of \citet{RolnickKording}, stated in Lemma \ref{lem:PreciseLRA}, tells us that any network map $F_\theta$ satisfying TPIC and LRA 
 on a neighborhood of the intersections admits no hidden symmetries.

By Proposition~\ref{p:TPICOpenCondition}, TPIC is an open condition. That is, any parameter satisfying TPIC has an open neighborhood of parameters also satisfying TPIC. Lemma \ref{lem:LRAinanhd} furthermore tells us that any parameter satisfying LRA in a neighborhood of the intersections has an open neighborhood on which LRA is satisfied in a neighborhood of the intersections.

We proceed by induction on the depth, $d$. When $d=1$, there is nothing to prove, so our true base case is $d=2$. In this case, we need only show that we can choose a  combinatorially stable parameter $\theta \in \Omega$ that satisfies (i) supertransversality, (ii) each bent hyperplane from $\ell = 2$ has non-empty intersection with each hyperplane from $\ell = 1$, and (iii) LRA on a neighborhood of the pairwise intersections. 

Since the set of parameters $\theta$ satisfying (i) and (iii) has full measure in $\Omega$, it is routine -- though technical -- to guarantee that these conditions are satisfied once we find a single parameter satisfying (ii). Details are given in the appendix.

To arrange that each bent hyperplane from $\ell=2$ has non-empty intersection with each hyperplane from $\ell = 1$, we make use of so-called {\em positive-axis hyperplanes}.  Generically, an affine hyperplane intersects each coordinate axis of $\mathbb{R}^n$ on either the positive or negative side. A {\em positive-axis} hyperplane intersects all coordinate axes on the positive side. Equivalently, a positive-axis hyperplane is describable as the zero set of an affine-linear equation with positive weights and negative bias: \[H := \{\vec{x} \in \mathbb{R}^n \,\,|\,\, \vec{w}\cdot \vec{x} + b = 0\}\] for $\vec{w} = (w_1, \ldots, w_n)$ satisfying $w_i > 0$ for all $i$ and $b < 0$.

The important property of a positive-axis hyperplane is that it has non-empty intersection with every origin-based ray contained in the non-negative orthant, $\mathbb{O}^{\tiny \geq 0} \subseteq \mathbb{R}^n$. 

We now use the fact (Lemma \ref{lem:intunbddcell}) that for almost all parameters $\theta$, the image under $F^1$ of almost every unbounded ray in $\mathbb{R}^{n_0}$ is a ray in $\mathbb{O}^{\tiny \geq 0} \subseteq \mathbb{R}^{n_1}$ (not necessarily based at the origin). Since every affine hyperplane in $\mathbb{R}^{n_0}$ contains an $(n_0 - 2)$--dimensional sphere of unbounded rays, we can ensure, by perturbing the parameters if necessary, that the image of every hyperplane $H^1 \subseteq \mathbb{R}^{n_0}$ associated to $F^1$ has non-empty intersection with any given positive-axis hyperplane $H^2 \subseteq \mathbb{R}^{n_1}$ with sufficiently high bias. We then apply the following lemma, with $G=F^1$, $H = F^2$, to each pair $S = F^1(H^1)$ for $H^1 \subseteq \mathbb{R}^{n_0}$ a hyperplane associated to the first layer map $F^1$ and $S' = H^2 \subseteq \mathbb{R}^{n_1}$ a sufficiently high bias positive-axis hyperplane associated to the second layer map $F^2$ to ensure that the the bent hyperplanes in the domain, $\mathbb{R}^{n_0}$, have non-empty pairwise intersection, as desired.

\color{black}
\begin{lemma} \label{lem:Nonemptypreimage} Let $G: A \rightarrow B$ and $H: B \rightarrow C$ be functions, let $F= H \circ G$ be their composition, and let $S, S' \subseteq B$ be subsets of the intermediate domain. Then $G^{-1}(S) \cap G^{-1}(S')$ is non-empty iff $G(A) \cap S \cap S'$ is non-empty.
\end{lemma}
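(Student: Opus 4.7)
The plan is straightforward, since the statement is a purely set-theoretic fact about preimages; the functions $H$ and $F$, and the codomain $C$, play no role in the claim itself and can be ignored. What really needs to be shown is the biconditional $G^{-1}(S) \cap G^{-1}(S') \neq \emptyset \iff G(A) \cap S \cap S' \neq \emptyset$.

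First I would record the standard identity that intersections of preimages are preimages of intersections,
$$G^{-1}(S) \cap G^{-1}(S') = G^{-1}(S \cap S'),$$
which follows by unwinding definitions: an element $a \in A$ lies in the left-hand side iff $G(a) \in S$ and $G(a) \in S'$, iff $G(a) \in S \cap S'$.

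Next, I would invoke the general fact that for any function $G : A \to B$ and any subset $T \subseteq B$, the preimage $G^{-1}(T)$ is non-empty if and only if $G(A) \cap T$ is non-empty. The forward direction takes any $a \in G^{-1}(T)$ and observes that $G(a) \in G(A) \cap T$; the reverse direction takes any $b \in G(A) \cap T$, picks $a \in A$ with $G(a) = b$, and observes that $a \in G^{-1}(T)$.

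Applying this general fact with $T := S \cap S'$ to the identity from the first step immediately yields the claim. There is no main obstacle; the lemma is being recorded purely so that it can be cited cleanly in the proof sketch of Theorem~\ref{thm:TPIC}, where it is applied with $G = F^1$, $H = F^2$, $S = F^1(H^1)$, and $S' = H^2$ to convert non-empty pairwise intersections of bent hyperplanes in the domain into the more tractable condition that the image $F^1(\mathbb{R}^{n_0})$ meets the positive-axis hyperplane $H^2$ inside $F^1(H^1)$.
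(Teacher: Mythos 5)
Your proof is correct and follows essentially the same route as the paper, which simply records the identity $G^{-1}(S)\cap G^{-1}(S') = G^{-1}\bigl(G(A)\cap S\cap S'\bigr)$ and declares the claim immediate. You merely unpack that one-liner into its two constituent facts (preimages commute with intersections, and $G^{-1}(T)\neq\emptyset$ iff $G(A)\cap T\neq\emptyset$), which is the same argument in slightly more detail.
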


\begin{proof} Immediate from the fact that \[G^{-1}(S) \cap G^{-1}(S') = G^{-1}\left(G(A) \cap S \cap S'\right).\]
\end{proof}

The inductive step in the construction of a combinatorially stable depth $d$ network from a  combinatorially stable depth $d-1$ network satisfying TPIC is more intricate, but the key idea is to notice that adding a layer to the network preserves all previous bent hyperplanes and their intersections, so all that is needed is to choose parameters for the final layer ensuring that the new bent hyperplanes associated to the final layer have non-empty pairwise intersection with the bent hyperplanes from the penultimate layer. 

This is where we will need to use one more key fact about the images of activation regions under ReLU neural network maps, which, when combined with Lemma \ref{lem:Nonemptypreimage} above, will allow us to conclude that certain points of intersection between the images of hyperplanes in the penultimate layer and hyperplanes associated to the final layer can be pulled back to obtain points of intersection between the corresponding bent hyperplanes in the domain:

\begin{proposition} \label{prop:AffIsomorph} Let $(n_0, \ldots, n_d)$ be a ReLU neural network architecture with $(n_0 = k) \leq n_{\ell}$ for all $\ell$. For almost all parameters $\theta \in \Omega$, if $C \subseteq \mathbb{R}^{k}$ is an activation region of $F_\theta$ with activation pattern $s_C = (s^1_C, \ldots, s^d_C)$, then $F_\theta(C)$ is a polyhedral set of dimension $\min\{\mbox{dim}(s^1_C), \ldots, \mbox{dim}(s^d_C), k\}$.
\end{proposition}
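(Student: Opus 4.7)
The plan is to show that $F_\theta|_C$ is an affine map, so that its image $F_\theta(C)$ is the affine image of a convex polyhedron (and hence polyhedral), and then to compute the dimension of this image by a layerwise induction on the rank of the effective linear part.

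On $C$, each layer map $F^\ell$ agrees with the global affine map $\phi^\ell(y) := D^\ell(W^\ell y + b^\ell)$, where $D^\ell$ denotes the diagonal projection onto the index set $A^{+1}_\ell$ of coordinates where $(s_C^\ell)_i = +1$. This holds because on $C$ the pre-activation at neuron $i$ of layer $\ell$ has the sign indicated by $(s_C^\ell)_i$, so ReLU acts as the identity on the $+1$ coordinates and as the zero map on the $-1$ and $0$ coordinates. Hence $F_\theta|_C = (\phi^d \circ \cdots \circ \phi^1)|_C$ is the restriction of a global affine map, and since $C$ is a convex polyhedron cut out by finitely many linear inequalities and equalities in $\mathbb{R}^{n_0}$, the image $F_\theta(C)$ is polyhedral.

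For the dimension, I would introduce the linear subspace $V_\ell := \{y \in \mathbb{R}^{n_\ell} : y_i = 0 \text{ for } i \notin A^{+1}_\ell\}$, which has dimension $\dim(s_C^\ell)$ and contains $F_{(\ell)}(C)$. The restriction of $\phi^\ell$ to a map $V_{\ell-1} \to V_\ell$ has linear part given by the submatrix $\tilde{W}^\ell$ of $W^\ell$ with rows in $A^{+1}_\ell$ and columns in $A^{+1}_{\ell-1}$ (using all columns at $\ell=1$ and all rows at $\ell=d$). I would then prove by induction on $\ell$ that for a full-measure subset of $\theta \in \Omega$, $\dim F_{(\ell)}(C) = \min\{k, \dim(s_C^1), \ldots, \dim(s_C^\ell)\}$, invoking the standard fact that for generic matrices of appropriate shapes the composition $\tilde{W}^\ell \tilde{W}^{\ell-1} \cdots \tilde{W}^1$ attains its generic rank $\min\{k, \dim(s_C^1), \ldots, \dim(s_C^\ell)\}$. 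Taking $\ell = d$ yields the claim. To confirm the rank conditions hold on a full-measure set, note that the fixed architecture admits only finitely many activation patterns $s$, and for each $s$ the rank-maximality of the corresponding submatrix products is the non-vanishing of a finite list of polynomial minors in $\theta$; the union over all patterns of the corresponding bad sets is a finite union of proper algebraic subvarieties of $\Omega$, hence has Lebesgue measure zero. The hypothesis $n_0 = k \leq n_\ell$ ensures these genericity conditions are non-vacuous.

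The main obstacle, in my view, is verifying that the rank of the composition applied to the specific polyhedron $C$ really attains the generic value, even when $C$ is lower-dimensional than $k$ because $s_C$ contains zero entries. The resolution is that each zero entry $(s_C^\ell)_i = 0$ contributes a linear constraint on $F_{(\ell-1)}(x)$ that is already reflected in the codimension drop $\dim V_\ell = \dim(s_C^\ell) \le \dim V_{\ell-1}$ at the corresponding layer; so for generic $\theta$, the restriction of $\phi^\ell$ to the affine hull of $F_{(\ell-1)}(C)$ still has rank $\min\{\dim F_{(\ell-1)}(C), \dim(s_C^\ell)\}$ dictated by the subspace dimensions, and the induction propagates cleanly.
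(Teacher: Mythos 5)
Your strategy is essentially correct for the case the paper actually needs (activation regions whose pattern has no zero entries, so that $C$ is open and $k$-dimensional), and it takes a genuinely more direct route than the paper does. The paper never writes out a standalone proof of this proposition; its closest analogue is Proposition \ref{prop:unbddimage}, which handles the special case $s^\ell = (+1,\ldots,+1,-1,\ldots,-1)$ by a layerwise induction on the recession cone of an unbounded cell: one picks unbounded rays whose slopes $v_1,\ldots,v_k$ span, forms the matrix $WV$, and uses genericity and supertransversality to argue that the initial $k\times k$ minor survives the ReLU and has rank $k$. You replace that bookkeeping with the observation that $F_\theta|_C$ is the restriction of a global affine map whose linear part is the product of the submatrices $\tilde{W}^\ell$, whose generic rank is the minimum of the intermediate dimensions; combined with the finitely-many-patterns/union-of-proper-subvarieties argument, this yields the dimension count directly and is cleaner. (The paper needs the cone formulation elsewhere because it must also know the image is \emph{unbounded}; the proposition as stated asks only for the dimension, so you lose nothing here. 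One small mismatch: carried out consistently, your induction at $\ell=d$ uses all rows of $W^d$ and so produces $\min\{k,\dim(s^1_C),\ldots,\dim(s^{d-1}_C)\}$ rather than the stated formula containing $\dim(s^d_C)$ --- but since there is no ReLU at the output, your version is the defensible one.)

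The genuine gap is in your final paragraph. When $s_C$ contains zero entries, the proposed resolution --- that the codimension drop is ``already reflected'' in $\dim V_\ell = \dim(s^\ell_C)$ --- is false. A neuron with $(s^\ell_C)_i = 0$ is zeroed by ReLU exactly as a $-1$ neuron is, so it does not reduce the count of $+1$'s in layer $\ell$; what it does is impose the extra affine constraint $z_{(\ell),i}=0$ on the domain, cutting $C$ and hence $F_{(\ell-1)}(C)$ down by one dimension \emph{inside} $V_{\ell-1}$ while leaving every $\dim(s^j_C)$ unchanged. Concretely, for architecture $(2,3,2)$ and a region with $s^1_C=(+1,+1,0)$, the region $C$ is a $1$-cell, so $\dim F_\theta(C)\le 1$, while your formula (and the proposition's) predicts $2$; your induction already fails at the first layer containing a zero. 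To be fair, this is a defect of the statement rather than of your core argument: the paper itself, in the proof of Lemma \ref{lem:intunbddcell}, uses the corrected quantity $\min(\dim C, \dim(s_C))$, and every application of the proposition is to cells with $\dim(s^\ell_C)=k$ for all $\ell$. Restricting to $\pm$-activation regions (or replacing $k$ by $\dim(C)$ in the minimum) repairs the statement, and your argument then goes through.
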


Here, $\mbox{dim}(s^\ell_C)$ refers to the number of $+1$'s in the binary tuple that forms the activation pattern $s^\ell_C$ associated to the output neurons of the $\ell$th layer map, $F^\ell$, on the activation region $C$ (cf.~Section \ref{sec:GeomComb}  in the Appendix). The above proposition has the following useful corollary, which allows us to conclude that any points of intersection between (bent) hyperplanes in the penultimate layer can be pulled back faithfully to points of intersection between the corresponding bent hyperplanes in the domain:

\begin{corollary} Let $(n_0, \ldots, n_d)$ be a ReLU network architecture and $\theta \in \Omega$ as above. If $C$ is an activation region of $F_\theta$ with activation pattern satisfying $\mbox{dim}(s^\ell_C) = k$ for all $\ell$, then $F_\theta$ restricted to the interior of $C$ is a homeomorphism onto its image. 
\end{corollary}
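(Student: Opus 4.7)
The plan is to combine Proposition \ref{prop:AffIsomorph} with the standard fact that an affine map $\mathbb{R}^k \to \mathbb{R}^{n_d}$ whose linear part has rank $k$ is a topological embedding.

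The key observation is that on the interior of $C$ the activation pattern is constant and, for the conclusion to be non-vacuous, nowhere zero: a zero entry $(s^\ell_C)_i = 0$ would force $C \subseteq \hat{H}^\ell_i$, and hence $\operatorname{int}(C) = \emptyset$ in $\mathbb{R}^k$. Under this assumption each layer map collapses on $C$ to a genuine affine piece. Letting $D^\ell$ denote the diagonal projector selecting the coordinates $i$ with $(s^\ell_C)_i = +1$, one checks by induction on $\ell$ that $F_\theta|_C(x) = Mx + v$, where
\[
M \;=\; W^d\, D^{d-1}\, W^{d-1}\, D^{d-2} \cdots D^1\, W^1.
\]

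Now Proposition \ref{prop:AffIsomorph} yields $\dim F_\theta(C) = \min\{\dim(s^1_C), \ldots, \dim(s^d_C), k\} = k$ by hypothesis. Since $F_\theta(C)$ is the image of the full-dimensional set $C \subseteq \mathbb{R}^k$ under the affine map $x \mapsto Mx + v$, and since the dimension of the image of any full-dimensional subset of $\mathbb{R}^k$ under an affine map equals the rank of its linear part, we conclude $\rk(M) = k$. Hence $M : \mathbb{R}^k \to \mathbb{R}^{n_d}$ has full column rank; in particular it admits a continuous linear left inverse (e.g.\ its Moore--Penrose pseudoinverse $M^+$), so $y \mapsto M^+(y - v)$ is a continuous left inverse for $x \mapsto Mx + v$ defined on all of $\mathbb{R}^{n_d}$. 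Restricting this global affine embedding to the open set $\operatorname{int}(C) \subseteq \mathbb{R}^k$ produces the desired homeomorphism from $\operatorname{int}(C)$ onto $F_\theta(\operatorname{int}(C))$.

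There is no substantive obstacle here — Proposition \ref{prop:AffIsomorph} does the heavy lifting, and what remains is purely linear-algebraic. The only minor point requiring care is the dimension-to-rank translation, which uses that $\operatorname{int}(C)$ is nonempty and open in $\mathbb{R}^k$ (so that the image of the affine restriction really does span a subspace of dimension $\rk(M)$, rather than something smaller).
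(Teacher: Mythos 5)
Your proof is correct and follows essentially the same route the paper takes (cf.\ the appendix version, Corollary \ref{cor:AffIsomorph}, whose statement already encodes the argument): on $\operatorname{int}(C)$ the map is affine-linear, Proposition \ref{prop:AffIsomorph} forces its image to have dimension $k$, hence the linear part has rank $k$ and the restriction is a topological embedding. Your explicit left inverse via the pseudoinverse and the remark about zero entries in the activation pattern are just slightly more detailed versions of what the paper leaves implicit.
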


The linear regions assumption is satisfied for this construction because sufficiently many neurons from previous layers are active, which implies that we can distinguish activation regions (cf. Lemma 8 of \citet{HaninRolnick}).  The stability of this construction on a closed subset of the domain containing the pairwise intersections (Proposition \ref{prop:ternactivestable}) follows from genericity and supertransversality, which allows us to find a positive measure open neighborhood of $\theta$ that also admits no hidden symmetries.

\qed

See Figure \ref{fig:2533} for an illustration of what a bent hyperplane arrangement produced by our construction looks like for architecture $(n_0, \ldots, n_3) = (2,5,3,3)$.

\begin{figure}
\begin{center}
	\includegraphics[width=3in]{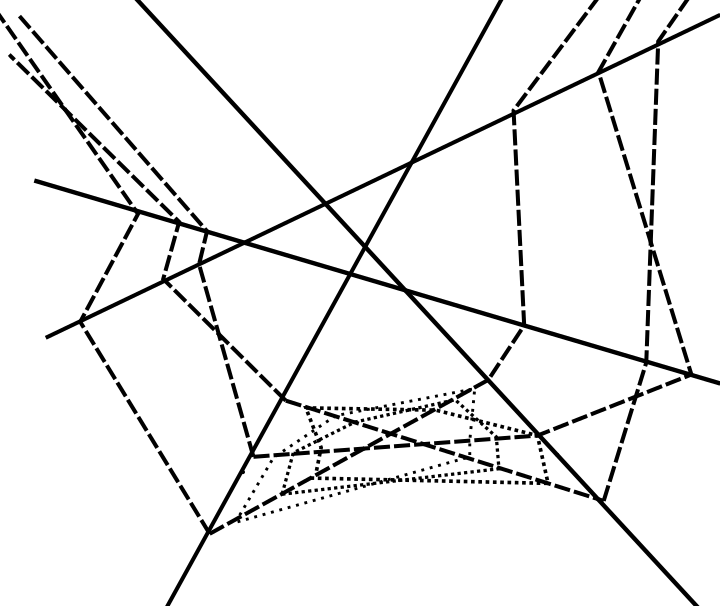}
	\caption{An illustration of a bent hyperplane arrangement in the domain $\mathbb{R}^{n_0}$ satisfying TPIC produced by our construction for a ReLU network of architecture (2,5,3,3). The bent hyperplanes for layers 2 (resp., 3) are represented with dark (resp., light) dashed lines.}
\label{fig:2533}
 \end{center}
\end{figure}

\section{Mechanisms by Which Hidden Symmetries Arise} \label{ss:mechanisms}

For any feedforward ReLU architecture $(n_0, \ldots, n_d)$ with at least one hidden layer (i.e. $d \geq 2$), the set of parameters admitting hidden symmetries has positive measure (\citet{GLMW}).
There are numerous mechanisms by which hidden symmetries can arise. We give a partial list below. 

\begin{enumerate}

\item \label{mech:stablyUnactivated} \emph{A stably unactivated neuron (Def.~\ref{def:stablyunactivated})}. 
The positive half-space in $\mathbb{R}^{n_\ell}$ associated to a neuron of the layer map $F^\ell$, for 
 $1 < \ell < d,$ could have empty intersection with $F_{(\ell-1)}(\mathbb{R}^{n_0})$, the image of $\mathbb{R}^{n_0}$ under the earlier layer maps. If this intersection remains empty under small perturbations of the parameters defining this neuron (while keeping the other neurons fixed), such perturbations will not alter the function. 

 In particular, the image of $\mathbb{R}^{n_0}$ in any hidden layer $\mathbb{R}^{n_{\ell}}$ is contained in the closed positive orthant, 
so a neuron whose associated half-space has (stably) empty intersection with the positive orthant results in a hidden symmetry.  See  Theorem 7.3 and Lemma 7.4 of \citet{GLMW}) for a probabilistic lower bound on this phenomenon, and Figure \ref{fig:StabUnactivated} for an illustration.  

\item \label{mech:nonCoActive} \emph{A pair of neurons in consecutive layers that are never co-active.} As noted in \citet{RolnickKording}, it is possible for two neurons in adjacent layers of a network to never be simultaneously active (cf. Definition \ref{def:noncoactive}). In such a case, the weight between these neurons is generically able to be perturbed without changing the function computed by the network. This is because either (a) the upstream neuron is inactive, in which case the downstream neuron receives zero input from it regardless of the weight between them, or (b) the downstream neuron is inactive, in which case it outputs zero regardless of the input received from the upstream neuron. See Figure \ref{fig:NeverCoActive}.

\item \label{mech:collapse} \emph{A ReLU of a later layer may collapse complexity constructed by earlier layers, negating the impact of the parameters from those layers}. One way this could happen is if multiple linear regions of $F^{i-1} \circ \ldots \circ F^1(\mathbb{R}^{n_0})$, $1 < i < d$ are collapsed by one or more ReLUs in layer $F^i$ to form a single linear region (violating LRA).  This collapsing could erase the effect of parameters from these earlier layers. See Figure \ref{fig:collapseComplexity}. 

\item \label{mech:subspace} The (relevant part of the) \emph{image in a hidden layer may be contained in a subspace of positive co-dimension.}  Suppose, for $1 < i < d$, the image of the domain in the $i$th hidden layer, $F^{i-1} \circ \ldots \circ F^1(\mathbb{R}^{n_0})$, is contained in an affine-linear subspace $A \subset \mathbb{R}^{n_i}$ of positive codimension.  Then for any neuron $N$ of the $(i+1)$th layer, denoting by $H$ the associated oriented and co-normed hyperplane in $\mathbb{R}^{n_i}$, there is a $1$-parameter family of oriented, co-normed hyperplanes $\{H_t\}$ obtained by rotating $H$ around its intersection with $A$ that all give rise to the same map (Lemma \ref{l:rotate}). See Figure \ref{f:1parameterFreeCorrected}.   

Proposition \ref{prop:containedInSubspace} says that, given a neuron map of the $(k+1)$th layer,  if the part of $$\textrm{Im}_{(k)} \coloneqq 
F^k \circ \ldots F^1(\mathbb{R}^{n_0})$$
where that neuron is nonnegative is contained in an affine-linear subspace of $\mathbb{R}^{n_k}$ of positive codimension, then the hyperplane associated to that is not uniquely determined.
     
\begin{proposition} \label{prop:containedInSubspace}
Let $\eta:\mathbb{R}^{n_k} \to \mathbb{R}^{1}$ be a neuron given by $\eta = \sigma \circ A$ for some fixed affine-linear map $A:\vec{x} \mapsto \vec{x}\cdot n_H - b_H$.     Suppose  
\begin{enumerate}
\item \label{i:lowdiml} there exists an affine-linear hyperplane $S \subset \mathbb{R}^{n_k}$  such that 
$\textrm{Im}_{(k)} \cap \{x \mid \eta(x) \geq 0\} \subseteq S,$
\item the hyperplanes $S$ and $H$ are in general position 
\item  \label{i:nonparallel} no unbounded ray contained in $\textrm{Im}_{(k)}$ has a slope that is orthogonal to $\vec{n}_H \neq 0$.

\end{enumerate}
Then there is a $1$-parameter family of (distinct) hyperplanes $\{H_t\}_t$ in $\mathbb{R}^{n_k}$ such that
$$\eta \circ F^k \circ \ldots F^1 = f_{H_t} \circ F^k \circ \ldots F^1$$ where $f_{H_t}: \mathbb{R}^{n_k} \to \mathbb{R}$ is a neuron associated to the hyperplane $H_t$ (equipped with some suitable choice of oriented conorm). 
\end{proposition}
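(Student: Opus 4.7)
The plan is to realize the one-parameter family as a pencil of hyperplanes through $H \cap S$, parametrized algebraically. Let $L_S(\vec{x}) := \vec{x} \cdot \vec{n}_S - b_S$ be an affine-linear function vanishing exactly on $S$, and set
\[A_t(\vec{x}) := A(\vec{x}) + t\, L_S(\vec{x}), \qquad H_t := A_t^{-1}(0), \qquad f_{H_t} := \sigma \circ A_t.\]
Since $L_S$ vanishes on $S$, $A_t$ agrees with $A$ there, so $H_t \cap S = H \cap S$ for every $t$. By hypothesis (ii), $\vec{n}_H$ and $\vec{n}_S$ are linearly independent, hence the $H_t$ are pairwise distinct as $t$ varies over an interval around $0$. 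The remaining goal is to verify that $\sigma \circ A_t = \sigma \circ A$ on $\textrm{Im}_{(k)}$ for sufficiently small $|t|$.

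On $\textrm{Im}_{(k)} \cap \{A \geq 0\}$, which is contained in $S$ by hypothesis (i), we have $A_t = A$, so both sides coincide. On $\textrm{Im}_{(k)} \cap \{A < 0\}$, both sides vanish provided $A_t \leq 0$. The entire problem thus reduces to proving that $A_t \leq 0$ on $K := \textrm{Im}_{(k)} \cap \{A \leq 0\}$ for all sufficiently small $|t|$.

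The key obstacle is that $K$ is generally unbounded, so compactness does not directly apply. Working separately on each polyhedral piece of $\textrm{Im}_{(k)}$, I would invoke the Minkowski--Weyl decomposition $K = P + C$, with $P$ a compact polytope and $C$ the recession cone of $K$. Hypothesis (iii) forces every nonzero $\vec{v}\in C$ to satisfy $\vec{v}\cdot \vec{n}_H < 0$; by compactness of the set of unit vectors in $C$, this upgrades to a uniform bound $\vec{v}\cdot \vec{n}_H \leq -\mu |\vec{v}|$ for some $\mu > 0$. Hence for $|t| < \mu/|\vec{n}_S|$, the perturbed form $\vec{v}\cdot(\vec{n}_H + t\vec{n}_S)$ remains strictly negative on nonzero recession directions, so $A_t$ is bounded above on $K$; since $K$ is pointed (again by (iii) applied to both $\pm\vec{v}$), its supremum is attained at a vertex.

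Finally, every vertex $\vec{v}$ of $K$ lies either on $\{A = 0\}$---in which case $\vec{v} \in S$ by hypothesis (i), so $L_S(\vec{v}) = 0$ and $A_t(\vec{v}) = 0$ identically in $t$---or strictly in $\{A < 0\}$, where continuity gives $A_t(\vec{v}) < 0$ for $|t|$ small. Taking the minimum tolerance over the finite set of vertices across the finitely many polyhedral pieces of $\textrm{Im}_{(k)}$ yields a uniform $\delta > 0$ with $A_t \leq 0$ on $K$ for all $|t| < \delta$, completing the proof.
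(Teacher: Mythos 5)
Your construction of the family is identical to the paper's: Lemma \ref{l:rotate} builds exactly $A_t = A + tL_S$ with $L_S(\vec{x}) = \vec{x}\cdot\vec{o} - \vec{s}_H\cdot\vec{o}$ for $\vec{o}$ orthogonal to $S$ and $\vec{s}_H \in H \cap S$, and your reduction to showing $A_t \le 0$ on $K = \textrm{Im}_{(k)} \cap \{A \le 0\}$ for small $|t|$ matches the paper's reduction to $H_t^+ \cap \textrm{Im}_{(k)} = H^+ \cap \textrm{Im}_{(k)}$. Where you diverge is in how that last claim is verified. The paper works cell-by-cell on the polyhedral complex $M = G_k(\mathcal{C}(G_k)) \cap H^-$, arguing that each cell either has a face in $H \cap S$ or lies at positive distance from $H$, handling bounded cells via that distance and unbounded cells via hypothesis (iii). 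You instead apply the Minkowski--Weyl decomposition to each polyhedral piece of $K$, combine (iii) with the containment $K \subseteq \{A \le 0\}$ to get uniform strict negativity of $\vec{v}\cdot\vec{n}_H$ (hence of $\vec{v}\cdot(\vec{n}_H + t\vec{n}_S)$ for small $t$) on nonzero recession directions, and reduce everything to a finite vertex check. Your route is arguably the more careful one: the paper's assertion that unbounded cells of $M$ meet $H_t^+$ in the empty set is too strong as stated, since an unbounded cell containing a face in $H \cap S$ retains that intersection for every $t$ (because $A_t = A$ on $S$); your argument absorbs such cells correctly by observing that a vertex on $\{A = 0\}$ lies in $S$ by hypothesis (i) and therefore satisfies $A_t = 0$ identically in $t$. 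Both proofs use (iii) for the same essential purpose --- ruling out recession directions along which $A$ is constant, so that the perturbation cannot leak into the positive half-space at infinity --- but your vertex-and-recession-cone bookkeeping makes the choice of $\epsilon$ explicit where the paper leaves it implicit.
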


The proof of Proposition \ref{prop:containedInSubspace} is in \ref{ss:mechanismsProofs}.  
\end{enumerate}

\begin{figure}[ht]
\labellist
\small\hair 2pt
\pinlabel  $H^\ell_1$ at 80 200
\pinlabel $F_{(\ell -1)}(\mathbb{R}^{n_0})$  at 260 145
\endlabellist
\begin{center}
\includegraphics[width=2.5in]
{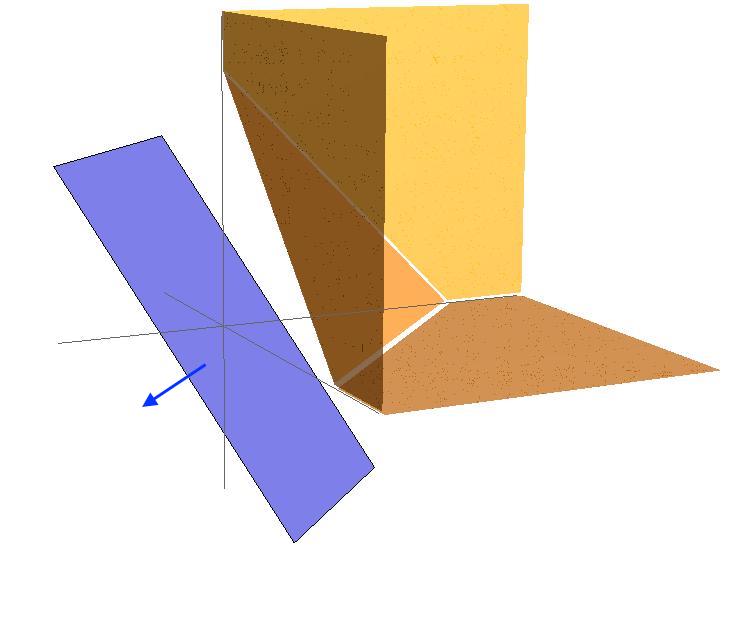}
\caption{Illustration of mechanism (i) through which hidden symmetries arise. In the figure above we see $\mathbb{R}^{n_{\ell-1}}$, the domain of the hidden layer map, $F^\ell$. The image (represented in shades of orange) of $\mathbb{R}^{n_0}$ under the composition, $F_{(\ell-1)}$, of all previous layer maps is always contained in the non-negative orthant. Since the positive half-space of $H_1^\ell$ misses the non-negative orthant, any sufficiently small perturbation of $H_1^{\ell}$ will have no impact on the overall function.}
\label{fig:StabUnactivated}
 \end{center}
\end{figure}

\begin{figure}[ht]
\labellist
\small\hair 2pt
\pinlabel  $\hat{H}^\ell_i$ at 230 450 
\pinlabel $\hat{H}_j^{\ell -1}$  at 100 570
\endlabellist
\begin{center}
\includegraphics[width=2in]{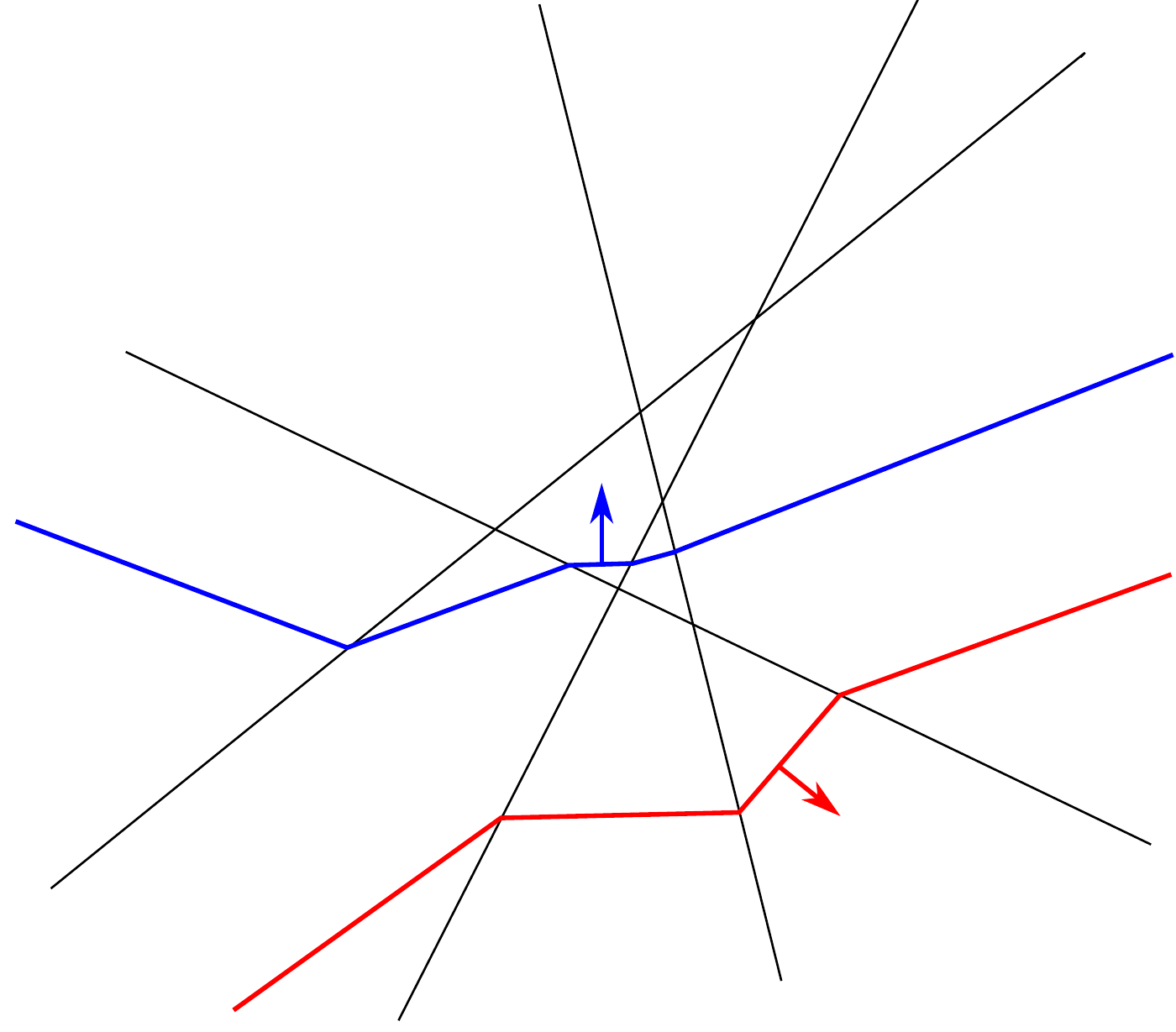}
\caption{Illustration of mechanism (ii) through which hidden symmetries arise. In the figure above we see two non-intersecting bent hyperplanes from adjacent layers, pulled back to $\mathbb{R}^{n_0}$, the input layer. Because their intersection is empty, they admit a co-orientation (pictured) for which they are never co-active. It follows that the overall function has no dependence on the weight $W_{ij}^\ell$ between them.}
\label{fig:NeverCoActive}
 \end{center}
\end{figure}

\begin{figure}[ht]
\labellist
\small\hair 2pt
\pinlabel  $F_{(\ell -1)}(\mathbb{R}^{n_0})$ at 80 100
\pinlabel $H_1^{\ell}$  at 200 220
\endlabellist
\begin{center}
\includegraphics[width=2in]
{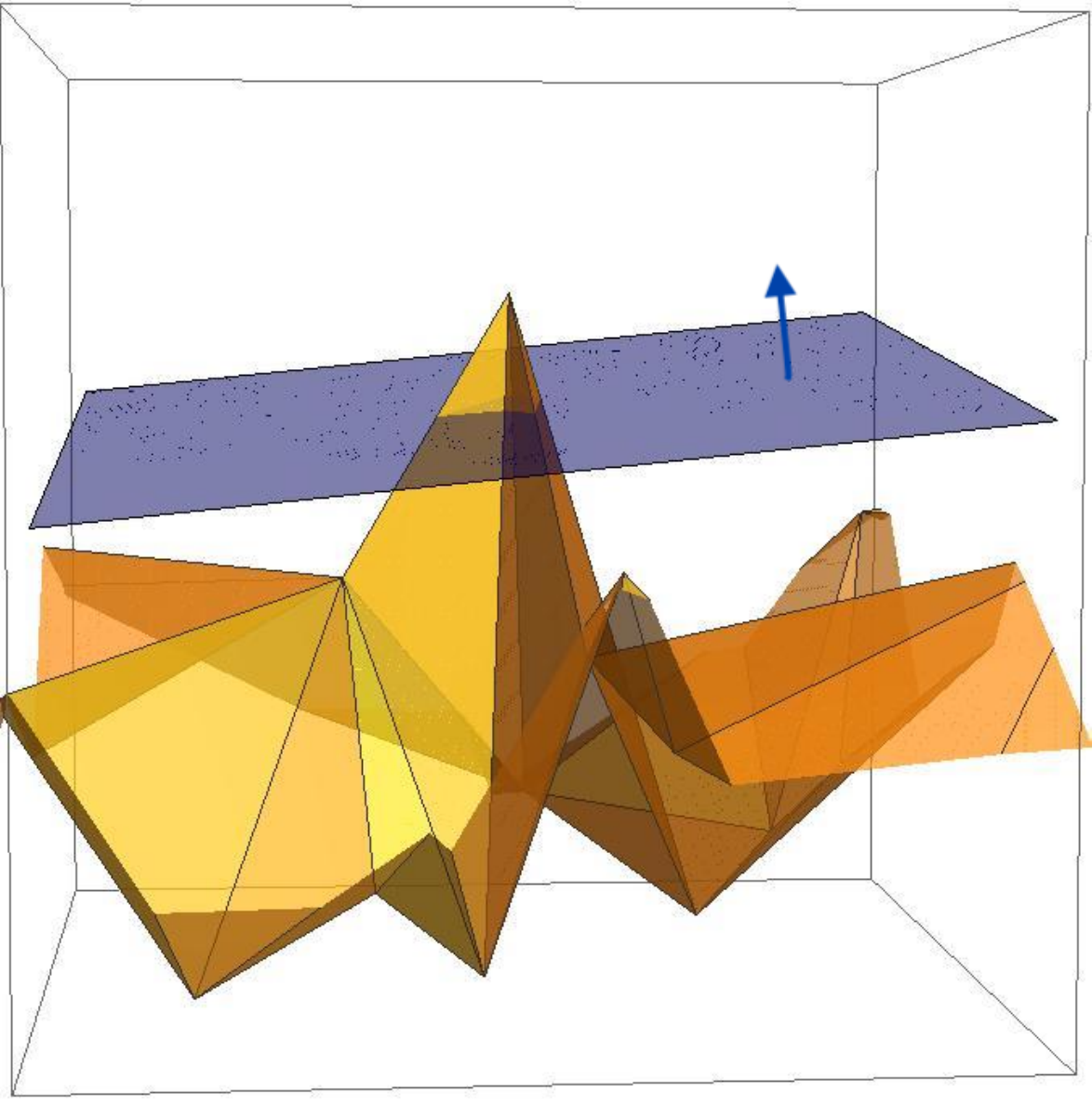}
\caption{Illustration of mechanism \eqref{mech:collapse} through which hidden symmetries arise.  The orange surface is the image of the domain inside $\mathbb{R}^{n_{\ell -1}}$. The blue hyperplane $H_1^{\ell}$, which is oriented upwards, is associated to a neuron of $F^{\ell}$; everything below the blue plane is zeroed out by the ReLU of this neuron. This neuron is not stably unactivated, because the orange surface has nonempty intersection with the positive half space above the blue plane. However, perturbing parameters that are only expressed in the part of the orange surface below the blue plane does not change the output of the neuron.} 
\label{fig:collapseComplexity}
 \end{center}
\end{figure}

 \begin{figure}[ht]
 \labellist
\small\hair 2pt
\pinlabel  $S$ at 130 130
\pinlabel $H$  at 290 360
\pinlabel $H_t$  at 190 250
\endlabellist
\centering
\includegraphics[width=4.5cm]{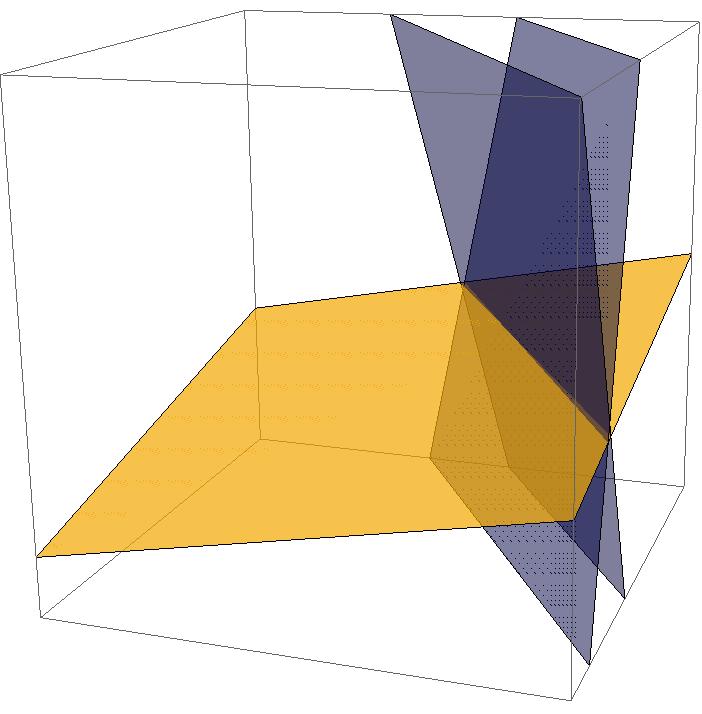}
\caption{Illustration of mechanism (iv) through which hidden symmetries arise. If the image of $\mathbb{R}^{n_0}$ in a hidden layer $\mathbb{R}^{n_\ell}$ is contained in the affine-linear subspace $S$, replacing the neuron $N$ defined by hyperplane $H$ with the neuron $N_t$ defined by hyperplane $H_t$ (with a suitably scaled co-norm) yields the same function. 
}
\label{f:1parameterFreeCorrected}
\end{figure}

\begin{figure*}[ht]
\begin{center}
\includegraphics[width=0.75\linewidth]{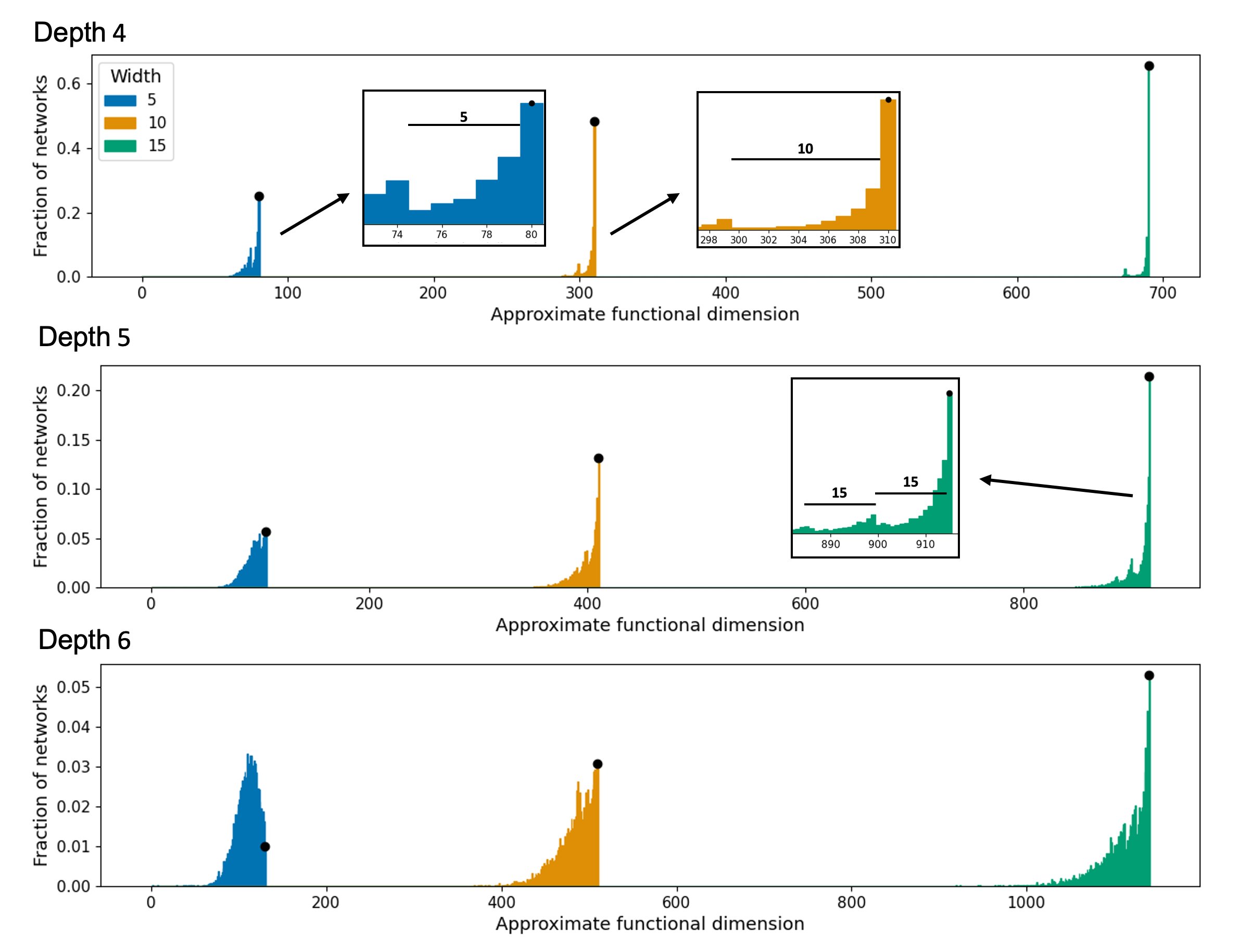}
\caption{For each of various network architectures, we approximate the distribution of functional dimensions as the parameter $\theta\in \Omega$ varies. Different plots show networks of different depths, while different colors in a plot correspond to varying the input dimension and width $n_0=n_1=\cdots = n_{d-1}$. Black dots show the fraction of networks with full functional dimension (no hidden symmetries). We observe that the proportion of networks with full functional dimension increases with width and decreases with depth. Insets in the figure zoom in on certain multimodal distributions, showing that modes are spaced apart by approximately the width of the network.}
\label{fig:main}
\end{center}
\end{figure*}

\color{black}

\section{Experiments} \label{sec:experiments}

We conducted an empirical investigation of hidden symmetries at various parameter settings. It is computationally impractical to directly rule out the existence of hidden symmetries at a parameter using, e.g., the geometry of the bent hyperplane arrangement. Accordingly, in our experiments we rely on a relationship between the symmetries of a parameter and its {\em functional dimension} to probe the prevalence of hidden symmetries for a variety of architectures. 

Recall that the functional dimension $\fdim(\theta)$ of parameter $\theta$ is, informally, the number of linearly independent ways the function $F_{\theta}$ can be altered by perturbing the parameter $\theta$. Following the formal Definition \ref{def:functionaldimension}, we can approximate $\fdim(\theta)$ by evaluating the function $F_\theta$ at a finite subset $Z\subset \R^{n_0}$ of points in input space, stacking the outputs into a single vector of dimension $|Z|\cdot n_d$, and calculating the rank of the Jacobian of this vector with respect to $\theta$. The functional dimension $\fdim(\theta)$ is the supremum of this rank over all sets $Z$. Intuitively, this is because we are evaluating the number of coordinates of $\theta$ that independently affect the value of the function $F_\theta$ at some point in its domain (recognizing that some weights and biases may not have an effect on $F_\theta$ except on a limited subset of input space).

In our experiments, we consider networks with $n_d=1$, and to approximate the functional dimension, we evaluate the set of gradient vectors $\{\nabla_\theta F_\theta(z)\}_{z\in Z}$ over a finite subset of $m$ points $Z=\{z_1,\ldots,z_m\}\subset \R^{n_0}$ in input space (we use points sampled i.i.d.~from the zero-centered unit normal). Then, for $m$ sufficiently large, we have:
\begin{equation}\fdim(\theta) \approx \rk\left(\left[\begin{array}{c} \nabla_\theta F_\theta(z_1) \\ \vdots \\ \nabla_\theta F_\theta(z_m)\end{array}\right]\right).\label{eq:rank}\end{equation}

We initialize networks with weights drawn i.i.d.~from normal distributions with variance $2/\text{fan-in}$, according to standard practice for ReLU networks \citep{he2015delving, hanin2018start}, and biases drawn i.i.d.~from a normal distribution with very small variance (arbitrarily set to $0.01$). To improve the quality of the approximation in \eqref{eq:rank}, we use $m$ sample points for $m$ equal to 100 times the maximum possible value for $\fdim(\theta)$, according to the upper bound given in \citet{GLMW} (in Appendix \ref{app:experiments}, we show that our experimental conclusions are not dependent on the choice of $m$). Note that our approach yields approximations of $\fdim(\theta)$ which are also necessarily lower bounds to it; in particular, any computed value that attains the theoretical upper bound on $\fdim(\theta)$ is guaranteed to be accurate and thus is consistent with the parameter admitting no hidden symmetries.

In Figure \ref{fig:main}, we plot the distribution of approximate functional dimensions for networks with depth $d=4,5,6$ and with $n_0=n_1=\cdots=n_{d-1}$ equal to $5,10,15$. For each architecture, we consider $5000$ different choices of $\theta\in \Omega$, computing the fraction that lead to networks with a given approximated functional dimension. Thus, we find that for depth 4, the widths $5,10,15$ result in respectively 25\%, 48\%, 66\% percent of networks having the maximum possible functional dimension (marked with black dots in the Figure), while for depth 6, the widths $5,10,15$ result in respectively 1\%, 3\%, 5\% percent of networks having the maximum possible functional dimension.

We observe that increasing the depth $d$ (while keeping the width fixed) results in a decreased probability of full functional dimension, and thus the likely absence of hidden symmetries (cf.~Lemma \ref{lem:AchievesUpperBound}). By contrast, increased width (with $n_0=n_1=\cdots=n_{d-1}$, i.e. varying the input dimension and width together, while  keeping depth fixed) is associated with an increasing probability of full functional dimension. 

We offer the following explanations of possible drivers of these observed phenomena. Increasing the depth increases the variance and higher moments associated with properties such as the activation of individual neurons \citep{hanin2018start, hanin2018neural}, thereby increasing the likelihood that functional dimension is decreased via mechanisms \eqref{mech:stablyUnactivated} or \eqref{mech:collapse}. By contrast, increasing the input dimension and width increases the chance that the bent hyperplanes associated with two neighboring neurons will intersect. (Intuitively, this is because a hyperplane will intersect a bent hyperplane unless the latter ``curves away'' in all dimensions, which becomes exponentially unlikely as the dimension increases, in the same way that the probability a matrix is positive definite decays exponentially with dimension \citep{dauphin2014identifying}). However, further study is required, and it is worth noting that a counteracting factor as the width increases may be that the maximum functional dimension increases, so the support of the distribution also increases and the probability assigned to any individual functional dimension, including the maximum, is less than it might be for a distribution with smaller support.

 We also note that the distributions of approximate functional dimensions appear to approach smooth unimodal curves if the probability of full functional dimension is low (as in the Depth 6 plots), but are strongly multimodal when there is a high probability of full functional dimension.  In the inset panels of the figure, we show zoomed-in versions of the upper ends of certain distributions, detailing the multiple peaks in the distribution.  We note that for each such multimodal distribution, the peaks appear to be spaced by a value equal to the width of the network. We note that of the mechanisms we consider for hidden symmetries, mechanism \eqref{mech:stablyUnactivated} (a stably inactivated neuron) should reduce functional dimension by $2\times \text{width}$ (the number of incoming and outgoing weights of the neuron), while \eqref{mech:nonCoActive} (two neurons that are never co-active) reduces functional dimension by one (the weight between the neurons). Thus, neither of these mechanisms should apply in this case, and mechanisms \eqref{mech:collapse} or \eqref{mech:subspace} may apply; the phenomenon bears further investigation. 

In Figure \ref{fig:varying_width}, we show the results of a similar set of experiments, where the input dimension $n_0$ is instead kept fixed at 5 as the widths $n_1=n_2=\cdots=n_{d-1}$ of the hidden layers vary.
While we again observe in this case that the probability of full functional dimension decreases with depth, this probability slightly decreases with width, unlike the previous scenario (Figure \ref{fig:main}).

As in Figure \ref{fig:main}, we again note that when the distributions are multimodal, the gaps between the modes are spaced according to the width $n_1=\cdots=n_{d-1}$. 

\begin{figure*}[ht]
\begin{center}
\includegraphics[width=0.75\linewidth]{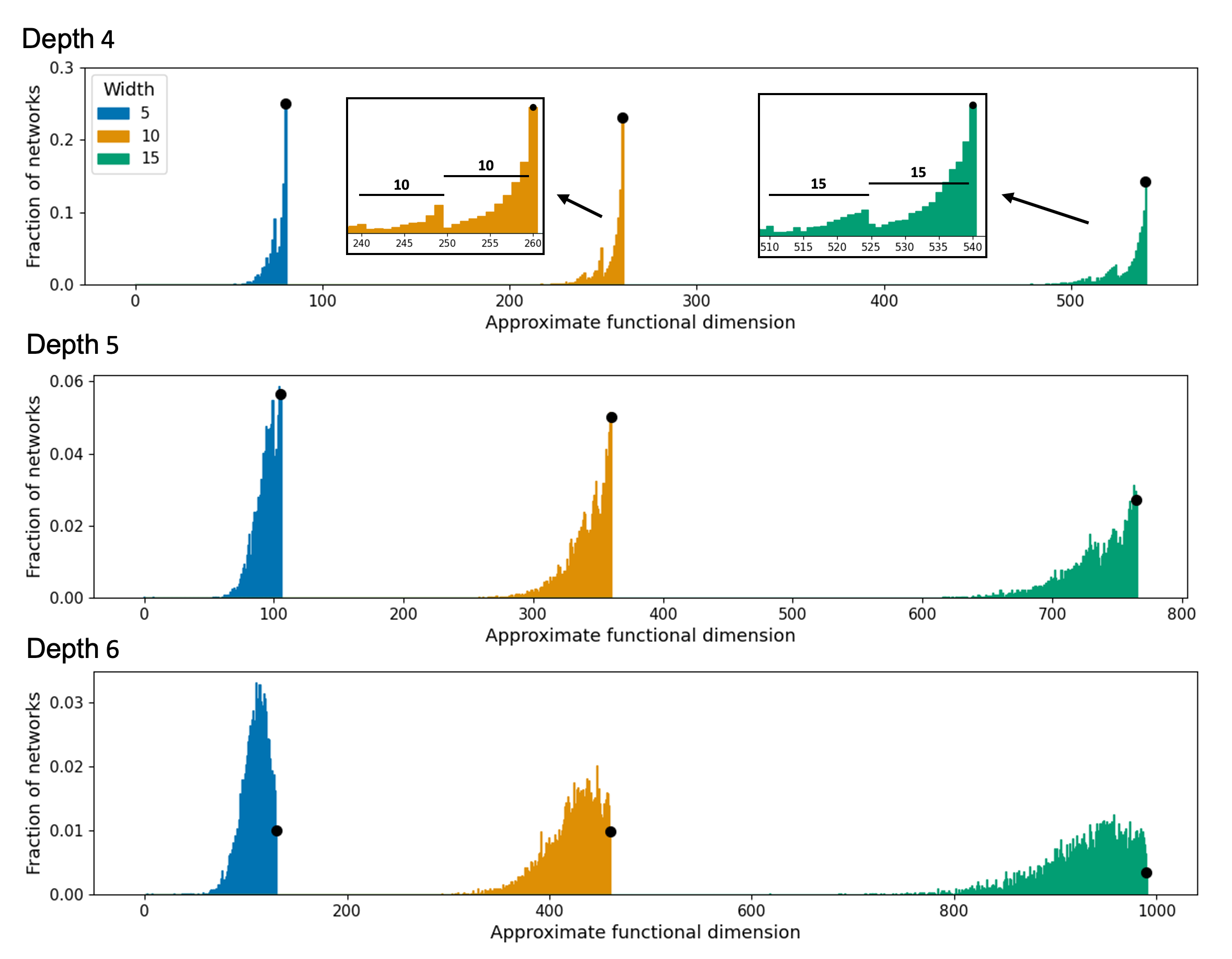}
\caption{This figure shows similar experiments as in Figure \ref{fig:main}, but with input dimension fixed at 5 instead of equal to the width of hidden layers. Here, we observe that the proportion of networks with full functional dimension decreases with depth as in Figure \ref{fig:main}, while slightly decreasing with width.}
\label{fig:varying_width}
\end{center}
\end{figure*}
\color{black}

\section{Conclusions and Further Questions} 

We have performed both a theoretical and empirical investigation of the following question: {\em How faithfully does the parameter space of a feedforward ReLU network architectures model its associated function class?} 

Our investigation centers on a relationship between well-established {\em symmetries} of parameter space (operations on parameters that leave the resulting function unchanged) 
and the {\em functional dimension} of a parameter (informally, the true dimension of the local search space for any gradient-based optimization algorithm). It was established in \citet{GLMW} that the functional dimension is inhomogeneous across parameter space, but the prevalence of this inhomogeneity and specifics about its dependence on architecture was previously unknown. 

In the theoretical component of this work, we significantly expand the collection of architectures containing parameters with no hidden symmetries beyond the restricted classes considered in \citet{PhuongLampert} and \citet{RolnickKording}. We also provide a partial list of geometric mechanisms  that give rise to positive-dimensional spaces of hidden symmetries.

 Our empirical investigation strongly suggests that the probability distribution on the functional dimension at initialization is both interesting and architecture-dependent. In particular, under standard assumptions on the probability distribution on the parameters, the expected value of the functional dimension appears to scale positively with width and negatively with depth. It also appears to be multimodal when the ratio of the width to the depth is high, with modes separated by integer multiples of the width. Further investigation of these effects 
 may help us understand which mechanisms dominate in producing hidden symmetries, at various depth vs.~width scales.

 In future work, we hope to investigate how functional dimension evolves during training, since parameters with lower functional dimension are associated to lower-complexity functions that are more likely to generalize well to unseen data. Since lower functional dimension corresponds to higher-dimensional spaces of local symmetries, low functional dimension should induce local flatness of the loss landscape. Comparing this conjecture to recent work suggesting that stochastic gradient descent favors flat minima of the loss landscape,\footnote{Critical points for which the Hessian of the loss has many eigenvalues close to $0$.} this could at least partially explain any implicit regularization behavior  of SGD for feedforward ReLU neural network architectures.

\section*{Acknowledgments}
J.E.G. ~acknowledges support from Simons Collaboration grant 635578 and NSF grant  DMS - 2133822.  K.L. ~acknowledges support from NSF grants DMS-2133822 and DMS-1901247.
D.R.~acknowledges support from the Canada CIFAR AI Chairs Program and an NSERC Discovery Grants.

\bibliography{dimensionbibliography}

\begin{thebibliography}{30}
\providecommand{\natexlab}[1]{#1}
\providecommand{\url}[1]{\texttt{#1}}
\expandafter\ifx\csname urlstyle\endcsname\relax
  \providecommand{\doi}[1]{doi: #1}\else
  \providecommand{\doi}{doi: \begingroup \urlstyle{rm}\Url}\fi

\bibitem[Ainsworth et~al.(2023)Ainsworth, Hayase, and
  Srinivasa]{ainsworth2022git}
Ainsworth, S.~K., Hayase, J., and Srinivasa, S.
\newblock Git {Re-Basin}: Merging models modulo permutation symmetries.
\newblock In \emph{International Conference on Learning Representations
  (ICLR)}, 2023.

\bibitem[Armenta et~al.(2023)Armenta, Judge, Painchaud, Skandarani, Lemaire,
  Gibeau~Sanchez, Spino, and Jodoin]{armenta2023neural}
Armenta, M., Judge, T., Painchaud, N., Skandarani, Y., Lemaire, C.,
  Gibeau~Sanchez, G., Spino, P., and Jodoin, P.-M.
\newblock Neural teleportation.
\newblock \emph{Mathematics}, 11\penalty0 (2):\penalty0 480, 2023.

\bibitem[Armenta \& Jodoin(2021)Armenta and Jodoin]{ArmentaJodoin}
Armenta, M.~A. and Jodoin, P.
\newblock The representation theory of neural networks.
\newblock \emph{Mathematics}, 9\penalty0 (24):\penalty0 3216, 2021.

\bibitem[Badrinarayanan et~al.(2015)Badrinarayanan, Mishra, and
  Cipolla]{badrinarayanan2015symmetry}
Badrinarayanan, V., Mishra, B., and Cipolla, R.
\newblock Symmetry-invariant optimization in deep networks.
\newblock \emph{Preprint arXiv:1511.01754}, 2015.

\bibitem[Brea et~al.(2019)Brea, Simsek, Illing, and Gerstner]{brea2019weight}
Brea, J., Simsek, B., Illing, B., and Gerstner, W.
\newblock Weight-space symmetry in deep networks gives rise to permutation
  saddles, connected by equal-loss valleys across the loss landscape.
\newblock \emph{Preprint arXiv:1907.02911}, 2019.

\bibitem[Bui Thi~Mai \& Lampert(2020)Bui Thi~Mai and Lampert]{PhuongLampert}
Bui Thi~Mai, P. and Lampert, C.
\newblock Functional vs. parametric equivalence of {R}e{LU} networks.
\newblock In \emph{International Conference on Learning Representations
  (ICLR)}, 2020.

\bibitem[Carlini et~al.(2020)Carlini, Jagielski, and
  Mironov]{carlini2020cryptanalytic}
Carlini, N., Jagielski, M., and Mironov, I.
\newblock Cryptanalytic extraction of neural network models.
\newblock In \emph{Advances in Cryptology (CRYPTO)}, 2020.

\bibitem[Dauphin et~al.(2014)Dauphin, Pascanu, Gulcehre, Cho, Ganguli, and
  Bengio]{dauphin2014identifying}
Dauphin, Y.~N., Pascanu, R., Gulcehre, C., Cho, K., Ganguli, S., and Bengio, Y.
\newblock Identifying and attacking the saddle point problem in
  high-dimensional non-convex optimization.
\newblock In \emph{Advances in Neural Information Processing Systems
  (NeurIPS)}, 2014.

\bibitem[Entezari et~al.(2022)Entezari, Sedghi, Saukh, and
  Neyshabur]{entezari2021role}
Entezari, R., Sedghi, H., Saukh, O., and Neyshabur, B.
\newblock The role of permutation invariance in linear mode connectivity of
  neural networks.
\newblock In \emph{International Conference on Learning Representations
  (ICLR)}, 2022.

\bibitem[Ganev \& Walters(2022)Ganev and Walters]{GanevWalters}
Ganev, I. and Walters, R.
\newblock Quiver neural networks.
\newblock \emph{Preprint arXiv:2207.12773}, 2022.

\bibitem[Grigsby \& Lindsey(2022)Grigsby and Lindsey]{GrigsbyLindsey}
Grigsby, J.~E. and Lindsey, K.
\newblock On transversality of bent hyperplane arrangements and the topological
  expressiveness of {ReLU} neural networks.
\newblock \emph{SIAM Journal on Applied Algebra and Geometry}, 6\penalty0
  (2):\penalty0 216--242, 2022.

\bibitem[Grigsby et~al.(2022{\natexlab{a}})Grigsby, Lindsey, and Masden]{GLMas}
Grigsby, J.~E., Lindsey, K., and Masden, M.
\newblock Local and global topological complexity measures of {ReLU} neural
  network functions.
\newblock \emph{Preprint arXiv:2204.06062}, 2022{\natexlab{a}}.

\bibitem[Grigsby et~al.(2022{\natexlab{b}})Grigsby, Lindsey, Meyerhoff, and
  Wu]{GLMW}
Grigsby, J.~E., Lindsey, K., Meyerhoff, R., and Wu, C.
\newblock Functional dimension of feedforward {R}e{LU} neural networks.
\newblock \emph{Preprint arXiv:2209.04036}, 2022{\natexlab{b}}.

\bibitem[Guillemin \& Pollack(2010)Guillemin and Pollack]{GuilleminPollack}
Guillemin, V. and Pollack, A.
\newblock \emph{Differential Topology}.
\newblock AMS Chelsea Publishing. AMS Chelsea Publishing, 2010.
\newblock ISBN 9781470411350.

\bibitem[Hanin(2018)]{hanin2018neural}
Hanin, B.
\newblock Which neural net architectures give rise to exploding and vanishing
  gradients?
\newblock In \emph{Advances in Neural Information Processing Systems
  (NeurIPS)}, 2018.

\bibitem[Hanin \& Rolnick(2018)Hanin and Rolnick]{hanin2018start}
Hanin, B. and Rolnick, D.
\newblock How to start training: The effect of initialization and architecture.
\newblock In \emph{Advances in Neural Information Processing Systems
  (NeurIPS)}, 2018.

\bibitem[Hanin \& Rolnick(2019)Hanin and Rolnick]{HaninRolnick}
Hanin, B. and Rolnick, D.
\newblock Deep {R}e{LU} networks have surprisingly few activation patterns.
\newblock In \emph{Advances in Neural Information Processing Systems
  (NeurIPS)}, 2019.

\bibitem[He et~al.(2015)He, Zhang, Ren, and Sun]{he2015delving}
He, K., Zhang, X., Ren, S., and Sun, J.
\newblock Delving deep into rectifiers: Surpassing human-level performance on
  {ImageNet} classification.
\newblock In \emph{IEEE International Conference on Computer Vision (ICCV)},
  2015.

\bibitem[Huang et~al.(2020)Huang, Liu, Qin, Zhu, Liu, and
  Shao]{huang2020projection}
Huang, L., Liu, X., Qin, J., Zhu, F., Liu, L., and Shao, L.
\newblock Projection based weight normalization: Efficient method for
  optimization on oblique manifold in {DNN}s.
\newblock \emph{Pattern Recognition}, 105:\penalty0 107317, 2020.

\bibitem[Jordan et~al.(2023)Jordan, Sedghi, Saukh, Entezari, and
  Neyshabur]{jordan2022repair}
Jordan, K., Sedghi, H., Saukh, O., Entezari, R., and Neyshabur, B.
\newblock {REPAIR}: Renormalizing permuted activations for interpolation
  repair.
\newblock In \emph{International Conference on Learning Representations
  (ICLR)}, 2023.

\bibitem[Kuditipudi et~al.(2019)Kuditipudi, Wang, Lee, Zhang, Li, Hu, Ge, and
  Arora]{kuditipudi2019explaining}
Kuditipudi, R., Wang, X., Lee, H., Zhang, Y., Li, Z., Hu, W., Ge, R., and
  Arora, S.
\newblock Explaining landscape connectivity of low-cost solutions for
  multilayer nets.
\newblock \emph{Advances in Neural Information Processing Systems (NeurIPS)},
  2019.

\bibitem[Kunin et~al.(2021)Kunin, Sagastuy{-}Bre{\~{n}}a, Ganguli, Yamins, and
  Tanaka]{NeuralMechanics}
Kunin, D., Sagastuy{-}Bre{\~{n}}a, J., Ganguli, S., Yamins, D. L.~K., and
  Tanaka, H.
\newblock Neural mechanics: Symmetry and broken conservation laws in deep
  learning dynamics.
\newblock In \emph{International Conference on Learning Representations
  {ICLR}}, 2021.

\bibitem[Masden(2022)]{Masden}
Masden, M.
\newblock Algorithmic determination of the combinatorial structure of the
  linear regions of {R}e{LU} neural networks.
\newblock \emph{Preprint arXiv:2207.07696}, 2022.

\bibitem[Meng et~al.(2019)Meng, Zheng, Zhang, Chen, Ma, and
  Liu]{meng2019mathcal}
Meng, Q., Zheng, S., Zhang, H., Chen, W., Ma, Z.-M., and Liu, T.-Y.
\newblock $\mathcal g$-{SGD}: Optimizing {ReLU} neural networks in its
  positively scale-invariant space.
\newblock In \emph{International Conference on Learning Representations
  (ICLR)}, 2019.

\bibitem[Milli et~al.(2019)Milli, Schmidt, Dragan, and Hardt]{milli2019model}
Milli, S., Schmidt, L., Dragan, A.~D., and Hardt, M.
\newblock Model reconstruction from model explanations.
\newblock In \emph{Conference on Fairness, Accountability, and Transparency
  (FAccT)}, 2019.

\bibitem[Neyshabur et~al.(2015)Neyshabur, Salakhutdinov, and
  Srebro]{neyshabur2015path}
Neyshabur, B., Salakhutdinov, R.~R., and Srebro, N.
\newblock Path-{SGD}: Path-normalized optimization in deep neural networks.
\newblock \emph{Advances in Neural Information Processing Systems (NeurIPS)},
  2015.

\bibitem[Rolnick \& Kording(2020)Rolnick and Kording]{RolnickKording}
Rolnick, D. and Kording, K.~P.
\newblock Reverse-engineering deep {R}e{LU} networks.
\newblock In \emph{International Conference on Machine Learning (ICML)}, 2020.

\bibitem[Schrijver(1986)]{Schrijver}
Schrijver, A.
\newblock \emph{Theory of linear and integer programming}.
\newblock Wiley-Interscience Series in Discrete Mathematics. John Wiley \&
  Sons, Ltd., Chichester, 1986.

\bibitem[Stanley(2007)]{Stanley}
Stanley, R.~P.
\newblock An introduction to hyperplane arrangements.
\newblock In \emph{Geometric combinatorics}, volume~13 of \emph{IAS/Park City
  Math. Ser.}, pp.\  389--496. Amer. Math. Soc., Providence, RI, 2007.

\bibitem[Zhao et~al.(2022)Zhao, Dehmamy, Walters, and Yu]{SymmTeleport}
Zhao, B., Dehmamy, N., Walters, R., and Yu, R.
\newblock Symmetry teleportation for accelerated optimization.
\newblock In \emph{Advances in Neural Information Processing Systems
  (NeurIPS)}, 2022.

\end{thebibliography}
\bibliographystyle{icml2023}

\newpage
\appendix
\onecolumn

\section{Combinatorial/Geometric Background and Notation}
\label{sec:GeomComb}

Let \[\mathbb{O}^{\geq 0}:= \{(x_1, \ldots, x_n) \in \mathbb{R}^n \,\,|\,\, x_i \geq 0 \,\,\forall\,\, i\}\] denote the non-negative orthant in $\mathbb{R}^n$. Letting \[\mathbb{R}^n_k := \{(x_1, \ldots, x_n) \in \mathbb{R}^n\,\,|\,\, x_{k+1} = \ldots x_n = 0\}\] denote the initial coordinate $k$--plane, we will denote by \[\mathbb{O}^{\geq 0}_k := \mathbb{O}^{\geq 0} \cap \mathbb{R}^n_k\] the distinguished $k$-face of the non-negative orthant which is obtained by intersecting with the initial coordinate $k$--plane.

 Let $\theta \in \Omega$ be a parameter in the parameter space of a feedforward ReLU network architecture $(n_0, \ldots, n_d)$, and let $F_\theta$ be defined as in Equations \ref{eqn:ReLUFunction} and \ref{eqn:layermap}. 

 Letting $z^\ell_i = \pi_i (W^\ell x + b^\ell)$ denote the $i$th component of the pre-activation output of the $\ell$th layer map $F^\ell:\mathbb{R}^{n_{\ell-1}} \to \mathbb{R}^{n_\ell}$ of $F_\theta$, we denote its zero set by $H^\ell_i := (z^\ell_i)^{-1}\{0\} \subseteq \mathbb{R}^{n_{\ell - 1}}.$ Note that for almost all parameters, $H^{\ell}_i$ is an affine hyperplane. 
 
 Accordingly, we associate to each layer map $F^\ell$ the set \[\mathcal{A}^{\ell} = \{H^\ell_1, \ldots, H^\ell_{n_\ell}\} \subseteq \mathbb{R}^{n_{\ell-1}},\] which for almost all parameters is a hyperplane arrangement. In the course of the inductive proof of our main theorem, we will need notation for the preimages of the hyperplanes in the previous layer:  
\[\grave{\mathcal{A}}^{\ell} = \{\grave{H}^{\ell}_i\}_{i=1}^{n_\ell} := \left\{F_\ell^{-1}\left(H^{\ell}_i\right)\right\}_{i=1}^{n_\ell} \subseteq \mathbb{R}^{n_{\ell-2}}.\]
and in the domain (the reader easily checks that the latter are precisely the bent hyperplanes defined in Equation \ref{eqn:preactneuron})\color{black}:
\[\hat{\mathcal{A}}^\ell = \{\hat{H}^\ell_i\}_{i=1}^{n_\ell} := \left\{F_{(\ell)}^{-1}(H_i^\ell)\right\}_{i=1}^{n_\ell} \subseteq \mathbb{R}^{n_0}\]

 $\mathcal{A}^\ell$ is said to be \emph{generic} if for all subsets \[\{H^\ell_{i_1} , \ldots , H^\ell_{i_p}\} \subseteq \mathcal{A}^\ell,\] it is the case that $H^\ell_{i_1} \cap \ldots \cap H^\ell_{i_p}$ is an affine-linear subspace of $\mathbb{R}^{n_{\ell-1}}$ of dimension $n_{\ell-1} - p$, where a negative-dimensional intersection is understood to be empty. 
 
 A layer map $F^\ell$ is said to be \emph{generic} if $\mathcal{A}^\ell$ is generic.  A parameter $\theta$ or the corresponding network map $F_\theta$ is said to be generic if all of its layer maps are generic.

 It is well-established in the hyperplane arrangement literature (cf.~\citet{Stanley}) \color{black} that generic arrangements are full measure. It follows \citep{GrigsbyLindsey} that generic network maps are full measure in parameter space.

\subsection{Decompositions of Polyhedral Sets}
Recalling that a {\em polyhedral set} in $\mathbb{R}^{n_{\ell -1}}$ is an intersection of finitely many closed half spaces, a hyperplane arrangement in $\mathbb{R}^{n_{\ell-1}}$ induces a {\em polyhedral decomposition} of $\mathbb{R}^{n_{\ell -1}}$ into finitely many polyhedral sets. The face structure on these polyhedral sets gives the decomposition the structure of a polyhedral complex. By pulling back these polyhedral complexes to the domain, $\mathbb{R}^{n_0}$, and taking intersections, we inductively obtain the canonical polyhedral complex $\mathcal{C}(F_\theta)$ as follows. 
 
 For $\ell \in \{1,\ldots,d\}$, denote by $R^{\ell}$ the polyhedral complex on $\mathbb{R}^{n_{\ell-1}}$ induced by the hyperplane arrangement associated to the $\ell^{\textrm{th}}$ layer map, $F^{\ell}$.  Inductively define polyhedral complexes $\mathcal{C}\left(F_{(1)}\right),\ldots,\mathcal{C}\left(F_{(\ell)}\right)$ on $\mathbb{R}^{n_0}$ as follows: Set $\mathcal{C}\left(F_{(1)} = F^{1}\right):= R^{1}$ and for $i = 2,\ldots,m$, set
\begin{equation*}
\mathcal{C}(F_{(\ell)})
:= \left \{S \cap \left(F_{(\ell)}\right)^{-1}(Y) \mid S \in \mathcal{C}\left(F_{(\ell)}\right), Y \in R^{\ell} \right \}.
 \end{equation*}
Set $\mathcal{C}(F_\theta) := \mathcal{C}(F_\theta = F_{(d)})$.  See \citet{GrigsbyLindsey} and \citet{Masden} for more details.

It was proved in \citet{GrigsbyLindsey} that on a full measure subset of parameter space, the $n_0$-cells of the canonical polyhedral complex, $\mathcal{C}(F)$, are the closures of the activation regions, and the $(n_0-1)$-skeleton of $\mathcal{C}(F)$ is the bent hyperplane arrangement associated to $F_\theta$. 

We will also need the following terminology and results (cf. \citet{Schrijver}) pertaining to the structure of polyhedral sets. See \citet{GrigsbyLindsey} and \citet{GLMas} for additional details.

A polyhedral set $P \subset \mathbb{R}^n$ is said to be {\em  pointed} if it has a face of dimension $0$. The \emph{convex hull} of a set $S \subset \mathbb{R}^n$ is the intersection of all convex subsets of $\mathbb{R}^n$ that contain $S$. A \emph{cone} in $\mathbb{R}^n$ is a set $C$ such that if $x,y \in C$ and $\lambda, \mu \geq 0$, then $\lambda x + \mu y \in C$.  Let $P$ and $Q$ be polyhedral sets embedded in $\mathbb{R}^n$.  The \emph{Minkowski sum} of $P$ and $Q$ is $P + Q := \{p+q \mid p \in P, q \in P\}.$ The \emph{characteristic cone} of $P$, denoted \textrm{Cone}(P), is the maximal set $\{ y \mid x+y \in P \textrm{ for all }x \in P\}$ that also has the structure of a cone. Note that a polyhedral set is unbounded iff its characteristic cone is non-empty. Moreover, every polyhedral set $P$ has a decomposition as the Minkowski sum of a bounded polyhedral set (polytope) $P_B$ and $\mbox{Cone}(P)$. If $P$ is pointed, we may take $P_B$ to be the convex hull of its $0$--cells, cf. Theorem 8.5 of \citet{Schrijver}.

\begin{definition} \label{def:ternaryactivationpattern}
A {\em ternary activation pattern} (aka {\em ternary neural code} or {\em ternary sign sequence}) for a network architecture $(n_0, \ldots, n_d)$ with $N$ neurons is a ternary tuple $s \in \{-1,0,+1\}^N$. The \emph{ternary labeling} of a point $x \in \mathbb{R}^{n_0}$ is the sequence of ternary tuples $$s_x := \left(s_x^{1}, \ldots, s_x^{d}\right) \in \{-1,0,+1\}^{n_1 +\ldots + n_d}$$ indicating the sign of the pre-activation output of each neuron of $F_\theta$ at $x$. \end{definition} 

Explicitly, letting $F_\theta$ be defined as in Equations \ref{eqn:ReLUFunction} and \ref{eqn:layermap}, and $x \in \mathbb{R}^{n_0}$ any input vector, and the pre-activation output $z_{(\ell),i}(x)$ of the $i$th neuron in the $\ell$th layer at $x$ is as in Equation \ref{eqn:preactneuron}, $s_x^{\ell} = \left(s^{\ell}_1, \,\, \ldots \,\,, s^{\ell}_{n_\ell}\right)$ are defined by $s^{\ell}_{x,i} = \mbox{sgn}(z_{(\ell,i)}(x))$ (using the convention $\textrm{sgn}(0) = 0$). 

Moreover, for all parameters $\theta$ it follows immediately from the definitions that the ternary labeling is constant on the interior of each cell of $\mathcal{C}(F_\theta)$, inducing a ternary labeling $s_C$ on each cell $C$ of $\mathcal{C}(F_\theta)$ \citep{Masden}.  If $s^{\ell}_{x,i} \leq 0$ at an input vector $x$ (resp., $s^{\ell}_{C,i} \leq 0$ on a cell $C$), we say that the $i$th neuron in the $\ell$th layer is \emph{off} or \emph{turned off} at $x$ (resp., on $C$).

\begin{definition}[\citet{GLMW}] \label{def:stablyunactivated}
Fix a parameter $\theta \in \Omega$. A neuron (say it is the $i$th neuron of layer $\ell$) is said to be \emph{stably unactivated} at $\theta$ if there exists an open neighborhood $U \subset \Omega$ of $\theta$ such $s^{\ell}_{x,i}(u) \leq 0$ for every $x \in \mathbb{R}^{n_0}$ and every $u \in U$, where
$s^{\ell}_{x,i}(u)$ denotes the corresponding coordinate of ternary coding \emph{with respect to the parameter $u$}. 
\end{definition}

\begin{definition} \label{def:activationregion}
The \emph{activation region} of $F_\theta$ corresponding to a ternary activation pattern $s$ is a maximal connected component of the set of input vectors $x \in \mathbb{R}^{n_0}$ for which the ternary labeling $s_x$ equals $s$.
\end{definition}

\begin{definition} \label{Def:pmactivationregion}
A \emph{$\pm$-activation pattern} is a ternary activation pattern in which every coordinate is nonzero, and a \emph{$\pm$-activation region} is an activation region associated to a $\pm$-activation pattern. 
\end{definition}

 Note that any $\pm$-activation region is an open set.

\begin{definition}\label{def:noncoactive}
Neuron $i$ of layer $\ell$ and neuron $j$ of layer $\ell+1$ are called \emph{never coactive} if
$\{x \in \mathbb{R}^{n_0} \mid s^{\ell}_{x,i} = s^{\ell+1}_{x,j} = 1\} = \emptyset$. 
\end{definition}

\begin{remark} \label{rem:GenSupTransCase1skeleton}
For generic, supertransversal  (Definition \ref{defn:supertransversal}) networks, it follows from \citet{GrigsbyLindsey} that the  $\pm$-activation regions of $\mathcal{C}(F_\theta)$ are precisely the interiors of the $n_0$--cells of $\mathcal{C}(F_\theta)$. 
\end{remark}

\color{black}

\begin{definition} For $s = (s^1, \ldots, s^d) \in \{-1,0,+1\}^d$ a ternary $d$--tuple let \[\mathbb{O}^{\geq 0}_s := \{x \in \mathbb{O}^{\geq 0} \,\,|\,\, x_i = ReLU(s_ix_i)\}\] denote the face of the non-negative orthant consisting of points whose $i$th component is $0$ when $s^i \leq 0$.
\end{definition}

The following lemma is immediate from the definitions.

\begin{lemma} \label{lem:ternlabelimage} Let $F$ be a ReLU neural network map of architecture $(n_0, \ldots, n_d)$, $\mathcal{C}(F)$ its canonical polyhedral complex, and $C$ a cell of $\mathcal{C}(F)$ with $\ell$th ternary label $s_C^{\ell}$. Then $F_{(\ell)}(C)$ is contained in $\mathbb{O}^{\geq 0}_{s_C^{\ell}}$.
\end{lemma}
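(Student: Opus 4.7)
The plan is a direct unpacking of definitions, verified coordinate by coordinate (for $\ell \in \{1,\ldots,d-1\}$, which is where the $\ell$th layer actually applies the ReLU activation; the case $\ell = d$ does not fit the statement, since without a final ReLU the output of $F_{(d)}$ need not even lie in $\mathbb{O}^{\geq 0}$). Fix $x \in C$ and a neuron index $i \in \{1,\ldots,n_\ell\}$. The $i$th coordinate of $F_{(\ell)}(x)$ equals $\Relu(z_{(\ell),i}(x))$, which is manifestly non-negative, so $F_{(\ell)}(C) \subseteq \mathbb{O}^{\geq 0}$ is automatic. Unwinding the definition of $\mathbb{O}^{\geq 0}_{s_C^\ell}$, the constraint $x_i = \Relu(s_{C,i}^\ell x_i)$ is vacuous when $s_{C,i}^\ell = +1$ (since $x_i \geq 0$ already) and forces $x_i = 0$ when $s_{C,i}^\ell \in \{0,-1\}$, so only this latter case requires work.

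For that case, I appeal to the constancy of the ternary labeling on $C$ built into the canonical polyhedral complex: $z_{(\ell),i}$ is affine on $C$, and by construction $s_{C,i}^\ell$ agrees with $\mathrm{sgn}(z_{(\ell),i})$ on the relative interior of $C$. By continuity of $z_{(\ell),i}$ together with density of the relative interior in $C$, the sign extends weakly to all of $C$: one has $z_{(\ell),i} \leq 0$ on $C$ whenever $s_{C,i}^\ell = -1$, and $z_{(\ell),i} \equiv 0$ on $C$ whenever $s_{C,i}^\ell = 0$. In either subcase $\Relu(z_{(\ell),i}(x)) = 0$, which is exactly the vanishing demanded by the definition of $\mathbb{O}^{\geq 0}_{s_C^\ell}$. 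The only even mildly delicate point is this extension of sign information from relative interior to closure, and it is a routine continuity argument that I do not expect to pose any real obstacle; the lemma is otherwise pure bookkeeping.
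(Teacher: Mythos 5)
Your proof is correct and is precisely the definition-unpacking that the paper has in mind: the paper gives no argument beyond declaring the lemma ``immediate from the definitions,'' and your coordinate-by-coordinate check (nonnegativity from the ReLU, vanishing of the $i$th coordinate when $s^{\ell}_{C,i}\leq 0$ via constancy of the ternary label on the relative interior plus continuity on the closure) is exactly that immediate argument made explicit. Your side remark that the statement should be read with $\ell<d$ (since $F^{d}$ applies no ReLU) is a fair observation consistent with how the paper actually uses the lemma.
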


\begin{definition} \label{defn:ternlabeldim} The {\em dimension} of a ternary label $s$, denoted $\mbox{dim}(s)$, is the dimension of the face $\mathbb{O}^{\geq 0}_{s}$. Equivalently, $\mbox{dim}(s)$ is the number of `$+1$'s in the tuple $s$.
\end{definition}

\section{Transversality} \label{ss:Transversality}

Recall the following classical notions (cf.~\citet{GuilleminPollack} and Section 4 of \citet{GrigsbyLindsey}):

\begin{definition} \citep{GuilleminPollack} \label{defn:maptransverse} Let $X$ be a smooth manifold with or without boundary, $Y$ and $Z$ smooth manifolds without boundary, $Z$ a smoothly embedded submanifold of $Y$, and $f:X \to Y$ a smooth map. We say that $f$ is {\em transverse} to $Z$ and write $f \pitchfork Z$ if 
\begin{equation} \label{eq:maptransverse}
df_p(T_pX) + T_{f(p)}Z = T_{f(p)}Y
\end{equation} 
for all $p \in f^{-1}(Z)$.
\end{definition}

\begin{definition}[Definition 4.2 of \citet{GrigsbyLindsey}] \label{defn:transoncells} Let $\mathcal{C}$ be a polyhedral complex in $\mathbb{R}^n$, let $f: |\mathcal{C}| \rightarrow \mathbb{R}^r$ be a map  which is smooth on all cells of $\mathcal{C}$, and let $Z$ be a  smoothly embedded submanifold (without boundary) of $\mathbb{R}^r$. We say that $f$ is  \emph{transverse on cells} to $Z$ and write $f \pitchfork_c Z$ if the restriction of $f$ to the {\em interior}, $\mbox{int}(C)$, of every cell $C$ of $\mathcal{C}$ is transverse to $Z$ (in the sense of Definition \ref{defn:maptransverse}).\end{definition}

Note that we use the convention that the interior of a $0$--cell is the $0$--cell itself.

The definition above can be extended so that $Z$ is the domain of a polyhedral complex in $\mathbb{R}^r$. This was essentially carried out in \citet{Masden}:

\begin{definition} \citep{Masden} \label{defn:twocpxstrans} Let $\mathcal{C}$ be a polyhedral complex in $\mathbb{R}^n$, let $f: |\mathcal{C}| \rightarrow \mathbb{R}^r$ be a map which is smooth on all cells of $\mathcal{C}$, and let $\mathcal{Z}$ be a polyhedral complex in $\mathbb{R}^r$. We say that $f$ and $\mathcal{Z}$ are \emph{transverse on cells} and write $f \pitchfork_c |\mathcal{Z}|$ if the restriction of $f$ to the interior of every cell of $\mathcal{C}$ is transverse to the interior of every cell of $|\mathcal{Z}|$ (in the sense of Definition \ref{defn:transoncells}).
\end{definition}

Transversality for intersections of polyhedral complexes implies that each non-empty cell in the intersection complex has the expected dimension. The following extension of the classical Map Transversality Theorem to polyhedral complexes is immediate (cf. \citet{GrigsbyLindsey} Cor. 4.7, \citet{Masden}):

\begin{corollary} Let $\mathcal{C}$ be a polyhedral complex in $\mathbb{R}^n$, let $f: |\mathcal{C}| \rightarrow \mathbb{R}^r$ be a map which is smooth on all cells of $\mathcal{C}$, and let $\mathcal{Z}$ be a polyhedral complex in $\mathbb{R}^r$ for which $f \pitchfork_c \mathcal{Z}$. Then for every pair of cells $C \in \mathcal{C}$ and $Z \in \mathcal{Z}$, $f^{-1}(Z) \cap \mbox{int}(C)$ is a (possibly empty) smoothly embedded submanifold of $\mbox{int}(C)$ whose codimension in $\mbox{int}(C)$ equals the codimension of $Z$ in $\mathbb{R}^r$. 
\end{corollary}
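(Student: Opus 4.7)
The plan is to reduce the claim to the classical Preimage Theorem (e.g., as stated in \citet{GuilleminPollack}), applied one pair of cells at a time. The only subtlety is a notational one: under the standard convention for polyhedral complexes, a cell $Z \in \mathcal{Z}$ is identified with its relative interior $\mbox{int}(Z)$, so the expression $f^{-1}(Z) \cap \mbox{int}(C)$ in the statement is to be read as $f^{-1}(\mbox{int}(Z)) \cap \mbox{int}(C)$; this reading is also forced by the claim about codimension, since a closed polyhedral cell contains faces of many different dimensions.

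Fix an arbitrary pair of cells $C \in \mathcal{C}$ and $Z \in \mathcal{Z}$. Since $C$ is a polyhedral cell in $\mathbb{R}^n$, its relative interior $\mbox{int}(C)$ is an open subset of the affine-linear span $V_C \subseteq \mathbb{R}^n$, and is therefore a smooth manifold without boundary of dimension $\dim(C)$. Similarly, $\mbox{int}(Z)$ is an open subset of the affine span $V_Z \subseteq \mathbb{R}^r$, making it a smoothly embedded submanifold (without boundary) of $\mathbb{R}^r$ of codimension $r - \dim(Z)$, which is precisely the codimension of $Z$ in $\mathbb{R}^r$.

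By the hypothesis $f \pitchfork_c \mathcal{Z}$ together with Definition \ref{defn:twocpxstrans}, the restriction $f|_{\mbox{int}(C)}: \mbox{int}(C) \to \mathbb{R}^r$ is smooth and transverse to $\mbox{int}(Z)$ in the sense of Definition \ref{defn:maptransverse}. The classical Preimage Theorem, applied to the smooth map $f|_{\mbox{int}(C)}$ between boundaryless manifolds and to the smoothly embedded submanifold $\mbox{int}(Z) \subseteq \mathbb{R}^r$, then yields directly that
\[
\bigl(f|_{\mbox{int}(C)}\bigr)^{-1}\bigl(\mbox{int}(Z)\bigr) \;=\; f^{-1}\bigl(\mbox{int}(Z)\bigr) \cap \mbox{int}(C)
\]
is either empty or a smoothly embedded submanifold of $\mbox{int}(C)$, with codimension in $\mbox{int}(C)$ equal to the codimension of $\mbox{int}(Z)$ in $\mathbb{R}^r$, namely $r - \dim(Z)$.

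There is essentially no technical obstacle: the definition of $\pitchfork_c$ was engineered precisely so that, restricted to any single pair of open cells, the classical transversality machinery applies verbatim. The only thing to be careful about is the passage from ``cell $Z$'' (closed) to ``open cell $\mbox{int}(Z)$'' in the hypothesis and conclusion; once this convention is fixed, the corollary follows immediately by running the above argument over all pairs $(C,Z) \in \mathcal{C} \times \mathcal{Z}$. \qed
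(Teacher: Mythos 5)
Your proof is correct and matches the paper's intended argument: the paper states this corollary as an immediate consequence of the classical Map Transversality Theorem (citing \citet{GuilleminPollack} via \citet{GrigsbyLindsey} and \citet{Masden}) applied pairwise to open cells, which is precisely what you do. Your explicit handling of the cell-versus-relative-interior convention is a sensible clarification consistent with Definition \ref{defn:twocpxstrans}, which already phrases transversality in terms of cell interiors.
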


We use the standard convention that a manifold of negative dimension is empty. In particular, if $f: |\mathcal{C}| \rightarrow \mathbb{R}^r$ and $\mathcal{Z} \subseteq \mathbb{R}^r$ are transverse on cells as above, and $C \in \mathcal{C}$ is a $0$--cell, then $f^{-1}(Z) \cap C = \emptyset$ for all cells $Z$ of positive codimension.

\begin{lemma} \label{lem:polycpximagepolycpx} Let $\mathcal{C}$ be a polyhedral complex with domain $|\mathcal{C}| \subseteq \mathbb{R}^n$, and $F: |\mathcal{C}| \rightarrow \mathbb{R}^r$ a map that is affine-linear on cells. Then $F(\mathcal{C})$ is a polyhedral complex in $\mathbb{R}^r$.
\end{lemma}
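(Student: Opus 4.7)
The plan is to construct an explicit polyhedral complex structure on $F(|\mathcal{C}|)$ by pulling back a single common refinement in the codomain. At a high level, although the naive collection $\{F(C)\}_{C\in \mathcal{C}}$ is typically not itself a polyhedral complex (different cells of $\mathcal{C}$ can have images that overlap in sets that are not faces of either), we can cut everything along a finite arrangement in $\mathbb{R}^r$ to produce one.

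First, I would establish that for each cell $C \in \mathcal{C}$, the image $F(C)$ is a polyhedral set in $\mathbb{R}^r$. Since $F|_C$ is affine-linear, it extends to an affine-linear map on all of $\mathbb{R}^n$, so $F(C)$ is the affine image of the polyhedron $C$. That the affine image of a polyhedron is a polyhedron is the classical Fourier--Motzkin projection result (see e.g.\ Corollary 7.1e of \citet{Schrijver}), and is valid regardless of whether $C$ is bounded or $F$ drops dimensions on $C$.

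Next, since $\mathcal{C}$ is finite and each $F(C)$ admits a description by finitely many closed affine half-spaces, collecting all of these defining hyperplanes (as $C$ ranges over $\mathcal{C}$) yields a finite affine hyperplane arrangement $\mathcal{H} \subseteq \mathbb{R}^r$. This arrangement induces a polyhedral complex $\mathcal{A}_\mathcal{H}$ on $\mathbb{R}^r$ in the standard way: its cells are the closures of strata on which the sign pattern of the defining affine functionals of $\mathcal{H}$ is constant. The key observation is then that for every $C \in \mathcal{C}$, the polyhedron $F(C)$ is a union of cells of $\mathcal{A}_\mathcal{H}$, because each of its defining half-spaces is bounded by a hyperplane in $\mathcal{H}$. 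Consequently, $F(|\mathcal{C}|) = \bigcup_{C \in \mathcal{C}} F(C)$ is likewise a union of cells of $\mathcal{A}_\mathcal{H}$, and I would define
\[
F(\mathcal{C}) \,:=\, \{\, P \in \mathcal{A}_\mathcal{H} \,:\, P \subseteq F(|\mathcal{C}|)\,\}.
\]
It remains to verify the two polyhedral-complex axioms. Closure under faces follows because $F(|\mathcal{C}|)$ is closed (a finite union of polyhedra), so every face of a cell $P \subseteq F(|\mathcal{C}|)$ also lies in $F(|\mathcal{C}|)$. The intersection of any two cells in $F(\mathcal{C})$ is a common face, inherited directly from the polyhedral-complex structure of $\mathcal{A}_\mathcal{H}$.

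The main obstacle is conceptual rather than computational: one must resist the temptation to take $\{F(C)\}_{C\in\mathcal{C}}$ as the complex, since $F$ may collapse or partially overlap cells in a way that destroys the ``intersection is a common face'' axiom. The device of working inside a single arrangement $\mathcal{A}_\mathcal{H}$ in the codomain circumvents this, and reduces the entire lemma to the classical statement that affine images of polyhedra are polyhedra, together with the routine verification that a sub-collection of the cells of a polyhedral complex whose union is closed is itself a polyhedral complex.
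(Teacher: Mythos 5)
Your argument is correct, and its first half coincides with the paper's: the paper also reduces ``$F(C)$ is a polyhedron'' to Fourier--Motzkin, just more explicitly, by factoring $F$ restricted to $\mathrm{aff}(C)$ as a projection onto a complement of its affine kernel followed by an affine isomorphism. Where you genuinely diverge is in assembling the images into a complex. The paper takes the cells of $F(\mathcal{C})$ to be the sets $F(C)$ themselves and asserts that continuity forces $F(C')$ to be a face of $F(C)$ whenever $C'$ is a face of $C$; you instead collect all hyperplanes defining all the $F(C)$ into one arrangement $\mathcal{H}$ and take the subcomplex of the induced decomposition of $\mathbb{R}^r$ supported on $F(|\mathcal{C}|)$. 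Your route is the more robust one: for a general affine-on-cells map the face-to-face claim can fail (project the triangle with vertices $(0,0)$, $(1,1)$, $(2,0)$ to the $x$-axis; the edge from $(0,0)$ to $(1,1)$ lands on $[0,1] \subsetneq [0,2]$, which is not a face of the image), and images of distinct cells can overlap in non-faces --- precisely the failure mode you identify and circumvent. The trade-off is that your complex is a strict refinement whose cells are in general not the sets $F(C)$, whereas the paper's subsequent use of this lemma (notably the transversality-on-cells arguments in Lemma~\ref{lem:transpullback}, which quantify over the interiors of the cells $F(C)$ of the image complex) implicitly relies on the coarser structure; adopting your construction there would require rephrasing those statements over the refined cells, where transversality to the interiors of lower-dimensional refinement cells is a strictly stronger requirement. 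So your proof establishes the lemma as stated --- the image carries a polyhedral complex structure --- by a cleaner and more defensible mechanism, at the cost of not producing the particular cell structure the paper later leans on.
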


Note that $F(\mathcal{C})$ need not be imbedded, nor even immersed.

\begin{proof} We begin by showing that the image, $F(C)$, of a $k$--dimensional cell (polyhedral set) $C \in \mathcal{C}$ is itself a polyhedral set in $\mathbb{R}^r$. By definition, $C$ is the solution set of finitely many affine-linear inequalities. That is, there exists (for some $m$) an $m \times n$ matrix $A$ and a vector $b \in \mathbb{R}^n$ such that 
\begin{equation}
C := \{x \in \mathbb{R}^n \,\,|\,\, Ax \geq b\}.
\end{equation} 
Let $V = \mbox{aff}(C)$ be the $k$--dimensional affine hull of $C$ and choose any point $p \in V$. Noting that $F$ is affine-linear on $V$, let $j$ denote the rank of $F$ restricted to $V$. 

Now choose an (affine) basis $\mathcal{B} = \{v_1, \ldots, v_k\}$ for $V$ whose final $(k-j)$ vectors form a basis for the affine kernel of the map $F$ restricted to $V$.  That is, all vectors $v$ of the form \[v = p + \sum_{i=j+1}^k a_iv_i\] satisfy $F(v) = F(p)$. 

Let $W = \mbox{Span}\{v_1, \ldots, v_j\}$. By construction, \[F|_V = F' \circ \pi_{V \rightarrow W}\] can be realized as the composition of the projection map $\pi_{V \rightarrow W}: V \rightarrow W$ and an affine-linear isomorphism $F': W \rightarrow F(V)$. It can be seen using the Fourier-Motzkin elimination method (cf. Sec. 12.2 in \cite{Schrijver}) that the image of $C$ under $\pi_{V \rightarrow W}$ is a polyhedral set, and it is immediate that the image of a polyhedral set under an affine-linear isomorphism is a polyhedral set. It follows that the image of $C$ under $F$ is a polyhedral set in $\mathbb{R}^r$. The continuity of $F$ ensures that if $C'$ is a face of $C$, then $F(C')$ will be a face of $F(C)$, so the image of $|\mathcal{C}|$ under $F$ will be the domain of a polyhedral complex, as desired.
\end{proof}

\begin{lemma} \label{lem:transpullback} Let $\mathcal{C}$ be a polyhedral complex in $\mathbb{R}^n$, $F: |\mathcal{C}| \rightarrow \mathbb{R}^r$ a map that is affine-linear on cells, and $\mathcal{Z}$ a polyhedral complex in $\mathbb{R}^r$. Let $F(\mathcal{C})$ denote the polyhedral complex in $\mathbb{R}^r$ that is the image of $\mathcal{C}$. If $i: |F(\mathcal{C})| \rightarrow \mathbb{R}^m$ is the inclusion map, then $i \pitchfork_c \mathcal{Z}$ iff $F \pitchfork_c \mathcal{Z}$. 
\end{lemma}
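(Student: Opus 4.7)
The plan is to show that the transversality condition for $F$ on a cell of $\mathcal{C}$ is, at each relevant point, the same linear equation as the transversality condition for the inclusion $i$ on the corresponding cell of $F(\mathcal{C})$. Two elementary facts drive the argument: (a) for each cell $C \in \mathcal{C}$, one has $F(\mathrm{int}(C)) = \mathrm{int}(F(C))$, where $\mathrm{int}$ denotes relative interior; and (b) for every $p \in \mathrm{int}(C)$, the differential $dF_p$ maps $T_p\,\mathrm{int}(C)$ \emph{onto} $T_{F(p)}\,\mathrm{int}(F(C))$. Fact (a) is the standard statement from convex analysis that an affine-linear map sends the relative interior of a convex set onto the relative interior of its image. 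Fact (b) follows from the affine-linearity of $F$ on $\mathrm{aff}(C)$: the differential $dF_p$ is the linear part of $F|_{\mathrm{aff}(C)}$, which surjects onto the linear part of $\mathrm{aff}(F(C)) = F(\mathrm{aff}(C))$, and the tangent spaces in question are precisely these linear parts.

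With (a) and (b) in hand, both implications become immediate. Assuming $i \pitchfork_c \mathcal{Z}$ and given $p \in \mathrm{int}(C) \cap F^{-1}(\mathrm{int}(Z))$ for some $C \in \mathcal{C}$, $Z \in \mathcal{Z}$, set $q := F(p)$. By (a), $q \in \mathrm{int}(F(C)) \cap \mathrm{int}(Z)$, so the transversality hypothesis for $i$ at $q$ reads
\[T_q\,\mathrm{int}(F(C)) + T_q\,\mathrm{int}(Z) = \mathbb{R}^r,\]
and substituting (b) on the left-hand summand yields the $F$-transversality identity at $p$. Conversely, assuming $F \pitchfork_c \mathcal{Z}$ and given $q \in \mathrm{int}(C') \cap \mathrm{int}(Z)$ with $C' = F(C)$ a cell of $F(\mathcal{C})$, fact (a) supplies a preimage $p \in \mathrm{int}(C)$ of $q$; the $F$-transversality at $p$, rewritten via (b), produces the required transversality of $i$ at $q$.

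The main obstacle is bookkeeping with the cell structure of $F(\mathcal{C})$: distinct cells of $\mathcal{C}$ can have overlapping images, so the assignment $C \mapsto F(C)$ is not a bijection onto the cells of $F(\mathcal{C})$, and one must be careful that the cells of the image complex actually take the form $F(C)$ for some $C \in \mathcal{C}$. Because both transversality conditions are pointwise, however, it suffices to verify the identity at each $q \in |F(\mathcal{C})|$ using \emph{any} cell $C \in \mathcal{C}$ with $q \in F(\mathrm{int}(C))$; such a $C$ is guaranteed by Lemma~\ref{lem:polycpximagepolycpx} together with fact (a), and the argument above is insensitive to which such $C$ one selects.
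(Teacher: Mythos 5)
Your proof is correct, but it takes a genuinely different route from the paper's. The paper's proof fixes an affine-linear extension $F|^{\mathbb{R}^n}$ of $F$ and reformulates both transversality conditions as dimension counts: $i|_{\mathrm{int}(F(C))} \pitchfork \mathrm{int}(Z)$ is characterized by $\mathrm{aff}(F(C)) \cap \mathrm{aff}(Z)$ having dimension $(c'+z')-r$ (when the interiors meet), $F|_{\mathrm{int}(C)} \pitchfork \mathrm{int}(Z)$ is characterized analogously via $\mathrm{aff}(C) \cap F^{-1}(\mathrm{aff}(Z))$, and the rank--nullity theorem applied to $F|^{\mathbb{R}^n}$ transfers one count to the other, with the empty-intersection case disposed of as vacuous. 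You instead work directly with the tangent spaces appearing in Definition \ref{defn:maptransverse}: since $F$ is affine on $\mathrm{aff}(C)$ and affine maps commute with taking affine hulls, $dF_p$ carries $T_p\,\mathrm{int}(C)$ \emph{onto} $T_{F(p)}\,\mathrm{int}(F(C))$, so the two subspace-sum identities are literally the same equation after substitution, and the preservation of relative interiors under affine maps matches up the points at which each condition must be checked (including the vacuous cases). Your version buys directness --- no dimension bookkeeping, no rank--nullity, no case split --- while the paper's version makes explicit the ``expected codimension'' consequence that is used downstream. You also flag, more carefully than the paper does, that $C \mapsto F(C)$ need not biject onto the cells of $F(\mathcal{C})$; your pointwise resolution is sound under the paper's implicit convention that the cells of $F(\mathcal{C})$ are exactly the images $F(C)$ of cells $C \in \mathcal{C}$, which is how Lemma \ref{lem:polycpximagepolycpx} constructs the image complex.
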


\begin{proof} Let $C$ be a cell of $\mathcal{C}$ with image $F(C) \in F(\mathcal{C})$, and let $Z \in \mathcal{Z}$. We will show that when $F$ (resp., $i$) is restricted to $\mbox{int}(C)$ (resp., to $\mbox{int}(F(C))$),  $F|_{\mbox{int}(C)} \pitchfork \mbox{int}(Z)$ iff $i|_{\mbox{int}(F(C))} \pitchfork \mbox{int}(Z)$. 

In the following, choose an affine-linear extension of $F$ to all of $\mathbb{R}^n$ and call it $F|^{\mathbb{R}^n}$. Note that any such extension can be decomposed as a projection onto the affine hull of $C$ followed by an affine isomorphism onto the affine hull of $F(C)$, as described in the proof of Lemma \ref{lem:polycpximagepolycpx}. Let $c := \mbox{dim}(C), c' = \mbox{dim}(F(C)),$ and $z = \mbox{dim}(F^{-1}(Z)), z' = \mbox{dim}(Z)$. Begin by noting that $i|_{\mbox{int}(F(C))} \pitchfork \mbox{int}(Z)$ iff $F(C) \cap Z$ is a polyhedral set of dimension $(c'+z) - r$ iff either they have empty intersection or the interiors of $F(C)$ and $Z$ have non-empty intersection and the affine hulls of $F(C)$ and $Z$ intersect in an affine-linear space of codimension $(r-c')+(r-z')$ (that is, of dimension $(c'+z') - r$).

The statement that $F|_C \pitchfork Z$ iff $i|_{F(C)} \pitchfork Z$ is vacuous in the empty intersection case. 

In the non-empty intersection case, the affine-linear map $F$ restricted to $\mbox{int}(C)$ is either a homeomorphism onto its image or a linear projection map onto a set homeomorphic to its image, $\mbox{int}(C) \cap \mbox{int}(Z) \neq \emptyset$ iff $\mbox{int}(C) \cap \mbox{int}(F^{-1}(Z))$. Moreover, the rank-nullity theorem applied to $F|^{\mathbb{R}^n}$ tells us that $\mbox{aff}(F(C)) \cap \mbox{aff}(Z)$ has dimension $(c'+z')-r$ iff $\mbox{aff}(C) \cap F^{-1}(\mbox{aff}(Z))$ has dimension $(c+z) - n$.

It follows that for all cells $C$ of $\mathcal{C}$ and $Z$ of $\mathcal{Z}$, $F|_{\mbox{int}(C)} \pitchfork Z$ iff $i|_{F(C)} \pitchfork Z$, and the statement follows.
\end{proof}

\begin{definition}[Definition 11 of \citet{Masden}]  \label{defn:supertransversal}
Let $F$ be a ReLU neural network of depth $d$, and let the $(i-1)$st layer map, $F^{i-1}: \mathbb{R}^{n_{i-2}} \rightarrow \mathbb{R}^{n_{i-1}}$ (resp., the composition of maps from the $i$th layer map, $F^{(i)}: \mathbb{R}^{n_{i-1}} \rightarrow \mathbb{R}^{n_d}$) be viewed as maps that are affine-linear on cells of their respective canonical polyhedral decompositions, $\mathcal{C}(F^{i-1})$ (resp. $\mathcal{C}(F^{(i)})$).  If, for all $2 \leq i \leq d$, we have \[F^{i-1} \pitchfork_c \mathcal{C}(F^{(i)}),\] then we call F a {\em supertransversal} neural network.
\end{definition}

Informally, we can think of supertransversality as the right generalization of the genericity condition for hyperplane arrangements to bent hyperplane arrangements. Recall that a hyperplane arrangement is {\em generic} if every $k$--fold intersection of hyperplanes in the arrangement is an affine linear subspace of dimension $n-k$.  Analogously, it follows from the definitions that every $k$--fold intersection of bent hyperplanes associated to a generic, supertransversal network intersect in a (possibly empty) polyhedral complex of dimension $n-k$. 

An important result proved in \citet{Masden} (see also Theorem 3 of \citet{GrigsbyLindsey}) is the following:

\begin{proposition}[Lemma 12 of \citet{Masden}] For any neural network architecture, the set of parameters associated to generic, supertransversal marked neural network functions is full measure in parameter space, $\mathbb{R}^D$.
\end{proposition}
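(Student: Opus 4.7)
The plan is to establish that each of the two defining properties---(a) genericity of every layerwise hyperplane arrangement $\mathcal{A}^{\ell}$, and (b) supertransversality $F^{i-1}\pitchfork_c\mathcal{C}(F^{(i)})$ for each $2\leq i\leq d$---cuts out a full-measure subset of $\Omega=\mathbb{R}^D$. Their conjunction, being a finite intersection of full-measure sets, is then also full measure.

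For (a), the arrangement $\mathcal{A}^{\ell}\subseteq\mathbb{R}^{n_{\ell-1}}$ is determined entirely by $(W^{\ell},b^{\ell})$. Genericity---that every $p$-fold intersection of hyperplanes in $\mathcal{A}^{\ell}$ has codimension exactly $p$---is equivalent to the non-vanishing of finitely many polynomial minors built from rows of the augmented matrix $[W^{\ell}\,|\,b^{\ell}]$, so the failure locus is a finite union of proper algebraic subvarieties and hence Lebesgue-null. Intersecting the conditions over the finitely many layers $\ell=1,\ldots,d$ preserves full measure.

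For (b), fix $i$ and use Fubini to reduce to varying $\theta_{i-1}=(W^{i-1},b^{i-1})$ with the remaining parameters held constant. The condition depends only on $(\theta_{i-1},\theta_{\geq i})$; with $\theta_{\geq i}=(W^i,b^i,\ldots,W^d,b^d)$ fixed, $\mathcal{C}(F^{(i)})$ is a specific polyhedral complex in $\mathbb{R}^{n_{i-1}}$ with finitely many cells $Z$. Index the cells $C_s$ of $\mathcal{C}(F^{i-1})$ by ternary activation patterns $s$ for layer $i-1$; on $\mathrm{int}(C_s)$, $F^{i-1}$ is the affine-linear map $x\mapsto\Pi_s(W^{i-1}x+b^{i-1})$, where $\Pi_s$ projects onto the coordinate subspace $V_s\subseteq\mathbb{R}^{n_{i-1}}$ spanned by the active indices $\{j:s_j=+1\}$. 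I would apply classical parametric transversality (Thom/Sard) to the evaluation map $E_s:U_s\subseteq\Omega_{i-1}\times\mathbb{R}^{n_{i-2}}\to\mathbb{R}^{n_{i-1}}$ defined on the open set $U_s$ where the layer-$(i-1)$ pattern is $s$: freedom in the biases $b^{i-1}_j$ with $s_j=+1$ gives $dE_s$ full surjection onto $TV_s$, so for each cell $Z$ satisfying $TV_s+T\,\mathrm{aff}(Z)=T\mathbb{R}^{n_{i-1}}$, transversality of the slice $F^{i-1}_{\theta_{i-1}}|_{\mathrm{int}(C_s)}$ to $\mathrm{int}(Z)$ holds for almost every $\theta_{i-1}$. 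Taking finite unions over pairs $(s,Z)$ and over indices $i$ gives the desired full-measure conclusion.

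The principal obstacle is handling degenerate pairs $(s,Z)$ for which $TV_s+T\,\mathrm{aff}(Z)\subsetneq T\mathbb{R}^{n_{i-1}}$, since transversality in $\mathbb{R}^{n_{i-1}}$ cannot hold at an intersection point in that case; the transversality-on-cells condition must instead be satisfied by establishing \emph{emptiness} of $F^{i-1}_{\theta_{i-1}}(\mathrm{int}(C_s))\cap\mathrm{int}(Z)$. I plan to address this by a secondary genericity argument: after first perturbing $\theta_{\geq i}$ on a full-measure set so that each cell $Z$ is in general position with respect to each coordinate face $V_s$, the bias vector $b^{i-1}$ translates the image of $F^{i-1}|_{\mathrm{int}(C_s)}$ freely within $V_s$, and the set of translations for which this image meets the fixed lower-dimensional locus $\mathrm{aff}(Z)\cap V_s$ is contained in a proper algebraic subvariety of measure zero. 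Combining this emptiness argument for degenerate pairs with parametric transversality for non-degenerate pairs, and intersecting over the finite collection of pairs $(s,Z)$ and indices $i$, completes the proof.
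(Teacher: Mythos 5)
First, note that the paper offers no proof of this statement at all: it is imported wholesale as Lemma~12 of \citet{Masden}, and only the genericity half appears with a proof in the paper (Lemma~\ref{lem:genericfullmeasure}), argued exactly as in your part (a) via non-vanishing minors cutting out the complement of a positive-codimension algebraic set. So part (a) is correct and coincides with what the paper does; the question is whether part (b) is a self-contained proof of the supertransversality half, and there it has genuine gaps. Your Fubini-plus-parametric-transversality skeleton is the right general strategy, and you correctly isolate the degenerate pairs $(s,Z)$ with $TV_s + T\,\mathrm{aff}(Z)\subsetneq \mathbb{R}^{n_{i-1}}$ as the crux. But the step you use to dispatch them fails. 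The claim that $b^{i-1}$ ``translates the image of $F^{i-1}|_{\mathrm{int}(C_s)}$ freely within $V_s$'' is false: changing an active bias $b_j$ moves the wall $H^{i-1}_j$ of the cell $C_s$ along with the map, and in the output coordinate $y_j = w_j\cdot x + b_j$ that wall is pinned to the coordinate hyperplane $\{y_j=0\}$ for \emph{every} value of $b_j$. A one-neuron example makes this stark: the image of the activation region of $x\mapsto \sigma(wx+b)$ is the ray $(0,\infty)$ for all $(w,b)$ with $w\neq 0$; no bias translates it. Only the walls coming from inactive neurons move relative to this pinned corner. Moreover, even granting free translation, the measure count is wrong: the set of translations for which a $d$-dimensional image meets a locus of dimension $k$ inside $V_s$ can have dimension $d+k \geq \dim V_s$, hence need not be null.

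Ironically, the degenerate case is easier than you make it, but only modulo your other unproven assertion. If every $\mathrm{aff}(Z)$ is in general position with respect to every coordinate face $V_s$, then degeneracy forces $\dim Z + \dim V_s < n_{i-1}$ and hence $\mathrm{aff}(Z)\cap V_s = \emptyset$, so emptiness of $F^{i-1}(\mathrm{int}\,C_s)\cap \mathrm{int}(Z)$ is automatic because the image lies in $V_s$ --- no translation argument is needed. But this shifts the entire weight of the proof onto your ``secondary genericity'' claim, which you assert rather than prove: that for almost every $\theta_{\geq i}$, every cell of $\mathcal{C}(F^{(i)})$ is in general position with respect to the coordinate subspaces. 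That claim is of essentially the same depth as the statement being proven --- the cells of $\mathcal{C}(F^{(i)})$ are themselves cut out by bent hyperplanes of a deep network, so establishing their generic position requires its own induction on depth with a stratumwise parametric transversality argument, which is precisely the content of the cited proof in \citet{Masden}. As written, your argument therefore rests on an unproven lemma that carries the real work (and, relatedly, the measurability of the exceptional sets used in the Fubini reduction deserves at least a remark, e.g., via semialgebraicity), so it does not stand on its own as a proof.
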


\begin{definition} \label{def:TPIC}
Let $s \in (\Omega = \mathbb{R}^D)$ be a generic, supertransversal (Definition \ref{defn:supertransversal}) parameter for a ReLU neural network of architecture $(n_0, \ldots, n_d)$. We say that $s$ satisfies the {\em transverse pairwise intersection condition (TPIC)} for all adjacent layer maps if $\hat{H}^{\ell}_i \pitchfork_c \hat{H}^{\ell + 1}_j \neq \emptyset$  for all $i,j, \ell.$ That is, every pair of bent hyperplanes in adjacent layers has non-empty transverse intersection.
\end{definition}

In the language of \citet{RolnickKording} and \citet{PhuongLampert}, a generic, supertransversal parameter $s$ satisfies the transverse pairwise intersection condition (TPIC)  for all adjacent layer maps iff every pair of nodes in every pair of adjacent layers of the {\em dependency graph} is connected by an edge.

\section{Unbounded Solyhedral Sets and Sufficiently High-Bias Positive-Axis Hyperplanes}

In order to choose parameters whose bent hyperplane arrangement satisfies (TPIC), we will need to establish some results about the images of unbounded polyhedral sets under generic, supertransversal ReLU neural network layer maps. We will also need to understand the intersections of these images with sufficiently high-bias positive-axis hyperplanes. 

The following proposition ensures that the images of the nested unbounded polyhedral sets $\mathcal{S}_1 \subseteq \mathcal{S}_2 \subseteq \ldots \subseteq \mathcal{S}_d$ referenced in the proof of the main theorem are unbounded in the layers of the neural network.

\begin{proposition} \label{prop:unbddimage} Let $F_\theta: \mathbb{R}^{n_0} \rightarrow \mathbb{R}^{n_d}$ be a generic, supertransversal ReLU network map of architecture $(n_0=k, n_1, \ldots, n_d)$ with $n_\ell \geq k$ for all $\ell$, and let $\mathcal{S}$ be an unbounded polyhedral set of dimension $k$ in the canonical polyhedral complex $\mathcal{C}(F_\theta)$. If the sign sequence $s_S = (s^1, \ldots, s^{n_d})$ associated to $\mathcal{S}$ satisfies $s^\ell = (\underbrace{+1, \ldots, +1}_{k}, \underbrace{-1, \ldots -1}_{n_\ell - k})$  for all $i \leq \ell$, then $F_{(\ell)}(\mathcal{S})$ is an unbounded polyhedral set of dimension $k$ contained in $\mathbb{O}^{+}_k \subseteq \mathbb{R}^{n_\ell}$.
\end{proposition}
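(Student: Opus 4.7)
The plan is to proceed by induction on $\ell$, with base case $\ell = 0$ where $F_{(0)}$ is the identity and the conclusion coincides with the hypothesis on $\mathcal{S}$ (an unbounded polyhedral set of dimension $k$ in $\mathbb{R}^{n_0} = \mathbb{R}^k$). The key observation driving the induction is that the sign-pattern hypothesis forces $F_{(\ell)}|_{\mathcal{S}}$ to behave as a single affine-linear map with image in $\mathbb{O}^{+}_k \subseteq \mathbb{R}^{n_\ell}$: because each layer's sign pattern on $\mathcal{S}$ has its first $k$ neurons strictly active and the remaining $n_i - k$ strictly inactive, the ReLU in each layer acts on the relevant image as a fixed coordinate projection, and the composition is therefore affine-linear on all of $\mathcal{S}$.

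For the inductive step, I would first apply Lemma \ref{lem:ternlabelimage} to $\mathcal{S}$ together with the assumed sign pattern at layer $\ell$ to obtain the containment $F_{(\ell)}(\mathcal{S}) \subseteq \mathbb{O}^{+}_k \subseteq \mathbb{R}^{n_\ell}$. Since $F_{(\ell)}|_{\mathcal{S}}$ is affine-linear by the observation above, Lemma \ref{lem:polycpximagepolycpx} gives that $F_{(\ell)}(\mathcal{S})$ is a polyhedral set. For the dimension claim, I would invoke Proposition \ref{prop:AffIsomorph}, applied to the truncated network $F_{(\ell)}$: the sign-pattern hypothesis gives $\mbox{dim}(s^i) = k$ for every $i \leq \ell$, so the minimum in the formula of that proposition is $k$, yielding $\mbox{dim}\,F_{(\ell)}(\mathcal{S}) = k$.

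To conclude unboundedness, the inductive hypothesis says $F_{(\ell-1)}(\mathcal{S})$ is unbounded and $k$-dimensional, so its affine hull is a $k$-dimensional affine subspace $V \subseteq \mathbb{R}^{n_{\ell - 1}}$. Since $\mbox{dim}\,F_{(\ell)}(\mathcal{S}) = k$ and $F^\ell$ restricted to $V$ is affine-linear with image of the same dimension as $V$, this restriction must be an affine isomorphism onto its image. Affine isomorphisms of polyhedral sets carry characteristic cones bijectively to characteristic cones (the invertible linear part maps non-zero recession directions to non-zero recession directions), and hence preserve unboundedness. Therefore $F_{(\ell)}(\mathcal{S})$ is unbounded, completing the induction.

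The main obstacle is the dimension-preservation step, which is equivalent to showing that the affine-linear map realized by $F^\ell$ on the $k$-dimensional affine hull $V$ is injective. I expect this to follow cleanly from Proposition \ref{prop:AffIsomorph} as above; if a self-contained argument is preferred, one can verify that the generic, supertransversal hypothesis on $\theta$ forces the relevant top-left $k \times k$ block of $W^\ell$ (in coordinates adapted to the ambient $\mathbb{O}^{+}_k \cong \mathbb{R}^k$) to have full rank, which is a generic condition cut out by the non-vanishing of a polynomial in the weights. Once dimension is preserved, the rest of the argument is essentially bookkeeping on the characteristic cone of the polyhedral image.
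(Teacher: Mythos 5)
Your overall skeleton is reasonable and in one respect cleaner than the paper's: containment in $\mathbb{O}^{\geq 0}_k$ via Lemma \ref{lem:ternlabelimage}, polyhedrality via Lemma \ref{lem:polycpximagepolycpx}, the observation that the prescribed sign pattern makes $F^\ell$ restricted to $F_{(\ell-1)}(\mathcal{S})$ an honest affine-linear map into the initial coordinate $k$-plane, and the reduction of unboundedness to dimension preservation (an affine map of a $k$-dimensional affine hull with $k$-dimensional image is an affine isomorphism, hence preserves unboundedness) are all correct. The paper instead proves unboundedness directly by computing the dimension of the characteristic cone, so your reduction genuinely streamlines that part.

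The gap is in the step you yourself flag as the crux, dimension preservation, and neither of your routes closes it. First, invoking Proposition \ref{prop:AffIsomorph} is circular: in the paper's logical architecture that statement (and its appendix counterpart, Corollary \ref{cor:AffIsomorph}) sit downstream of Proposition \ref{prop:unbddimage}, which is exactly what you are trying to prove; it is moreover an ``almost all $\theta$'' statement, while the proposition at hand is asserted for \emph{every} generic, supertransversal $\theta$ with the given sign pattern. Second, your fallback --- that the relevant $k\times k$ block of $W^\ell$ has full rank because ``this is a generic condition cut out by the non-vanishing of a polynomial'' --- proves the wrong thing: you need to show that the stated hypotheses imply full rank for the given $\theta$, not that full rank holds off a measure-zero set. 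Genericity of the arrangement $\mathcal{A}^\ell$ in $\mathbb{R}^{n_{\ell-1}}$ does not by itself control the restrictions of $w_1,\ldots,w_k$ to the coordinate $k$-plane (a weight vector can be in general position in $\mathbb{R}^{n_{\ell-1}}$ yet project degenerately onto $\mathbb{R}^{n_{\ell-1}}_k$); it is the supertransversality of $F^\ell$ to the image complex that forces the restricted arrangement to be a generic arrangement of genuine hyperplanes in the $k$-plane. The paper does this work concretely: it extracts $k$ unbounded rays of $F_{(\ell-1)}(\mathcal{S})$ whose slopes $v_1,\ldots,v_k$ form a basis, checks that $w_i\cdot v_j\ge 0$ for $i\le k$ so the ReLU does not destroy those coordinates, and uses genericity together with supertransversality to conclude that the resulting $k\times k$ matrix has rank $k$. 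Some version of that argument is required to make your injectivity claim stick.
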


\begin{proof} The fact that $F_{(\ell)}(S)$ is a polyhedral set contained in $\mathbb{O}^{\geq 0}_k \subseteq \mathbb{R}^{n_{\ell}}$ is a consequence of Lemmas \ref{lem:ternlabelimage} and \ref{lem:polycpximagepolycpx}, so we need only prove that its image is unbounded, of dimension $k$. 

We will prove this by induction on $\ell$. When $\ell = 1$, $\mathcal{C}(F_\theta) = \mathcal{C}(A^1)$ for the generic hyperplane arrangement $\mathcal{A}^1$ associated to $F^1$. Since $\mathcal{S}$ is a {\em pointed} (since $n_1 \geq k$) unbounded polyhedral set of dimension $k$, its boundary contains unbounded $1$--cells (rays) $R_i = \{x_i + tv_i \,\, | \,\, t \geq 0\}$ based at $\{x_i\}$ with {\em slopes} $\{v_i\}$. Note that because $\mathcal{A}^1$ is generic, each $0$--cell $x_i$ is a $k$--fold intersection of distinct hyperplanes from $\mathcal{A}^1$, and each $R_i$ is contained in a $(k-1)$--fold intersection of distinct hyperplanes from $\mathcal{A}^1$. It follows that $\mathcal{S}$ has at least $k$ unbounded facets, each contained in a different hyperplane of $\mathcal{A}^1$. Reindex if necessary so $k$ of these are $H_1^1, \ldots, H_k^1$ and then flip co-orientations so that the sign sequence on $\mathcal{S}$ is $$s^1_{\mathcal{S}} = (\underbrace{+1, \ldots, +1}_{k}, \underbrace{-1, \ldots -1}_{n_1 - k}).$$

We have chosen the first $k$ neurons of $F^1$ to be ``on" on (the interior of) $\mathcal{S}$. This implies that if we let $w_1, \ldots, w_{n_1}$ be the weight vectors and $b_1, \ldots, b_{n_1}$ the biases associated to $F^1$, we have $w_i \cdot x_j +b_i\geq 0$ for all $1 \leq i \leq k$, with $w_i \cdot x_j +b_i = 0$ iff $x_j \in H_i$.

As to the slopes $\{v_i\}$ of the unbounded $1$--cells (rays) $\{R_i\}$, it is immediate that each $\{v_i\} \in \mbox{Cone}(S)$ (cf.~\citet{Schrijver}), 
and by reindexing if necessary we may assume $\{v_{1}, \ldots, v_{k}\}$ is a basis for $\mathbb{R}^{n_0 = k}$ since $\mathcal{S}$ has dimension $k$. Moreover, we claim that $w_i \cdot v_j \geq 0$ for all $1 \leq i, j \leq k$ with $w_i \cdot v_j = 0$ iff $R_j \subseteq H_i$. 

To see this claim, note first that if $w_i \cdot x_j + b_i> 0$, then $x_j \not\in H_i$, so $R_i \not\subset H_i$. We also see that $w_i \cdot v_j \geq 0$, since otherwise there would be some $t > 0$ for which $w_i \cdot (x_j + tv_j) = 0$, contradicting the unboundedness of $R_i$. But $w_i \cdot v_j \neq 0$ since this would imply $v_j \in H_{i_1} \cap \ldots \cap H_{i_{k-1}} \cap w_i^\perp$, which contradicts the genericity of $\mathcal{A}^1$. 

If $w_i\cdot x_j + b_i= 0$, then $x_j \in H_i$ and hence $R_j \subset H_i$ iff $w_i \cdot v_j = 0$, as desired.
 
Now let $$W = \left[\begin{array}{c} w_1^T\\ \vdots \\ w_{n_{\ell}}^T\end{array}\right]$$ be the matrix whose row vectors are the weight vectors 
 $\{w_i\}$ associated to $\mathcal{A}^1$ and let $$V := \left[\begin{array}{ccc}v_1 & \cdots & v_k\end{array}\right].$$ Then the $i$th column of $WV$ is precisely the pre-activation image of the vector $v_i$ in $\mathbb{R}^{n_1}$. Moreover, since $\mathcal{A}$ is generic, each $k\times k$ minor of $WV$ has rank $k$. We also just saw above that the initial $k \times k$ minor of $WV$ is unaffected by the ReLU activation, since all entries are $\geq 0$. 
 
 Since the post-activation rank of $WV$ is the dimension of $\mbox{Cone}(F^1(\mathcal{S}))$, we conclude that $F^1(\mathcal{S})$ is unbounded, of dimension $k$, as desired, and the base case is complete.
 
Now suppose $\mathcal{S}$ satisfies the assumptions and we know that $F_{(\ell -1)}(\mathcal{S}) \subseteq \mathbb{O}^{\geq 0}_k \subseteq \mathbb{R}^{n_\ell -1}$ is unbounded of dimension $k$. 

As in the proof of the base case, consider the unbounded rays $R_i$ of $F_{(\ell-1)}(\mathcal{S})$, their basepoints $x_i$, and their slopes $\{v_i\} \in \mbox{Cone}\left(F_{(\ell-1)}(\mathcal{S})\right)$. As before, assume that $v_1, \ldots, v_k$ gives a basis for the initial $k$--plane of $\mathbb{R}^{n_{\ell-1}}$ and let $w_1, \ldots, w_{n_\ell}$ be the weight vectors of $F^\ell$, and let $W$ be the matrix whose rows are $w_i^T$ and $V$ the matrix whose columns are $v_j$. By exactly the same argument as before, we see that the initial $k \times k$ minor of $WV$ is unaffected by the ReLU activation, and since $F^\ell$ is generic and super-transversal to all previous layer maps, each $k \times k$ minor of $WV$ is rank $k$, which--as before--implies that $F_{(\ell)}(\mathcal{S}) = F^\ell(F_{(\ell-1)}(\mathcal{S}))$ is also unbounded, of dimension $k$, as desired.
 \color{black}

\end{proof}

\begin{corollary} \label{cor:AffIsomorph} Let $F_\theta: \mathbb{R}^{n_0} \rightarrow \mathbb{R}^{n_d}$ be a generic, supertransversal ReLU network map of architecture $(n_0=k, n_1, \ldots, n_d)$ with $n_\ell \geq k$ for all $\ell$, and let $\mathcal{S}$ be a non-empty unbounded polyhedral set of dimension $k$ in the canonical polyhedral complex $\mathcal{C}(F_\theta)$. If the sign sequence $s_S = (s^1, \ldots, s^{n_d})$ associated to $\mathcal{S}$ satisfies 
$s^\ell = (\underbrace{+1, \ldots, +1}_{k}, \underbrace{-1, \ldots -1}_{n_\ell - k})$  for all $i \leq \ell$, then $F_{(\ell)}$ restricted to the (non-empty) interior of $\mathcal{S}$ is a rank $k$ affine-linear map, hence a homeomorphism onto its image for all $i \leq \ell$.
\end{corollary}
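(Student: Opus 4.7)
The plan is to leverage Proposition~\ref{prop:unbddimage} directly; the substantive content (a full-rank $k \times k$ minor surviving ReLU at each layer) has already been established there, and only a short unpacking remains. First I would observe that, since $\mathcal{S}$ is a $k$--dimensional cell of the canonical polyhedral complex $\mathcal{C}(F_\theta)$ and $n_0 = k$, its interior is a non-empty open subset of $\mathbb{R}^{n_0}$ on which the ternary activation pattern is constant. Consequently, for every layer index $\ell' \leq \ell$ satisfying the sign-sequence hypothesis, the layer map $F^{\ell'}$ restricts to a genuine affine-linear map on $F_{(\ell'-1)}(\mathrm{int}(\mathcal{S}))$ (each ReLU is either the identity or zero according to the fixed sign of its pre-activation), so the composition $F_{(\ell)}|_{\mathrm{int}(\mathcal{S})}$ is affine-linear.

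Next, I would invoke Proposition~\ref{prop:unbddimage} to conclude that $F_{(\ell)}(\mathcal{S})$ is polyhedral of dimension $k$. Because $\mathcal{S}$ equals the closure of its interior and the affine-linear extension of $F_{(\ell)}$ to the closed cell is continuous, the affine hull of $F_{(\ell)}(\mathrm{int}(\mathcal{S}))$ coincides with that of $F_{(\ell)}(\mathcal{S})$ and therefore has dimension $k$. Hence the linear part of $F_{(\ell)}|_{\mathrm{int}(\mathcal{S})}$ has rank $k$, establishing that the restriction is a rank $k$ affine-linear map. (Equivalently, one may simply cite the proof of Proposition~\ref{prop:unbddimage}: the initial $k \times k$ minor of the matrix product $WV$ at each layer is unaffected by ReLU and has rank $k$, which is exactly the statement that the composed linear part has rank $k$.)

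Finally, any rank $k$ affine-linear map from $\mathbb{R}^{k}$ into $\mathbb{R}^{n_\ell}$ is injective and admits an affine-linear (hence continuous) inverse defined on its image. Its restriction to the open set $\mathrm{int}(\mathcal{S})$ is therefore a continuous bijection onto $F_{(\ell)}(\mathrm{int}(\mathcal{S}))$ with continuous inverse---i.e., a homeomorphism onto its image, as claimed. There is no genuine obstacle here; the only subtle point worth verifying is that the image of the interior has the same affine hull as the image of $\mathcal{S}$, which follows immediately from continuity of the affine-linear map on the closed cell together with the fact that $\mathcal{S}$ is the closure of $\mathrm{int}(\mathcal{S})$.
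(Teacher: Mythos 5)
Your proposal is correct and follows essentially the same route as the paper, which states this corollary without a separate proof precisely because it is meant to follow directly from Proposition~\ref{prop:unbddimage}: constancy of the activation pattern on $\mathrm{int}(\mathcal{S})$ makes $F_{(\ell)}$ affine there, the proposition (equivalently, the full-rank initial $k\times k$ minor of $WV$ surviving ReLU at each layer) gives rank $k$, and a rank-$k$ affine map on a $k$-dimensional domain is a homeomorphism onto its image. Your extra care about the affine hull of the image of the interior versus the closed cell is a harmless, correct refinement of the same argument.
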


\begin{proposition} \label{prop:perturb} Let $\mathcal{S}$ be a non-empty unbounded polyhedral set of dimension $k$ in $\mathbb{O}^{\geq 0}_k \subseteq \mathbb{R}^{n_{\ell-1}}$, viewed as the domain of a polyhedral complex that also contains all of its faces. Suppose $H \subseteq \mathbb{R}^{n_{\ell - 1}}$ is a hyperplane for which $H \pitchfork_c \mathcal{S} \neq \emptyset$. For any $n_\ell \geq k$, we can always find $n_\ell$ hyperplanes $H_1, \ldots, H_{n_\ell}$ satisfying:
\begin{enumerate}
	\item $H_i \pitchfork_c \mathcal{S} \neq \emptyset$,
	\item the restricted hyperplane arrangement $\mathcal{K} = \{K_1, \ldots K_{n_\ell}\}$ is generic, and 
	\item the bounded subcomplex of the restricted hyperplane arrangement $\{K_1, \ldots K_{n_\ell}\}$ is non-empty and contained in the interior of $\mathcal{S}$. 
\end{enumerate} 
\end{proposition}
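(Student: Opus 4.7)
The essential idea is to construct all $n_\ell$ hyperplanes as small, generic perturbations of a single central arrangement passing through a fixed interior point of $\mathcal{S}$, so that the bounded subcomplex of the resulting arrangement is concentrated near that point, hence inside $\mathrm{int}(\mathcal{S})$.

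Since $\mathcal{S}$ has dimension $k$ inside the $k$-dimensional subspace $\mathbb{R}^{n_{\ell-1}}_k$, its relative interior is nonempty, so I would first pick a point $p \in \mathrm{int}(\mathcal{S})$ together with an open ball $B \subseteq \mathrm{int}(\mathcal{S})$ about $p$. Next, I would choose normal vectors $v_1, \dots, v_{n_\ell} \in \mathbb{R}^{n_{\ell-1}}$ supported on the first $k$ coordinates such that any $k$ of them are linearly independent (a generic condition), together with small generic scalars $\epsilon_1, \dots, \epsilon_{n_\ell}$, and set
\begin{equation*}
H_i := \bigl\{x \in \mathbb{R}^{n_{\ell-1}} : v_i\cdot(x-p) = \epsilon_i\bigr\}, \qquad K_i := H_i \cap \mathbb{R}^{n_{\ell-1}}_k.
\end{equation*}
The generic choice of $(v_i, \epsilon_i)$ immediately yields genericity of $\mathcal{K} = \{K_1, \dots, K_{n_\ell}\}$ in $\mathbb{R}^{n_{\ell-1}}_k \cong \mathbb{R}^k$, settling condition (ii).

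For $n_\ell \geq k$, every $k$-fold intersection $K_{i_1} \cap \cdots \cap K_{i_k}$ is a single point lying within distance $O(\max_i |\epsilon_i|/\min_i\|v_i\|)$ of $p$, so for $|\epsilon_i|$ sufficiently small all vertices of $\mathcal{K}$, and thus the entire bounded subcomplex, lie inside $B \subseteq \mathrm{int}(\mathcal{S})$. Non-emptiness of the bounded subcomplex is then automatic from the existence of at least one such vertex (even when $n_\ell = k$, the single $k$-fold intersection point supplies one), delivering (iii). For (i), each $K_i$ participates in at least one bounded vertex (a $(k-1)$-fold intersection with other $K_j$'s), so $K_i \cap \mathrm{int}(\mathcal{S})$ is nonempty; transversality to the top cell of $\mathcal{S}$ follows from $v_i \notin v_i^{\perp}$, and transversality to each proper face of $\mathcal{S}$ is a further generic condition on $(\epsilon_i)$ ruling out finitely many degenerate configurations.

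The main delicacy lies in showing that these three constraints on the perturbation parameters can be satisfied simultaneously: smallness (to localize the bounded subcomplex inside $B$), genericity of $\mathcal{K}$ itself, and generic placement relative to the finite face structure of $\mathcal{S}$. Each is an open condition on $(\epsilon_i)$ near the origin, and the latter two are also dense, so their simultaneous realization follows from a standard Baire-category argument on the (open) set of sufficiently small $(\epsilon_i)$.
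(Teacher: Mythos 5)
Your construction is correct as a proof of the literal statement, but it takes a genuinely different route from the paper's, and the difference matters for how the proposition is used. The paper's proof sets $H_1 := H$, picks a point $p \in H \cap \mathrm{int}(\mathcal{S})$, takes $H_2, \ldots, H_{n_\ell}$ to be slight perturbations of $H$ all passing through $p$, and then perturbs further to restore transversality, genericity of the restricted arrangement, and containment of the bounded subcomplex in a small neighborhood of $p$. You instead discard $H$ entirely and build all $n_\ell$ hyperplanes from scratch through a neighborhood of an arbitrary interior point, with independently chosen generic normals. The fact that your argument never invokes the hypothesis $H \pitchfork_c \mathcal{S} \neq \emptyset$ is the tell: the proposition is invoked in the proof of Theorem \ref{thm:TPIC-formal} precisely to produce ``sufficiently small perturbations of $H$'' where $H$ is a carefully chosen high-bias \emph{positive-axis} hyperplane, and it is the proximity of every $H_i$ to that particular $H$ (via Lemma \ref{lem:intunbddcell}) that guarantees each $H_i$ meets the image of every relevant unbounded cell, which is what drives TPIC. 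Hyperplanes with arbitrary generic normals supported on the first $k$ coordinates need not be positive-axis, need not form a generic arrangement in the ambient $\mathbb{R}^{n_{\ell-1}}$ when $n_\ell > k$, and would not support the downstream argument. So if you intend this as a drop-in replacement, you should anchor the construction at $H$ as the paper does; as a freestanding existence statement your argument is fine. Two minor points: the distance of a $k$-fold intersection point from $p$ is controlled by the smallest singular value of the matrix of normals for that $k$-subset, not by $\min_i \|v_i\|$ (harmless, since the normals are fixed before $\epsilon$ is sent to $0$); and no Baire-category argument is needed to intersect finitely many open dense conditions.
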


\begin{proof} 

Let $H_1 := H$. Choose a point $p \in H \cap \mathcal{S}$ in the (non-empty) interior of $\mathcal{S}$, and a small open neighborhood $N(p) \subseteq \mathcal{S}$. We can pick slight (non-generic) perturbations $H_2, \ldots, H_{n_\ell}$ of $H$ so that $p \in H_i$ for all $i$. Since transverse intersection is an open condition, we can by further perturbation insure that  $H_2, \ldots, H_{n_\ell}$ still intersect $\mathcal{S}$ transversely. Because generic hyperplane arrangements are dense and open in parameter space, we can by further perturbation insure that the restricted hyperplane arrangement $\{K_1, \ldots, K_{n_\ell}\}$ is generic in the initial coordinate $k$--plane $\mathbb{R}^{n_\ell}_k$ and that the bounded subcomplex of this arrangement is contained in $N(p)$, as desired.
\end{proof}

\begin{lemma} \label{lem:intunbddcell} Let $F: \mathbb{R}^{n_{\ell -1}} \rightarrow \mathbb{R}^{n_\ell}$ be a generic ReLU neural network layer map with associated co-oriented generic hyperplane arrangement $\mathcal{A}$, and let $\mathcal{C}(F) = \mathcal{C}(\mathcal{A})$ be the associated polyhedral decomposition of $\mathbb{R}^n$. For almost all {\em positive} weight vectors $\vec{w} \in \mathbb{R}^{n_\ell}$ there exists a negative bias $b \in \mathbb{R}$ such that the corresponding positive-axis hyperplane: \[H := \{\vec{x} \in \mathbb{R}^{n_\ell} \,\,|\,\, \vec{w}\cdot \vec{x} + b = 0\}.\] has non-empty transverse intersection with $F(C)$ for each unbounded $n_{\ell-1}$--cell $C$ in $\mathcal{C}(\mathcal{A})$ whose ternary labeling has dimension $\geq 1$.
\end{lemma}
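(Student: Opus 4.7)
My plan is to show that for a generic layer map $F$, every unbounded $n_{\ell-1}$-cell $C$ with $\dim s_C \geq 1$ has image $F(C)$ that is unbounded inside the face $\mathbb{O}^{\geq 0}_{s_C}$ of the nonnegative orthant, and then to pick a positive weight vector $\vec{w}$ together with a sufficiently negative bias $b$ so that the associated positive-axis hyperplane $H = \{\vec{x} \in \mathbb{R}^{n_\ell} : \vec{w} \cdot \vec{x} + b = 0\}$ transversely meets each such $F(C)$. The central geometric fact driving this is that $F(C)$ will contain an unbounded ray in a nonnegative direction $u_C$, so $\vec{w} \cdot u_C > 0$ automatically, and $H$ can be pushed toward infinity (by sending $-b \to \infty$) while still crossing each such ray.

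\textbf{Structure of $F(C)$ and the main obstacle.} On $C$, $F$ restricts to the affine map $x \mapsto M_C x + c_C$, where $M_C$ is obtained from $W$ by zeroing out rows indexed by off-neurons. By Lemmas \ref{lem:ternlabelimage} and \ref{lem:polycpximagepolycpx}, $F(C)$ is a polyhedral set contained in $\mathbb{O}^{\geq 0}_{s_C}$, and generic position of the weight vectors gives $\dim F(C) = \min(\dim s_C, n_{\ell-1}) \geq 1$. The characteristic cone of $F(C)$ equals $M_C(\mbox{Cone}(C))$, where $\mbox{Cone}(C) = \{v : w_i \cdot v \geq 0 \text{ for on } i, w_i \cdot v \leq 0 \text{ for off } i\}$ is nontrivial because $C$ is unbounded, and for any $v \in \mbox{Cone}(C)$ the vector $M_C v$ has on-components $w_i \cdot v \geq 0$ and off-components $0$, so it lies in $\mathbb{O}^{\geq 0}_{s_C}$. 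The \emph{main obstacle} is to argue that, for generic parameters, $\mbox{Cone}(C) \not\subseteq \ker M_C = \bigcap_{i \text{ on}} w_i^\perp$; this guarantees the existence of $v_C \in \mbox{Cone}(C)$ with $u_C := M_C v_C$ a nonzero element of $\mathbb{O}^{\geq 0}_{s_C}$, yielding the unbounded ray $F(x_0) + t \cdot u_C$ inside $F(C)$. Since $\mbox{Cone}(C)$ is the closure of a cell in the generic central arrangement $\{w_i^\perp\}$, the "bad" containment would force $\mbox{Cone}(C)$ to lie entirely in a proper linear subspace aligned with the on-hyperplanes, a degenerate alignment that generic position of $\{w_i\}$ excludes.

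\textbf{Choice of $\vec{w}$ and $b$.} Let $\mathcal{U}$ denote the (finite) collection of unbounded $n_{\ell-1}$-cells $C$ with $\dim s_C \geq 1$, and let $\{u_C\}_{C \in \mathcal{U}}$ be the ray directions produced above. Any componentwise-positive $\vec{w}$ satisfies $\vec{w} \cdot u_C > 0$, so $\vec{w} \cdot \vec{x}$ attains values in $[\alpha_C, \infty)$ on $F(C)$, where $\alpha_C := \inf_{\vec{x} \in F(C)} \vec{w} \cdot \vec{x} \geq 0$. Impose the finitely many measure-zero conditions on $\vec{w}$ that it not be orthogonal to the affine hull of any face of any $F(C)$ for $C \in \mathcal{U}$; the set of remaining positive $\vec{w}$ is full measure in the positive orthant of $\mathbb{R}^{n_\ell}$. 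For any such $\vec{w}$, choose $b$ with $-b > \max_{C \in \mathcal{U}} \alpha_C$ and $-b$ avoiding the finitely many values at which $H$ would meet some face of some $F(C)$ non-transversely. Then $H \cap F(C) \neq \emptyset$ and $H \pitchfork_c F(C)$ for every $C \in \mathcal{U}$, completing the proof modulo the technical claim flagged above.
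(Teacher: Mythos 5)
Your proposal is correct and follows essentially the same route as the paper: both arguments produce, for each relevant unbounded cell $C$, an unbounded ray of $F(C)$ in the non-negative orthant with nonzero direction (the paper by citing the argument of Proposition \ref{prop:unbddimage}, you by the characteristic-cone computation $\mbox{Cone}(F(C)) = M_C(\mbox{Cone}(C)) \not\subseteq \{0\}$), and then observe that a positive weight vector pairs strictly positively with that direction, so a positive-axis hyperplane of sufficiently negative bias must cross every such ray. The only cosmetic difference is that you secure transversality by a generic choice of $\vec{w}$ and $b$ up front, whereas the paper perturbs $\vec{w}$ after the fact.
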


\begin{proof} 
By an argument analogous to the one in the proof of Proposition \ref{prop:unbddimage}, we see that the image of any unbounded polyhedral set $C$ in $\mathcal{C}(\mathcal{A})$ of dimension $d\geq 1$ whose associated sign sequence $s_C$ has dimension $\mbox{dim}(s_C)$ (Definition \ref{defn:ternlabeldim}) is an unbounded polyhedral set of dimension $\mbox{min}(d, \mbox{dim}(s_C))$, hence contains an unbounded ray in the non-negative orthant $\mathbb{O}^{\geq 0}$. But for every ray $R$ contained in the non-negative orthant and every positive weight vector there exists a sufficiently high negative bias such that the corresponding sufficiently-high bias positive-axis hyperplane intersects $R$. Since transverse intersection is an open condition, we can perturb $w$ slightly so this intersection is transverse.
\end{proof}

\begin{lemma} \label{lem:TransByCvxCone} Let $B \subseteq \mathbb{R}^n$ be a bounded set, and let $S \subseteq \mathbb{R}^n$ be any unbounded pointed polyhedral set of dimension $n$. There exists $\vec{v} \in \mbox{Cone}(S)$ such that the translate of $B$ by $\vec{v}$ is in $S$. That is, there exists $\vec{v} \in \mbox{Cone}(S)$ such that $b+ \vec{v} \in S$ for all $b \in B$.
\end{lemma}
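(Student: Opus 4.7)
The plan is to combine the Minkowski decomposition $S = P_B + \mbox{Cone}(S)$ (Theorem 8.5 of \citet{Schrijver}, applicable because $S$ is pointed) with the fact that any interior direction of $\mbox{Cone}(S)$, scaled large enough, will push the bounded set $B$ inside a shifted copy of $\mbox{Cone}(S)$ contained in $S$.

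First, I would fix a vertex $p_0$ of $S$ (available because $S$ is pointed) and an interior point $\vec{u}_0 \in \mbox{int}(\mbox{Cone}(S))$, together with $\epsilon > 0$ such that $\overline{B_{\epsilon}(\vec{u}_0)} \subseteq \mbox{Cone}(S)$. Positive-scaling invariance of $\mbox{Cone}(S)$ then gives $\overline{B_{t\epsilon}(t\vec{u}_0)} \subseteq \mbox{Cone}(S)$ for every $t \geq 0$, since any point in $\overline{B_{t\epsilon}(t\vec{u}_0)}$ is $t$ times a point in $\overline{B_{\epsilon}(\vec{u}_0)}$.

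Next, set $R := \sup_{b \in B}\|b - p_0\|$ (finite because $B$ is bounded) and pick any $t \geq R/\epsilon$. Taking $\vec{v} := t\vec{u}_0$, for each $b \in B$ we have $\|(b - p_0) + \vec{v} - t\vec{u}_0\| = \|b - p_0\| \leq R \leq t\epsilon$, so $(b - p_0) + \vec{v} \in \overline{B_{t\epsilon}(t\vec{u}_0)} \subseteq \mbox{Cone}(S)$. Hence
\[
b + \vec{v} = p_0 + \bigl[(b - p_0) + \vec{v}\bigr] \in p_0 + \mbox{Cone}(S) \subseteq S,
\]
which gives $B + \vec{v} \subseteq S$. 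The vector $\vec{v} = t\vec{u}_0$ itself lies in $\mbox{Cone}(S)$ because $\mbox{Cone}(S)$ is closed under positive scaling.

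The main technical point is confirming that $\mbox{Cone}(S)$ has non-empty interior, on which the scaling argument above relies. An $n$-dimensional pointed polyhedral set need not have a top-dimensional recession cone in general (e.g.\ an infinite slab), so this is where one uses either a strengthened hypothesis on $S$ or the way $S$ arises in the preceding results as an unbounded top-dimensional cell in the canonical polyhedral complex of a generic supertransversal ReLU map; the argument of Proposition \ref{prop:unbddimage} already produces a top-dimensional cone of recession directions in that setting. Once this is granted, the remainder of the proof is a routine scaling estimate.
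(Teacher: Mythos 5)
Your argument is correct modulo the full-dimensionality of $\mbox{Cone}(S)$, and it is in essence the same argument as the paper's: both push $B$ far into $S$ along an interior direction of the recession cone. The paper packages this as the claim that $S + (-\mbox{Cone}(S)) = \mathbb{R}^n$, deduced from the existence of a basis $v_1,\ldots,v_n \in \mbox{Cone}(S)$, and then translates a polytope containing $B$; your version is more direct and quantitative (the explicit choice $t \geq R/\epsilon$), and it does not actually need the Minkowski decomposition you invoke at the start, only the containment $p_0 + \mbox{Cone}(S) \subseteq S$, which is immediate from the definition of the characteristic cone. The one substantive issue is the caveat you raise at the end, and you are right to raise it: the hypotheses ``unbounded, pointed, polyhedral, of dimension $n$'' do not by themselves imply that $\mbox{Cone}(S)$ has dimension $n$ --- the half-slab $\{(x,y) : 0 \leq y \leq 1,\ x \geq 0\} \subseteq \mathbb{R}^2$ satisfies all of them, has a one-dimensional recession cone, and genuinely defeats the lemma's conclusion when $B$ is a ball of radius $2$. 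The paper's own proof asserts this full-dimensionality parenthetically (``as is the case for any unbounded polyhedral set $S$ of dimension $n$'') without justification, so the gap you flag is inherited from, not introduced by, your argument; it is harmless in context because the sets to which the lemma is applied arise as unbounded top-dimensional cells produced via generic arrangements and Proposition~\ref{prop:unbddimage}, whose recession cones are full-dimensional. In short: same approach, executed more carefully, with an honest flag of a hypothesis that the lemma's statement ought to include explicitly.
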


\begin{proof} Let $P$ be any polytope containing $B$ and let $V$ be its (finite) set of $0$--faces, so $P$ is the convex hull of $V$. 


It suffices to prove that when $\mbox{Cone}(S)$ is non-empty and has dimension $n$ (as is the case for any unbounded polyhedral set $S$ of dimension $n$), the set $S + -\mbox{Cone}(S)$ is all of $\mathbb{R}^n$. This will imply that $P$, being the convex hull of finitely many points, can be realized as $P' + w$ for $P'$ the convex hull of finitely many points in $S$ and $w \in -\mbox{Cone}(S)$.

But the fact that $\mathbb{R}^n = S + -\mbox{Cone}(S)$ follows immediately from the fact that there exist vectors $v_1, \ldots, v_n \in \mbox{Cone}(S)$ that form a basis for $\mathbb{R}^n$.
\color{black}
\end{proof}

The following result is well-known (cf. \citet{Stanley}), but we include a proof here for completeness.

\begin{lemma} \label{lem:genericfullmeasure} Let $\mathbb{R}^D$ be the parameter space for a feedforward ReLU network architecture. There exists an algebraic set $S$ of positive codimension (and Lebesgue measure $0$) such that every parameter $\theta \in \mathbb{R}^D \setminus S$ is generic.
\end{lemma}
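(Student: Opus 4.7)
The plan is to exhibit the non-generic set as a finite union of zero sets of polynomial equations in the parameter coordinates, and then to observe that any proper algebraic subset of $\mathbb{R}^D$ has positive codimension.

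First, I would unpack the genericity condition layer by layer. Fix a layer index $\ell$ and a subset of indices $I = \{i_1, \ldots, i_p\} \subseteq \{1, \ldots, n_\ell\}$. The hyperplane $H^\ell_{i}$ is the zero set of the affine-linear functional $z^\ell_i(x) = w^\ell_i \cdot x + b^\ell_i$, where $w^\ell_i$ is the $i$th row of $W^\ell$ and $b^\ell_i$ is the $i$th coordinate of $b^\ell$. The intersection $H^\ell_{i_1} \cap \cdots \cap H^\ell_{i_p}$ fails to be an affine subspace of dimension $n_{\ell-1} - p$ in exactly two ways: either (a) $p \leq n_{\ell-1}$ but the $p \times n_{\ell-1}$ matrix with rows $w^\ell_{i_1}, \ldots, w^\ell_{i_p}$ has rank strictly less than $p$; or (b) $p > n_{\ell-1}$ but the system of $p$ affine equations is consistent. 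Case (a) is the simultaneous vanishing of all $p \times p$ minors of the weight submatrix, i.e., a collection of polynomial equations in the entries of $W^\ell$. Case (b) is equivalent to the vanishing of all $(n_{\ell-1}+1) \times (n_{\ell-1}+1)$ minors of the augmented matrix formed from $p$ of the rows $(w^\ell_{i_j}, b^\ell_{i_j})$ together with enough rows to produce an $(n_{\ell-1}+1)$-square submatrix, which is again a polynomial condition on the entries of $W^\ell$ and $b^\ell$. In either case, the failure of genericity for the subset $I$ cuts out an algebraic subset $S_{\ell, I} \subseteq \mathbb{R}^D$.

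Next, I would take $S := \bigcup_{\ell, I} S_{\ell, I}$, where the union is over the finitely many layers $\ell$ and finitely many subsets $I$ of neurons within a layer. A finite union of algebraic sets is algebraic, so $S$ is itself an algebraic subset of $\mathbb{R}^D$. By construction, the complement $\mathbb{R}^D \setminus S$ consists exactly of parameters whose every layer map is generic in the sense of the paper, and hence of generic parameters.

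Finally, I would verify that $S$ is a proper subset of $\mathbb{R}^D$. This is immediate from the well-known fact (see e.g.\ \citet{Stanley}) that generic hyperplane arrangements exist in any ambient dimension: for each layer, choose the rows of $W^\ell$ to be a set of $n_\ell$ vectors in general linear position in $\mathbb{R}^{n_{\ell-1}}$ (any $\min(n_\ell, n_{\ell-1})$ of them are linearly independent) and choose the biases $b^\ell$ generically, and the resulting arrangement $\mathcal{A}^\ell$ is generic. Combining such choices across layers yields a parameter $\theta \notin S$. Since $S$ is a proper algebraic subset of $\mathbb{R}^D$, it has positive codimension and therefore Lebesgue measure zero.

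The only mildly delicate step is the polynomial description of case (b) when $p > n_{\ell-1}$; the rest is routine. I do not anticipate a real obstacle — the entire argument is a standard genericity-is-Zariski-open statement applied separately in each layer.
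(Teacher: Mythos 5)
Your proposal is correct and follows essentially the same route as the paper: both express the failure of genericity as the simultaneous vanishing of determinantal minors of the (augmented) weight matrices, take a finite union over the finitely many layers and neuron subsets, and conclude that the resulting proper algebraic subset of $\mathbb{R}^D$ has positive codimension and Lebesgue measure zero (you are in fact more explicit than the paper about the overdetermined case $p > n_{\ell-1}$ and about exhibiting a generic parameter to show $S$ is proper). One small caveat on your ``delicate step'': for a fixed subset $I$ with $p > n_{\ell-1}$, consistency is \emph{not} equivalent to the vanishing of all $(n_{\ell-1}+1)\times(n_{\ell-1}+1)$ minors of the augmented matrix --- it is only contained in that vanishing locus (the excess consists of systems whose coefficient matrix has rank below $n_{\ell-1}$) --- but this does no harm, since those excess parameters already lie in the sets $S_{\ell,I'}$ attached to smaller subsets, so the union $S$ is still exactly the non-generic locus.
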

\begin{proof}
A parameter is generic iff each hyperplane arrangement associated to each layer map is generic. Since there are finitely many layer maps, and the union of finitely many algebraic sets is an algebraic set, we need only prove that the statement of the lemma holds for a parameter associated to a single layer map.

Classical results in linear algebra relating the solution sets of homogeneous and inhomogeneous linear systems tell us that an arrangement of $n_\ell$ affine hyperplanes in $\mathbb{R}^{n_{\ell-1}}$ is generic iff the associated bias-free (central) hyperplane arrangement has the property that every $k$--fold intersection of (non-affine) hyperplanes intersects in a linear subspace of codimension $k$ (where a linear subspace of codimension $>n_\ell$ is by definition empty).

The rank-nullity theorem tells us that this happens iff every weight matrix associated to a $k$--fold subset of the central arrangement has full rank, $\min\{k, n_{\ell-1}\}$. Noting that the rank of a $k \times n_{\ell - 1}$ matrix is the maximum $m$ for which some $m \times m$ minor has nonzero determinant, and the determinant is a polynomial equation in the matrix entries, we conclude that away from an algebraic set the parameters are generic. Since a non-empty algebraic set always has positive codimension (and hence Lebesgue measure $0$), the conclusion follows.
\end{proof}

\begin{corollary} \label{cor:genericopen} Let $\theta_0 \in \mathbb{R}^D$ be a generic parameter. There exists an open neighborhood $U$ of $\theta_0$ such that every parameter $\theta \in U$ is generic.
\end{corollary}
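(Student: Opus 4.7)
The plan is to deduce this corollary directly from Lemma \ref{lem:genericfullmeasure} by invoking the standard fact that algebraic sets are closed in the Euclidean topology, so their complements are open. The content of the corollary is essentially a reformulation of the preceding lemma.

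First I would invoke Lemma \ref{lem:genericfullmeasure} to identify the set of non-generic parameters with (a subset of) an algebraic set $S \subseteq \mathbb{R}^D$, meaning a set cut out by the simultaneous vanishing of finitely many polynomial equations in the parameter coordinates. Concretely, for each layer $\ell$ and each $k$-subset of weight vectors in that layer (with $k \leq \min\{n_\ell, n_{\ell-1}\}$), the failure of the associated $k \times k$ minor to have rank $k$ is expressed by the vanishing of a finite collection of minor-determinant polynomials. Taking the union of these finitely many algebraic conditions across all layers and all subsets of weight vectors yields the desired algebraic set $S$, outside of which every parameter is generic.

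Next I would observe that polynomials are continuous functions $\mathbb{R}^D \to \mathbb{R}$, so the zero set of any finite collection of polynomials is closed in the standard (Euclidean) topology on $\mathbb{R}^D$. A finite union of closed sets is closed, so $S$ is closed in $\mathbb{R}^D$, and its complement $\mathbb{R}^D \setminus S$ is open.

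Finally, since $\theta_0$ is generic, it lies in the open set $\mathbb{R}^D \setminus S$, so by definition of openness there exists an open neighborhood $U \ni \theta_0$ with $U \subseteq \mathbb{R}^D \setminus S$. Every parameter in $U$ is therefore generic, which is exactly the claim. There is no real obstacle here; the only thing to be mildly careful about is that Lemma \ref{lem:genericfullmeasure} is phrased via ``there exists an algebraic set $S$ of positive codimension such that $\mathbb{R}^D \setminus S$ consists of generic parameters,'' so one should note that the generic locus contains the open set $\mathbb{R}^D \setminus S$ (it need not equal it, but this direction is all that is required).
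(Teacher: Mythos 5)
Your proposal is correct and is essentially the paper's own argument, which is simply the observation that an algebraic set is closed and hence has open complement; you have just spelled out the details (continuity of the minor-determinant polynomials, finite unions of closed sets). One small remark: the direction that actually matters is that your constructed $S$ is exactly the non-generic locus (so that the generic $\theta_0$ is guaranteed to lie in the open set $\mathbb{R}^D \setminus S$), which your explicit description of $S$ in the second paragraph does ensure, even though your closing caveat emphasizes the opposite containment.
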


\begin{proof}
An algebraic set is closed, hence its complement is open.
\end{proof}
\color{black}

\section{Linear Regions Assumption (LRA) and Transverse Parwise Intersection Condition (TPIC)}\label{sec:LRA}

\begin{definition}
A {\em linear region} of a continuous, finitely piecewise linear function $f:\mathbb{R}^{n_0} \to \mathbb{R}^{n_d}$ is any maximal connected set $S \subset \mathbb{R}^{n_0}$ such that the restriction of $f$ to $S$ is affine-linear.
\end{definition} 

Note that each linear region is a closed set.

\begin{definition} \label{def:LRA}
 Let $T \subseteq \mathbb{R}^{n_0}$ be a union of the closures of $\pm$ activation regions for a   ReLU network map $F_\theta: \mathbb{R}^{n_0} \rightarrow \mathbb{R}^{n_d}$. 
 $F_\theta$  is said to satisfy the {\em Linear Regions Assumption} (LRA)  on $T$  if each linear region of the restriction of $F_\theta$ to $T$  is its intersection with the closure of a single $\pm$-activation region of $F_\theta$.
\end{definition}


The following Lemma is an immediate consequence of the definition of LRA. (Compare Remark \ref{rem:GenSupTransCase1skeleton}.)

\begin{lemma} \label{lem:1skeletonforCombStabLRA}
For a parameter that satisfies LRA  on $T \subseteq \mathbb{R}^n_0$ as above, the  intersection of $T$ with the \color{black} union of the bent hyperplanes coincides with the domain of the $(n_0-1)$-skeleton of the canonical polyhedral complex. 
\end{lemma}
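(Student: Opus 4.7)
The plan is to prove the equality by establishing the two containments separately, with only the reverse one requiring LRA. For $T \cap |(n_0-1)\text{-skel}(\mathcal{C}(F_\theta))| \subseteq T \cap \bigcup_{\ell,i} \hat{H}^\ell_i$, I would argue directly from the inductive construction $\mathcal{C}(F_{(\ell)}) = \{S \cap F_{(\ell-1)}^{-1}(Y)\}$: every cell of dimension strictly less than $n_0$ arises by intersecting some cell of a previous complex with the preimage under $F_{(\ell-1)}$ of a positive-codimension cell of $R^\ell$, and such cells of $R^\ell$ lie in the hyperplanes of $\mathcal{A}^\ell$. Pulling back through $F_{(\ell-1)}$ places the whole cell inside a bent hyperplane $\hat{H}^\ell_i = F_{(\ell)}^{-1}(H^\ell_i)$, so iterating this observation gives the inclusion; LRA plays no role here.

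Conversely, given $x \in T \cap \hat{H}^\ell_i$, the vanishing of $z_{(\ell),i}$ at $x$ forces the ternary label $s_x$ to have a zero entry, so $x$ cannot lie in any $\pm$-activation region. Since $T$ is by hypothesis a union of closures of $\pm$-activation regions, $x$ must lie on the topological boundary of at least one such closure. Here LRA enters: it identifies each linear region of $F_\theta$ restricted to $T$ with the intersection of $T$ with the closure of a single $\pm$-activation region, and combined with the first inclusion this forces the top-dimensional cells of $\mathcal{C}(F_\theta)$ inside $T$ to coincide exactly with those closures. Their topological boundaries therefore lie in $T \cap |(n_0-1)\text{-skel}|$, completing the second containment.

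The main conceptual obstacle — really the only step that is not a definition-chase — is understanding why LRA is needed in the reverse inclusion. Without it, two adjacent $\pm$-activation regions can coalesce into a single linear region (exactly the phenomenon highlighted by mechanism \eqref{mech:collapse} in Section \ref{ss:mechanisms}); a strand of bent hyperplane would then sit inside that merged linear region, still belonging to the $(n_0-1)$-skeleton of $\mathcal{C}(F_\theta)$ but no longer being a boundary point of any linear region of $F_\theta|_T$. LRA is tailored to rule out exactly this pathology, after which the equality reduces to the standard observation that the codimension-one skeleton of a polyhedral complex covering its support is the union of the topological boundaries of its top-dimensional cells.
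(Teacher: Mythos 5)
The paper offers no argument for this lemma at all -- it is dismissed as ``an immediate consequence of the definition of LRA'' with a pointer to Remark~\ref{rem:GenSupTransCase1skeleton} -- so your two-containment write-up is already more explicit than the source. Your forward containment is correct: any point of a cell of dimension below $n_0$ has a zero entry in its ternary label (by the inductive construction of $\mathcal{C}(F_{(\ell)})$, or directly from Masden's correspondence between cell dimension and the number of zeros in $s_C$), hence lies on the corresponding bent hyperplane; and your reverse containment reaches the right conclusion by the right route: a point of $T$ on a bent hyperplane has a zero label, so it lies in $\overline{R}\setminus R$ for some $\pm$-activation region $R$, and the boundary of $\overline{R}$ is a union of proper faces, hence sits in the $(n_0-1)$-skeleton.

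Where I would push back is on your diagnosis of the role of LRA, which you call the only non-definitional step. What the reverse containment actually uses is (a) the hypothesis built into Definition~\ref{def:LRA} that $T$ is a union of closures of $\pm$-activation regions, and (b) the generic/supertransversal identification of those closures with $n_0$-cells of $\mathcal{C}(F_\theta)$ (Remark~\ref{rem:GenSupTransCase1skeleton}) -- not the LRA condition on linear regions itself. Your own third paragraph makes this point against you: in the collapse scenario, the stray strand of bent hyperplane is, as you say, \emph{still} in the $(n_0-1)$-skeleton of $\mathcal{C}(F_\theta)$, so the stated set equality is not violated; what fails is only the stronger identification of that skeleton with the non-differentiability locus of $F_\theta|_T$, which is what the downstream Rolnick--Kording reconstruction needs. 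Relatedly, your intermediate claim that LRA ``combined with the first inclusion forces the top-dimensional cells of $\mathcal{C}(F_\theta)$ inside $T$ to coincide exactly with those closures'' is not how that identification is obtained in the paper -- it comes from genericity and supertransversality, independently of LRA. None of this breaks your proof of the lemma as literally stated, but the attribution of which hypothesis does which job should be corrected.
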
 

In order to deduce that there are no hidden symmetries on a {\em positive measure} subset of parameter space, it will be important for us to know that the required LRA condition is satisfied not just for the construction we give in Section \ref{sec:GeomComb} but also on a full measure subset of an open neighborhood of the associated parameter. We turn to establishing the foundations for proving this now.

Recall the following definition and result (cf. \citet{HaninRolnick}, \citet{GLMW}), which tell us that for most pairs $(\theta, x) \in \Omega \times \mathbb{R}^{n_0}$, each coordinate of the realized function $F_\theta(x)$ is expressible as a polynomial in the coordinates of the parameter and the input.

\begin{definition} \label{defn:paramcoordrealize} 
Let $\Omega$ be the parameter space for ReLU network architecture $(n_0, \ldots, n_d)$. Denote by $\mathcal{F}: \Omega \times \mathbb{R}^{n_0} \rightarrow \mathbb{R}^{n_d}$ the function $\mathcal{F}(\theta,x) := F_\theta(x).$
\end{definition}

\begin{lemma}[\citet{GLMW}, Lemma 3.4]\label{l:3.4}
Let $\Omega$ be the parameter space for ReLU network architecture $(n_0,\ldots,n_d)$. Let $\theta \in \Omega$, and suppose $x \in \mathbb{R}^{n_0}$ is in a $\pm$--activation region for $F_\theta$. 
Then there is an open neighborhood of $(\theta,x) \in \Omega \times \mathbb{R}^{n_0}$ on which each coordinate of $\mathcal{F}$ is a polynomial in the coordinates of $\theta$ and $x$.
\end{lemma}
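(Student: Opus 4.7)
The proof is by induction on depth, leveraging the fact that being in a $\pm$--activation region is an open condition in both $\theta$ and $x$ jointly. First I would set up notation: view each pre-activation as a function $\tilde{z}_{(\ell),i} : \Omega \times \mathbb{R}^{n_0} \to \mathbb{R}$ of the joint argument $(\theta', x')$, and observe that the hypothesis ``$x$ lies in a $\pm$--activation region of $F_\theta$'' means precisely that $\tilde{z}_{(\ell),i}(\theta, x) \neq 0$ for every neuron $(\ell, i)$.

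The inductive claim to establish is that for each $\ell \in \{0, 1, \ldots, d\}$, there is an open neighborhood $U_\ell \subset \Omega \times \mathbb{R}^{n_0}$ of $(\theta, x)$ on which each coordinate of $F_{(\ell)}(x')$ is a polynomial in the coordinates of $(\theta', x')$. The base case $\ell = 0$ is trivial, since $F_{(0)}$ is the identity on $\mathbb{R}^{n_0}$. For the inductive step, suppose $U_{\ell - 1}$ has been chosen. Then $\tilde{z}_{(\ell),i}(\theta', x') = \pi_i\bigl(W'^{\ell} F_{(\ell-1)}(x') + b'^{\ell}\bigr)$ is, on $U_{\ell - 1}$, a polynomial in $(\theta', x')$, as it is an affine expression in the entries of $W'^{\ell}$ and $b'^{\ell}$ composed with the polynomial $F_{(\ell-1)}$. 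Since $\tilde{z}_{(\ell),i}$ is continuous and nonzero at $(\theta, x)$ for every $i$, I can shrink to a smaller open neighborhood $U_\ell \subseteq U_{\ell - 1}$ on which every $\tilde{z}_{(\ell),i}$ has constant nonzero sign. On $U_\ell$, $\sigma(\tilde{z}_{(\ell),i})$ equals $\tilde{z}_{(\ell),i}$ wherever that coordinate's sign is $+1$ and equals $0$ wherever it is $-1$; either way it remains a polynomial in $(\theta', x')$. Hence $F_{(\ell)}(x')$ is polynomial on $U_\ell$, closing the induction.

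Finally, since the output layer map $F^d$ is affine-linear (no ReLU), $\mathcal{F}(\theta', x') = F_{\theta'}(x')$ is polynomial in $(\theta', x')$ on $U_d$, which is the required open neighborhood of $(\theta, x)$. The only real ``obstacle'' is bookkeeping: one must shrink neighborhoods finitely many times (once per neuron per layer), but because finite intersections of open sets are open and the network has finitely many neurons, this is routine.
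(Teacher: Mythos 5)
Your proof is correct and follows essentially the same route the paper relies on: the paper cites this result from \citet{GLMW} without reproving it, but its explicit path-sum formula (Lemma~\ref{lem:poly}) is exactly the unrolled version of your layer-by-layer induction, resting on the same key observation that all pre-activations being nonzero is an open condition in $(\theta,x)$, so the activation pattern is locally constant and the network is locally a fixed composition of affine maps.
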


Indeed, if $\theta$ is moreover a generic and supertransversal parameter and $x \in \mathbb{R}^{n_0}$ is any point in the domain, it is well-known that we can calculate this polynomial explicitly in terms of the ternary labeling on $x$ and the parameters of $\theta$, cf. Lemma 8 of  \citet{HaninRolnick}. For completeness, we describe this process here. We first establish some notation, then describe how the polynomial is calculated in Lemma \ref{lem:poly}.

\begin{definition} \label{defn:weightedcomputationalgraph} The {\em augmented} computational graph $\tilde{G}$ for the feedforward ReLU network architecture $(n_0, \ldots, n_d)$ is the graded oriented graph: 
\begin{itemize}
    \item with $n_\ell$ ordinary vertices and $1$ distinguished vertex of grading $\ell$ for $\ell = 0, \ldots, d-1$, and $n_d$ ordinary vertices of grading $d$,
    \item for every $\ell = 0, \ldots, d-1$, every vertex of grading $\ell$ is connected by a single oriented edge to every ordinary vertex of grading $\ell+1$, oriented toward the vertex of grading $\ell+1$.
\end{itemize} 
\end{definition}

One obtains the augmented computational graph for an architecture from the standard computational graph for the architecture by adding an extra marked vertex for each non-output layer, whose purpose is to record the bias term in each affine-linear map. Accordingly, given a parameter $\theta$ one obtains a labeling of the edges of the augmented computational graph:
\begin{itemize}
    \item the edge from the distinguished vertex of layer $\ell$ to the $k$th ordinary vertex of layer $\ell+1$ is labeled with $b^{\ell+1}_k$, the $k$th  component of the bias vector for $F^{\ell+1}$,
    \item the edge from the $i$th ordinary vertex of layer $\ell-1$ to the $j$th ordinary vertex of layer $\ell$ is labeled with $W^{\ell}_{ji}$.
\end{itemize}

Associated to every oriented path $\gamma$ is a corresponding monomial, $m(\gamma)$, in the parameters obtained by taking the product of the parameters on the edges traversed along $\gamma$. See Figure \ref{fig:AugmentedCompGraph}.

\begin{SCfigure}
\includegraphics[width=3in]{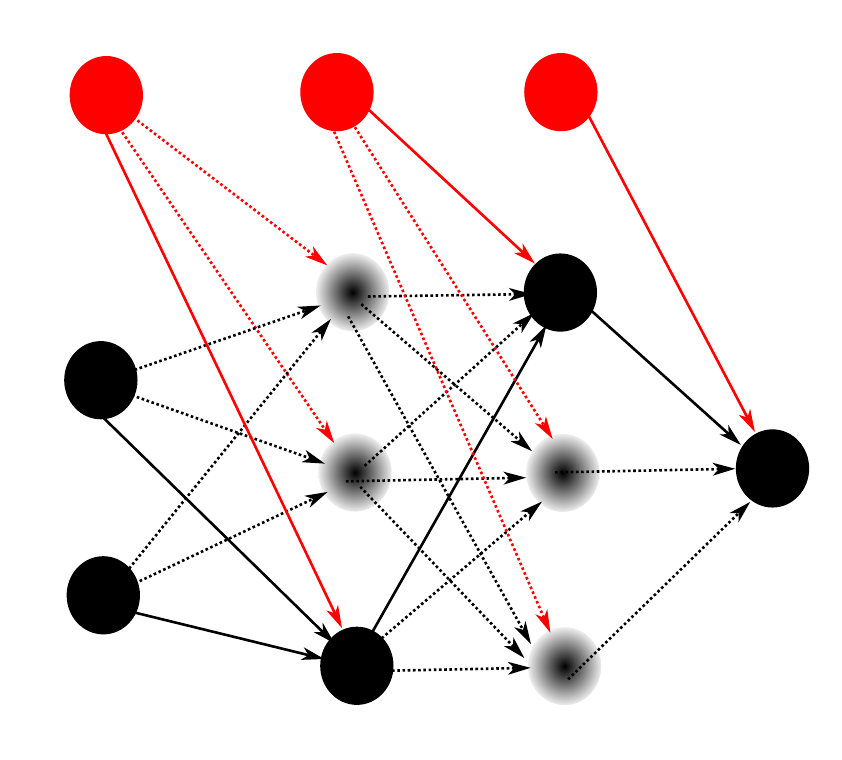}
  \caption{An augmented computational graph for architecture (2,3,3,1). The ordinary vertices are black, and the distinguished vertices are red. Black edges are labeled with weights, and red edges are labeled with biases. A complete path is one that ends at an output vertex and begins either at an input vertex or at one of the distinguished vertices. In the diagram above, we have blurred out vertices corresponding to inactive neurons associated to an input vector $x$ with ternary label $s_x = (s_x^1,s_x^2,s_x^3) = ((-1,0,+1), (+1,0,0), (+1))$. The open paths associated to this ternary label are the ones in the diagram above with solid (non-dashed) edges. The reader can check that there are three open, complete paths $\gamma, \gamma', \gamma'' \in \Gamma_{x,*}$,  whose  monomials are $m(\gamma) = b_3^1W^2_{13}W^3_{11}$, $m(\gamma') = b_1^2W^3_{11}$, and $m(\gamma'') = b_1^3$. There is a unique open, complete path $\gamma_1 \in \Gamma_1$, with monomial $m(\gamma_1) = W^1_{31}W^2_{13}W^3_{11}$ and a unique open, complete path $\gamma_2 \in \Gamma_2$, with monomial $m(\gamma_2) = W^1_{32}W^2_{13}W^3_{11}$.}
  \label{fig:AugmentedCompGraph}
\end{SCfigure}

\begin{definition} 
\label{defn:openpath} Let $\theta \in \Omega$ be a generic, supertransversal parameter, and let $x \in \mathbb{R}^{n_0}$ be any point in the domain, with associated ternary labeling $s_x = (s^1_x, \ldots, s^d_x)$. A path $\gamma$ is said to be \emph{open at $x$ for parameter $\theta$} if every node along $\gamma$ has ternary labeling $+1$.
\end{definition}

\begin{definition}
\label{defn:completepath}
Let $\tilde{G}$ be the augmented computational graph for the ReLU network architecture $(n_0, \ldots, n_d)$. A path $\gamma$ is said to be {\em complete} if it ends at a vertex in the output layer and begins at either a vertex of the input layer or at one of the distinguished vertices in a non-input layer. For $\theta \in \Omega$ and  each $1 \leq k \leq n_d$ we will denote by  
\begin{itemize} 
    \item $\Gamma_{x}^{\theta,k}$ the set of complete paths that are open at $x$ for the parameter $\theta$  and end at the $k$th node of the output layer, 
    \item by $\Gamma_{x,i}^{\theta,k} \subseteq \Gamma_x^{\theta,k}$ the subset of $\Gamma_x^{\theta,k}$ beginning at input node $i$, and 
    \item by $\Gamma_{x,*}^{\theta,k}\subseteq \Gamma_x^{\theta,k}$ the subset of $\Gamma_x^{\theta,k}$ beginning at one of the distinguished vertices.
\end{itemize}  
\end{definition}

The following lemma is well-known to the experts (e.g. Lemma 8 of \citet{HaninRolnick}).

\begin{lemma} \label{lem:poly}
Let $\theta \in \Omega$ be a generic, supertransversal parameter, and let $x = (x_1, \ldots, x_{n_0}) \in \mathbb{R}^{n_0}$ be any point in the domain. For $k=1, \ldots, n_d$, the polynomial associated to the $k$th output component of $F_\theta$ at $x$ is given by 
\[\mathcal{F}_k(\theta,x) =  \sum_{\gamma \in \Gamma_{x,*}^{\theta,k}}  m(\gamma) + \sum_{i=1}^{n_0} x_i\sum_{\gamma \in \Gamma_{x,i}^{\theta,k}} m(\gamma) .\]   
\end{lemma}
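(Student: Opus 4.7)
The plan is to induct on the layer index $\ell$, proving at each layer a formula for the preactivation $z^\ell_j(x)$ (rather than only for the final output). This slight strengthening is needed because intermediate sub-networks $F_{(\ell)}$ with $\ell<d$ apply a ReLU to their output, so the inductive hypothesis cannot match the stated lemma verbatim. Specializing to $\ell=d$, where $F^d$ has no ReLU, recovers Lemma~\ref{lem:poly} since $\mathcal{F}_k(\theta,x) = z^d_k(x)$.

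Define $\Gamma^{\theta,\ell,j}_{x,i}$ and $\Gamma^{\theta,\ell,j}_{x,*}$ in exact analogy with Definition~\ref{defn:completepath}, replacing ``complete'' by ``ending at the ordinary vertex $j$ of layer $\ell$''. I claim that
\[
z^\ell_j(x) \;=\; \sum_{\gamma \in \Gamma^{\theta,\ell,j}_{x,*}} m(\gamma) + \sum_{i=1}^{n_0} x_i \sum_{\gamma \in \Gamma^{\theta,\ell,j}_{x,i}} m(\gamma).
\]
The base case $\ell=1$ is immediate: $z^1_j(x) = b^1_j + \sum_i W^1_{ji}\,x_i$, and each summand corresponds to the unique (vacuously open, since there are no intermediate vertices) path of length one from the relevant starting vertex to $(1,j)$.

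For the inductive step, expand
\[
z^\ell_j(x) = b^\ell_j + \sum_{i=1}^{n_{\ell-1}} W^\ell_{ji}\,\sigma\bigl(z^{\ell-1}_i(x)\bigr),
\]
and use the crucial identity
\[
\sigma\bigl(z^{\ell-1}_i(x)\bigr) = \begin{cases} z^{\ell-1}_i(x) & \text{if } s^{\ell-1}_{x,i} = +1,\\ 0 & \text{otherwise,}\end{cases}
\]
which holds even on bent hyperplanes because $s^{\ell-1}_{x,i}=0$ forces $z^{\ell-1}_i(x)=0$. This restricts the inner sum to those $i$ with $s^{\ell-1}_{x,i}=+1$. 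Applying the inductive hypothesis to each such $z^{\ell-1}_i(x)$ and multiplying by $W^\ell_{ji}$ prepends the edge $(\ell-1,i)\to(\ell,j)$ to every open path terminating at $(\ell-1,i)$; the restriction to active $i$ ensures the new penultimate vertex carries label $+1$, so each extended path is still open. The isolated bias term $b^\ell_j$ accounts for the unique open path of length one from the distinguished vertex of layer $\ell-1$ directly to $(\ell,j)$. Every open path terminating at $(\ell,j)$ arises uniquely in exactly one of these two ways, according to whether its penultimate vertex is ordinary or distinguished, so collecting terms reproduces the claimed formula.

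The main obstacle — really the only subtlety — is handling points $x$ lying on bent hyperplanes, where the set of open paths changes discontinuously in $x$. This is resolved by the identity displayed above: a path through a neuron with ternary label $0$ fails to be open, but the factor $z^{\ell-1}_i(x)$ attached to it vanishes anyway, so both sides of the claimed equality agree. The genericity and supertransversality hypotheses on $\theta$ are used only indirectly — via Lemma~\ref{l:3.4} and to guarantee that the ternary labeling on cells of $\mathcal{C}(F_\theta)$ behaves as expected — while the combinatorial expansion itself is purely algebraic and does not require these hypotheses.
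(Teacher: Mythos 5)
The paper itself offers no proof of Lemma~\ref{lem:poly}: it is stated as ``well-known to the experts'' with a pointer to Lemma~8 of \citet{HaninRolnick}. Your layer-by-layer induction on the preactivations $z^\ell_j(x)$ is precisely the standard argument behind that citation, and it is sound: the identity $\sigma(z^{\ell-1}_i(x)) = z^{\ell-1}_i(x)$ when $s^{\ell-1}_{x,i}=+1$ and $=0$ otherwise is the right mechanism for collapsing the path sum, it correctly handles points on bent hyperplanes (where the vanishing of $z^{\ell-1}_i(x)$ makes the excluded paths contribute nothing anyway), and your remark that genericity and supertransversality play no role in the purely algebraic identity is accurate. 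One point needs more care than you give it. If $\Gamma^{\theta,\ell,j}_{x,\cdot}$ is defined ``in exact analogy'' with Definition~\ref{defn:completepath}, so that \emph{every} node along the path --- including the terminal vertex $(\ell,j)$ --- must carry label $+1$, then your intermediate claim is false whenever $z^\ell_j(x)<0$: the right-hand side is an empty sum while the left-hand side is negative. The statement your inductive step actually establishes, and the one you need, imposes the label-$+1$ condition only on the vertices in layers $1,\dots,\ell-1$, leaving the terminal vertex unconstrained. Because you only invoke the inductive hypothesis at indices $i$ with $s^{\ell-1}_{x,i}=+1$ (where the two conventions coincide), and because the output layer has no ReLU, the argument goes through verbatim once the definition is restated with the terminal vertex excluded; as written, the claim being inducted on is not quite the claim being proved. (The same ambiguity is latent in the paper's Definition~\ref{defn:openpath}, and your fix repairs it there as well.)
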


\begin{remark} \label{rem:onlysignseqmatters}
Fix $\theta$ and any ternary labeling $s$.  Define
\[
(\mathbb{R}^{n_0}_s \times \Omega)_s \coloneqq \{(x,\theta) \in \mathbb{R}^{n_0} \times \Omega \mid \textrm{ the ternary labeling of } x \textrm{ with respect to } \theta \textrm{ is }s \}.\]
Then $\mathcal{F}$ restricted to $(\mathbb{R}^{n_0}_s \times \Omega)_s $ is a  vector of polynomial functions. In the proof of the Lemma below, we will use the notation $\mathcal{F}_s$ for this vector of polynomial functions. 
\end{remark}

In the Lemma below, recall from Section \ref{sec:GeomComb} that $\hat{H}_i^{\ell}$ denotes the bent hyperplane in the domain $\mathbb{R}^{n_0}$ associated to the $i$th neuron of the $\ell$th layer map.

\begin{lemma} \label{lem:openpaths} 
There exists an algebraic, measure 0 set $B \subset \Omega$ such that for any generic, supertransversal parameter $\theta \in \Omega \setminus B$ 
and any point  $x \in \hat{H}_i^{\ell - 1} \cap \hat{H}_j^{\ell}$ (for parameter $\theta$), 
 if $\Gamma_x\setminus \Gamma_{x,*}$ is non-empty, then the LRA is satisfied on the union, $T_{ij}^{\ell}$, of the closures of the four $\pm$-activation regions adjacent to $x=p_{ij}^{\ell}$.
\end{lemma}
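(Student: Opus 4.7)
The strategy is to characterize LRA failure on $T_{ij}^\ell$ as the coincidence of affine functions on some pair of the four adjacent $\pm$-activation regions, and to show that any such coincidence imposes a non-trivial polynomial condition on $\theta$. I would enumerate the finitely many data $(i, j, \ell, s)$, where $s$ is a ternary activation pattern with $s^{\ell-1}_i = s^\ell_j = 0$, and label the four adjacent $\pm$-activation regions by $R_{\vec{\epsilon}}$ with $\vec{\epsilon} = (\epsilon_1, \epsilon_2) \in \{\pm\}^2$ giving the signs at $(i, \ell-1)$ and $(j, \ell)$. By Lemma \ref{lem:poly}, $F_\theta$ restricted to each $R_{\vec{\epsilon}}$ is an explicit polynomial $f_{\vec{\epsilon}}(\theta, x)$, affine in $x$, whose monomials are indexed by complete paths in the augmented computational graph that are open on $R_{\vec{\epsilon}}$. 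LRA on $T_{ij}^\ell$ holds iff the four $f_{\vec{\epsilon}}$ are pairwise distinct as polynomials in $x$, so LRA failure means some pairwise difference $f_{\vec{\epsilon}} - f_{\vec{\epsilon}'}$ vanishes as a polynomial in $x$ --- equivalently, each of its $x$-monomial coefficients (polynomials in $\theta$) vanishes. The vanishing of all such coefficients cuts out an algebraic subset $V_{\vec{\epsilon}\vec{\epsilon}'}^{i,j,\ell,s}$ of $\Omega$, and I would take $B$ to be the finite union of these $V$'s over the six unordered pairs and the finitely many $(i, j, \ell, s)$ satisfying the hypothesis.

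The crux is showing each $V_{\vec{\epsilon}\vec{\epsilon}'}^{i,j,\ell,s}$ is a \emph{proper} algebraic subset. For the adjacent pair $(R_{++}, R_{-+})$, which differs only at neuron $(i, \ell-1)$, the computational graph formula yields, component-wise,
\[(f_{++})_k(x) - (f_{-+})_k(x) = \alpha_1(x) \cdot c_k,\]
where $\alpha_1(x)$ is the pre-activation affine polynomial of neuron $(i, \ell-1)$ and $c_k = \sum_\gamma m(\gamma)$ sums over paths $\gamma$ from $(i, \ell-1)$ to output $k$ whose intermediate nodes are active on $R_{++}$. A complete open path $\gamma_0 \in \Gamma_x \setminus \Gamma_{x,*}$ from some input $i_0$ to some output $k_0$ must pass through a layer-$\ell$ neuron $(m, \ell) \neq (j, \ell)$ (since $(j, \ell)$ has sign $0$ at $x$). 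Prepending the single edge $(i, \ell-1) \to (m, \ell)$ to the tail of $\gamma_0$ past $(m, \ell)$ yields a valid path contributing a distinct, uncancellable monomial to $c_{k_0}$, so $c_{k_0} \not\equiv 0$ as a polynomial in $\theta$. Since $\alpha_1(x)$ is a generically nontrivial affine function of $x$ (its constant term contains the free parameter $b^{\ell-1}_i$), the product $\alpha_1(x) c_{k_0}$ has a nonzero $x^0$-coefficient as a polynomial in $\theta$, making $V_{++,-+}^{i,j,\ell,s}$ proper. An analogous splicing argument, anchored at the layer-$(\ell+1)$ vertex of $\gamma_0$, dispatches $(R_{++}, R_{+-})$, and the pairs $(R_{--}, R_{-+})$ and $(R_{--}, R_{+-})$ follow by symmetry.

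For the two diagonal pairs $(R_{++}, R_{--})$ and $(R_{+-}, R_{-+})$, I would telescope through an adjacent intermediate region and separate the summands by monomial support. In the decomposition $f_{++} - f_{--} = (f_{++} - f_{-+}) + (f_{-+} - f_{--})$, the first summand contains monomials involving the weight $W^\ell_{j, i}$ --- arising from paths $(i, \ell-1) \to (j, \ell) \to \cdots$ that are valid in $R_{++}$ because $(j, \ell)$ is active there with a downstream extension supplied by $\gamma_0$ --- whereas the second summand toggles $(j, \ell)$ in a region where $(i, \ell-1)$ is inactive, so $W^\ell_{j, i}$ never enters the relevant pre-activation or path sums at all; the $W^\ell_{j, i}$-carrying monomials therefore cannot cancel. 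I expect the main obstacle to be precisely this monomial-support bookkeeping --- particularly for $(R_{+-}, R_{-+})$ where both signs flip and one must track how path contributions redistribute across layer $\ell$ --- but isolating a specific weight variable that appears in only one telescoping summand is the cleanest resolution. Once all six pair differences are established as nonzero polynomials in $(\theta, x)$, each $V_{\vec{\epsilon}\vec{\epsilon}'}^{i,j,\ell,s}$ is a proper algebraic subset, their finite union $B$ is algebraic of positive codimension (hence Lebesgue measure zero), and the lemma follows: on $\Omega \setminus B$, any generic supertransversal $\theta$ satisfies LRA on $T_{ij}^\ell$ at every $x \in \hat{H}_i^{\ell-1} \cap \hat{H}_j^\ell$ whose ternary label admits an open complete path from an input.
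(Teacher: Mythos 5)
Your proposal is correct and follows essentially the same route as the paper: both reduce LRA failure on $T_{ij}^{\ell}$ to the coincidence of the path-sum polynomials (Lemma \ref{lem:poly}) on pairs of the four adjacent $\pm$-activation regions, use the open complete path in $\Gamma_x\setminus\Gamma_{x,*}$ to splice together a path whose monomial distinguishes the polynomials, and take $B$ to be the finite union of the resulting proper algebraic sets. Your explicit factorization of adjacent-pair differences and the monomial-support argument for the diagonal pairs is just a more detailed rendering of the paper's decomposition into $\Gamma_{11},\Gamma_{1*},\Gamma_{*1},\Gamma_{**}$.
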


\begin{proof}

Fix $i,j$. Fix a ternary labeling $s \in \{-1,0,1\}^{n_1} \times \ldots \times \{-1,0,1\}^{n_d} $.  Suppose there exists a point $x \in \hat{H}_i^{\ell - 1} \cap \hat{H}_j^{\ell}$ with ternary labeling $s=s_x$.    

By supertransversality, every non-empty bent hyperplane besides $\hat{H}_{1}^{\ell-1}$ and $\hat{H}_1^{\ell}$ has positive minimal distance to $x$. It follows that a sufficiently small neighborhood of $x$ contains points only in the closures of the four $\pm$-activation regions adjacent to $x$.
\color{black}
    By permuting neurons in layers $\ell-1$ and $\ell$ if necessary, we may assume without loss of generality that $i=j=1$. It follows that the first component of each of the ternary labels $s_x^{\ell-1}$ and $s_x^{\ell}$ is $0$. Accordingly, the ternary labelings of the four $\pm$-activation regions adjacent to $x$ agree with those of $x$ except at the first coordinates of $s^{\ell-1}, s^\ell$ . It is therefore natural to label the adjacent $\pm$-activation regions by $++$, $+-$, $-+$, $--$ according to whether the $1$st coordinate of $s^{\ell-1}, s^\ell$ is $\pm 1$. Let $x_{++}, x_{+-}, x_{-+}, x_{--}$ be points in the corresponding $\pm$-activation regions adjacent to $x$, and let $s_{++}, s_{+-}, s_{-+}, s_{--}$ be the corresponding ternary labelings.

    The assumption that $\Gamma_x \setminus \Gamma_{x,*}$ is non-empty tells us that there is at least one open, complete path at $x$, which implies that each of the ternary labelings $s_x^1, \ldots, s_x^d$ has at least one $+1$ component. In other words, there is at least one neuron active in each layer at the input $x = p^\ell_{ij}$.

Let $\mathcal{F}_{s_{++}}, \mathcal{F}_{s_{+-}}, \mathcal{F}_{s_{-+}}, \mathcal{F}_{s_{--}}$ be the  vector of  polynomials as in Remark \ref{rem:onlysignseqmatters}. 
    Lemma \ref{lem:poly} tells us how to compute these four polynomials. 
    We will show that the  vectors of polynomials $\mathcal{F}_{s_{++}}, \mathcal{F}_{s_{+-}}, \mathcal{F}_{s_{-+}}, \mathcal{F}_{s_{--}}$
 are pairwise distinct. I.e.,  viewing the summands of the polynomial components as consisting of a coefficient that is an algebraic expression in $\theta$ and a variable $x_i$, we will show that different polynomials have different coefficients.  
   Then we let $B_{i,j,s}$ denote the set of parameters $\theta$ such that two or more of the restrictions $\mathcal{F}_{s_{++}}(\theta, \cdot), \mathcal{F}_{s_{+-}}(\theta,\cdot), \mathcal{F}_{s_{-+}}(\theta,\cdot), \mathcal{F}_{s_{--}}(\theta, \cdot)$ coincide. The set $B_{i,j,s}$ is an algebraic set (a finite union of solutions to polynomials) and hence has measure $0$. It follows that the set $B \coloneqq \bigcup_{i,j,s} B_{i,j,s}$ is an algebraic set of $0$ measure that has the desired properties.

Thus, we turn to proving that $\mathcal{F}_{s_{++}}, \mathcal{F}_{s_{+-}}, \mathcal{F}_{s_{-+}}, \mathcal{F}_{s_{--}}$ are pairwise distinct polynomials. Explicitly, let $v_1^{\ell-1}$ (resp., $v_1^\ell$) be the first ordinary vertex in layer $\ell -1$ (resp., in layer $\ell$). Consider the set of paths in $\Gamma_{x_{++}}$ passing through an ordinary vertex from layer $\ell - 1$ and an ordinary vertex from layer $\ell$. It is immediate that this set can be decomposed into the disjoint union of:
    \begin{itemize}
        \item the set of paths through both $v_1^{\ell-1}$ and $v_1^\ell$, which we will denote by $\Gamma_{11}$
        \item the set of paths through $v_1^{\ell-1}$ and not $v_1^{\ell}$, which we will denote by $\Gamma_{1*}$
        \item the set of paths through $v_1^{\ell}$ and not $v_1^{\ell-1}$, which we will denote by $\Gamma_{*1}$
        \item the set of paths through neither $v_1^{\ell-1}$ nor $v_1^{\ell}$, which we will denote by $\Gamma_{**}$
    \end{itemize}

    Now note that $\Gamma_{x_{--}} = \Gamma_{x}$. Since $\Gamma_x$ is non-empty each of $\Gamma_{x_{-+}}, \Gamma_{x_{+-}}, \Gamma_{x_{++}}$ is non-empty as well, which implies that the polynomial in $(\theta,x)$ associated to each of these sets is nonzero. 

    Next, note that $\Gamma_{x_{+-}}= \Gamma_{x_{--}} \cup \Gamma_{1*}$. Since there is at least one neuron in each layer active at $x_{+-}$, the set $\Gamma_{1*}$ is non-empty, and hence the polynomial \[\sum_{\gamma \in \Gamma_{1*}} m(\gamma)\] is nonzero. This tells us that the polynomials $\mathcal{F}_{s_{--}}$ and $\mathcal{F}_{s_{+-}}$ associated to $\Gamma_{x_{--}}$ and $\Gamma_{x_{+-}}$ are distinct (as functions of two variables $\theta$ and $x$, and affine-linear in $x$). 
    Similarly, the polynomial associated to $\Gamma_{x_{-+}}$ is distinct from that associated to $\Gamma_{x_{--}}$. 

    Indeed, the fact that each of $\Gamma_{11}, \Gamma_{1*}, \Gamma_{*1},$ and $\Gamma_{**}$ contains a path (and hence a monomial containing a distinct weight) not present in the others implies, by an analogous argument, that the polynomials associated to $\Gamma_{x_{++}}, \Gamma_{x_{+-}}, \Gamma_{x_{-+}}, \Gamma_{x_{--}}$ are all pairwise distinct.
\color{black}
    
\end{proof}

\color{black}

In \citet{Masden}, it is proved that for a generic, supertransversal parameter $\theta$, the map \[s: \mathcal{C}(F_\theta) \rightarrow \{-1,0,+1\}^{n_1 + \ldots + n_d}\] that assigns to each cell $C$ of the polyhedral complex, $\mathcal{C}(F_\theta)$, its ternary activation pattern, $s_C$, is well-defined, injective, and has the property that $C$ is a $k$-cell of $\mathcal{C}(F)$ if and only if $s(C)$ has exactly $n_0 - k$ entries which are $0$.

That is, there is no ambiguity in defining a ternary activation pattern for a cell $C$ of $\mathcal{C}(F_\theta)$, each possible ternary activation pattern $s \in \{-1,0,+1\}^{n_1 + \ldots + n_d}$ is in the image of {\em at most} one polyhedral set $C$ in $\mathcal{C}(F_\theta)$, and the dimension of $C$ as a polyhedral set is $n_0 -k$, where $k$ is the number of $0$'s 
in $s_C$. 

Moreover, we state the following additional result (implicit in \citet{Masden}), which tells us that the {\em presence} of a ternary activation pattern is stable under (almost all) sufficiently small perturbations of the parameter:\footnote{Note that the {\em absence} of a ternary activation pattern is not necessarily stable in this way. See the definition of combinatorial stability and related discussion in \cite{GLMW}.} 

\begin{proposition}
\label{prop:ternactivestable} Let $\theta' \in \Omega$ be a generic, supertransversal parameter, $C' \in \mathcal{C}(F_{\theta'})$ a cell in the corresponding polyhedral complex and $s_{C'}$ its associated ternary activation pattern. 
There exists an open neighborhood $N$ of $\theta' \in \Omega$ 
such that for each $\theta \in N$,  $\theta$ is generic, supertransversal, and there exists a non-empty cell $C$ in $\mathcal{C}(F_{\theta})$ with ternary activation pattern $s_{C} = s_{C'}$. 

\end{proposition}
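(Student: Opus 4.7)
The plan is to combine openness of the generic and supertransversal conditions with an implicit function theorem argument applied to those pre-activation functions that vanish on $C'$. First, I would invoke Corollary \ref{cor:genericopen} to obtain an open neighborhood of $\theta'$ consisting of generic parameters. Supertransversality is also an open condition in parameter space---a finite conjunction of cellwise transversality conditions (cf.\ Definition \ref{defn:supertransversal}), preserved under small parameter perturbations by the same reasoning underlying Proposition \ref{p:TPICOpenCondition}. Intersecting yields an open neighborhood $N_0$ of $\theta'$ on which every parameter is both generic and supertransversal; it then remains to shrink $N_0$ further so that the specific ternary pattern $s_{C'}$ is realized.

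Next, I would pick $x^*$ in the relative interior of $C'$ and set $r := n_0 - \dim(C')$, which equals the number of zero entries of $s_{C'}$. After relabeling, suppose the zero entries correspond to pre-activations $z_{(\ell_1), i_1}, \ldots, z_{(\ell_r), i_r}$, while the remaining entries correspond to pre-activations $z_{(\ell), i}$ with definite signs $\varepsilon_{\ell, i} \in \{-1, +1\}$ at $(\theta', x^*)$. Choose any $\pm$-activation region $R$ adjacent to $C'$ with $x^* \in \overline{R}$. By Lemma \ref{l:3.4} applied at an interior point of $R$, each pre-activation $z_{(\ell), i}$ agrees on $\overline{R}$ with an explicit polynomial $P_{(\ell), i}(\theta, x)$ in the joint variables, and these polynomials extend smoothly to all of $\Omega \times \mathbb{R}^{n_0}$ with $P_{(\ell), i}(\theta', x^*) = z_{(\ell), i}(\theta', x^*)$.

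The heart of the argument is an implicit function theorem step. I would define $\Phi : \Omega \times \mathbb{R}^{n_0} \to \mathbb{R}^r$ by $\Phi(\theta, x) := \bigl(P_{(\ell_j), i_j}(\theta, x)\bigr)_{j=1}^r$. By supertransversality at $\theta'$, the bent hyperplanes $\hat{H}^{\ell_j}_{i_j}$ meet transversely at $x^*$, so the Jacobian $\partial \Phi / \partial x$ has rank $r$ at $(\theta', x^*)$. The implicit function theorem then produces a smooth map $\theta \mapsto x_\theta$ on an open neighborhood $N \subseteq N_0$ of $\theta'$ with $x_{\theta'} = x^*$ and $\Phi(\theta, x_\theta) = 0$. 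Shrinking $N$ further, continuity would ensure $x_\theta \in \overline{R}$ and $\mathrm{sgn}\bigl(z_{(\ell), i}(\theta, x_\theta)\bigr) = \varepsilon_{\ell, i}$ for every nonzero entry of $s_{C'}$ and every $\theta \in N$. At that point the ternary activation pattern of $x_\theta$ under $\theta$ equals $s_{C'}$ by construction, so the cell $C \in \mathcal{C}(F_\theta)$ containing $x_\theta$ is non-empty with $s_C = s_{C'}$.

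The main obstacle I anticipate is making precise that the polynomial branch $P_{(\ell), i}$ used in the implicit function theorem step genuinely captures the actual pre-activation $z_{(\ell), i}$ at $(\theta, x_\theta)$ throughout a joint neighborhood, since the pre-activations are only piecewise polynomial globally. The resolution is that the condition ``$(\theta, x)$ lies in the joint stratum associated to the $\pm$-activation pattern of $R$'' is open in $\Omega \times \mathbb{R}^{n_0}$ (each defining inequality has a strict sign at $(\theta', x^*)$), so continuity of $\theta \mapsto x_\theta$ with $x_{\theta'} = x^*$ forces $(\theta, x_\theta)$ to remain in that open set for $\theta$ sufficiently close to $\theta'$, validating the use of $P_{(\ell), i}$ and completing the argument.
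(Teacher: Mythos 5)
Your route is genuinely different from the paper's. The paper's proof avoids any analytic machinery: it splits the bent hyperplanes into the set $\mathcal{H}_0$ of those containing $C'$ (ternary label $0$) and the set $\mathcal{H}_\pm$ of the rest, uses openness of non-empty transverse intersection to keep the $\mathcal{H}_0$ hyperplanes intersecting after perturbation, and uses the positive distance from an interior point $p'\in\mbox{int}(C')$ to every member of $\mathcal{H}_\pm$ to keep $p'$ in the same bent half-spaces. Your implicit-function-theorem argument on the polynomial branches is more explicit (it actually produces a continuously varying witness point $x_\theta$), and the rank condition you extract from supertransversality is exactly the right input; this is a legitimate alternative proof skeleton.

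However, the justification in your final paragraph contains a genuine error. You claim that the condition ``$(\theta,x)$ lies in the joint stratum associated to the $\pm$-activation pattern of $R$'' is open because ``each defining inequality has a strict sign at $(\theta',x^*)$.'' This is false: $x^*$ lies in the relative interior of $C'$, hence exactly on the $r$ bent hyperplanes $\hat{H}^{\ell_j}_{i_j}$, so the $r$ defining inequalities of $R$ corresponding to those neurons are equalities, not strict inequalities, at $(\theta',x^*)$. The point $(\theta',x^*)$ sits on the boundary of the open stratum of $R$, so openness of that stratum gives you nothing about $(\theta,x_\theta)$. The claim you actually need --- that the polynomial branches $P_{(\ell),i}$ compute the true pre-activations at $(\theta,x_\theta)$ --- is still true, but it requires a different argument: induct on the layer index, using that (a) neurons with nonzero label at $x^*$ retain their strict sign by joint continuity of the true pre-activations, and (b) for neurons with label $0$, the ``on'' and ``off'' branches of ReLU agree at the zero locus (since $\mbox{ReLU}(0)=0$), so forcing $P_{(\ell_j),i_j}(\theta,x_\theta)=0$ via the IFT keeps the downstream branches valid on the closure of the stratum even though $(\theta,x_\theta)$ never enters its interior. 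With that repair (and a citation for why supertransversality, not just genericity, persists on a neighborhood --- your appeal to the reasoning behind Proposition~\ref{p:TPICOpenCondition} is slightly circular, as that corollary is derived from the present proposition in the paper), your argument goes through.
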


\begin{proof} 


The assumption that $\theta'$ is generic and supertransversal tells us that the ternary activation pattern of a cell $C'$ in $\mathcal{C}(F_{\theta'})$ gives us a precise recipe  for realizing $C'$ as the intersection of bent hyperplanes and ``bent" half-spaces.\footnote{The complement of a bent hyperplane is the union of at most two open connected components. These are what we mean by bent half-spaces. Note that a bent hyperplane may be empty, in which case exactly one of the two bent half-spaces is also empty.}  Moreover, if $C'$ has dimension $(n_0-k)$, $s_{C'}$ will have $k$ $0$'s (corresponding to $k$ intersecting bent hyperplanes) and  $(n_0-k)$ $\pm 1$'s (corresponding to $(n_0 - k)$ \color{black} intersecting bent half-spaces). We also note (cf. Lem. 12 of \citet{Masden} and Lemma \ref{lem:genericfullmeasure} that the set of generic, supertransversal parameters is full measure in parameter space.

Now suppose that $C'$ is a non-empty cell in $\mathcal{C}(F_{\theta'})$, and $p'$ is a point in $\mbox{int}(C')$. 
Let $\mathcal{H}_0$ (resp., $\mathcal{H}_\pm$) denote the set of bent hyperplanes of $F_{\theta'}$ associated to ternary label $0$ (resp., ternary labels $\pm 1$) in $s_{C'}$.

By transversality on cells, there is an open neighborhood $N_0$ of the subset of the parameters  defining the bent hyperplanes in $\mathcal{H}_0$ for which every parameter $\theta \in N_0$ defines a collection of $|\mathcal{H}_0|$ bent hyperplanes with intersection that is both transverse on cells and non-empty.

Moreover, since $p'$ has positive minimal distance to every bent hyperplane in $\mathcal{H}_{\pm}$, and every such bent hyperplane is closed (though not necessarily compact), there is some positive $\delta$ for which a neighborhood of $p'$ of radius $\delta$ contains only the bent hyperplanes  in $\mathcal{H}_0$. This implies that there is a sufficiently small open neighborhood $N_{\pm}$ of the parameters defining the bent hyperplanes in $\mathcal{H}_\pm$ for which $C'$ is in the same bent half-space for the bent hyperplanes in $\mathcal{H}_\pm$ for every parameter in $N_{\pm}$.

Letting $N=N_0 \cap N_\pm$ and further restricting to a neighborhood with generic parameters (Lemma \ref{lem:genericfullmeasure}) if necessary, \color{black} we obtain a neighborhood of $\theta' \in \Omega$ for which every $\theta \in N$ is generic, supertransversal, and contains a non-empty cell $C$ with $s_C = s_C'$, as desired.

\end{proof}

Recall (Definition \ref{def:TPIC} that a generic, supertransversal parameter $\theta$ satisfies TPIC if every pair of bent hyperplanes in adjacent layers has non-empty transverse intersection.

\begin{corollary} \label{p:TPICOpenCondition}
For any architecture, TPIC is an open condition. That is, if $\theta \in \Omega$ is a generic, supertransversal parameter satisfying TPIC, then there exists an open neighborhood of $\theta \in \Omega$ on which all parameters satisfy TPIC.
\end{corollary}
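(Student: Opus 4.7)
The plan is to reduce openness of TPIC to two already-established open conditions: (a) genericity/supertransversality, and (b) persistence of prescribed ternary activation patterns (Proposition~\ref{prop:ternactivestable}). The key observation is that, for a generic supertransversal parameter, a transverse pairwise intersection $\hat{H}^{\ell}_i \cap \hat{H}^{\ell+1}_j \neq \emptyset$ is witnessed by the existence of a non-empty cell in $\mathcal{C}(F_\theta)$ whose ternary activation pattern has $0$'s precisely at the $i$th entry of layer $\ell$ and the $j$th entry of layer $\ell+1$.

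First, by Corollary~\ref{cor:genericopen} together with the analogous openness statement for supertransversality (which is a transversality-on-cells condition, hence open by the same algebraic-set argument as in Lemma~\ref{lem:genericfullmeasure} and its corollary), there is an open neighborhood $U_0 \subseteq \Omega$ of $\theta$ consisting entirely of generic, supertransversal parameters. All remaining work takes place inside $U_0$.

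Next, for each pair $(i, j, \ell)$ with $1 \leq \ell < d$, $1 \leq i \leq n_\ell$, $1 \leq j \leq n_{\ell+1}$, the TPIC hypothesis provides a non-empty transverse intersection $\hat{H}^{\ell}_i \pitchfork_c \hat{H}^{\ell+1}_j$. Pick a point $p_{ij}^{\ell}$ in this intersection lying in the (open) relative interior of some cell $C_{ij}^{\ell} \in \mathcal{C}(F_\theta)$; by genericity and supertransversality, $C_{ij}^\ell$ has dimension $n_0 - 2$, and its ternary activation pattern $s_{C_{ij}^\ell}$ has $0$'s exactly at positions $(\ell,i)$ and $(\ell+1,j)$ among the relevant entries. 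Apply Proposition~\ref{prop:ternactivestable} to obtain an open neighborhood $N_{ij}^{\ell} \subseteq U_0$ of $\theta$ such that every $\theta' \in N_{ij}^{\ell}$ is generic, supertransversal, and contains a non-empty cell in $\mathcal{C}(F_{\theta'})$ with the same ternary activation pattern $s_{C_{ij}^\ell}$.

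Finally, set $N := \bigcap_{i,j,\ell} N_{ij}^{\ell}$, a finite intersection of open sets, hence open. For any $\theta' \in N$ and any pair $(i,j,\ell)$, the persisting cell with ternary label $s_{C_{ij}^\ell}$ consists of points whose pre-activation outputs vanish exactly at neurons $(\ell, i)$ and $(\ell+1, j)$, so it lies in $\hat{H}^{\ell}_i(\theta') \cap \hat{H}^{\ell+1}_j(\theta')$, making this intersection non-empty. Genericity and supertransversality of $\theta'$ then guarantee that the intersection has codimension $2$ and is transverse on cells. Thus every $\theta' \in N$ satisfies TPIC, establishing openness. The only mildly delicate step is verifying that the persisting cell provided by Proposition~\ref{prop:ternactivestable} indeed certifies transverse (not merely non-empty) intersection, which is where supertransversality on $U_0$ is essential.
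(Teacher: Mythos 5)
Your argument is correct and is exactly the route the paper intends: the corollary is stated without an explicit proof immediately after Proposition~\ref{prop:ternactivestable}, and the intended derivation is precisely your reduction --- witness each non-empty transverse pairwise intersection by an $(n_0-2)$-cell of $\mathcal{C}(F_\theta)$ whose ternary label has zeros at the two relevant neurons, invoke Proposition~\ref{prop:ternactivestable} to persist each such cell (along with genericity and supertransversality) on an open neighborhood, and take the finite intersection of these neighborhoods.
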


The proof of Lemma \ref{lem:openpaths} showed that for a parameter $\theta$ and point $x$ in the intersection of two bent hyperplanes, the polynomials for the four associated sign sequences are distinct.  However, it did not show that there is a neighborhood $N \subset \Omega$ of $\theta$ such that all four sign sequences are actually realized on $\mathbb{R}^{n_0}$ near $x$ for all $\theta' \in N$. The next Lemma combines the persistence of cells realizing sign sequences given by Proposition \ref{prop:ternactivestable} with Lemma \ref{lem:openpaths}.

\color{black}

\begin{lemma}
\label{lem:LRAinanhd}
Suppose $\theta' \in \Omega$ is a generic, supertransversal parameter, let $x = p_{ij}^{\ell}$ be a point in $\hat{H}_i^{\ell-1} \cap \hat{H}_j^{\ell}$, and let $T_{ij}^\ell$ be the union of the closures of the four $\pm$-activation regions adjacent to $x=p_{ij}^{\ell}$ (as in Lemma \ref{lem:openpaths}). If $\Gamma_x\setminus \Gamma_{x,*}$ is non-empty, then there is an open neighborhood $N$ of $\theta' \in \Omega$ and an algebraic, measure $0$ set $S \subset N$ for which every $\theta \in N \setminus S$ satisfies:
\begin{enumerate}[label=(\roman*)]
    \item There are four non-empty $\pm$-activation regions of $F_\theta$ with the same ternary labelings as those in $T_{ij}^\ell$
    \item Letting $T_{ij}^\ell(\theta)$ denote the (non-empty) closure of these four $\pm$ activation regions, LRA is satisfied on $T_{ij}^\ell(\theta)$.
\end{enumerate}
\end{lemma}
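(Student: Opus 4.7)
The plan is to combine Proposition~\ref{prop:ternactivestable}, which guarantees persistence of cells realizing a given ternary activation pattern under small perturbations of parameter, with Lemma~\ref{lem:openpaths}, which provides an algebraic genericity statement for the polynomial maps attached to adjacent $\pm$-activation regions. The two lemmas operate at different levels (topological stability versus algebraic distinctness of polynomials), but they compose cleanly because both are governed by the combinatorial data of the ternary labelings, which are preserved by Proposition~\ref{prop:ternactivestable}.

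First, I would identify the collection of cells in $\mathcal{C}(F_{\theta'})$ whose persistence is needed: the four $\pm$-activation regions $C_{++}, C_{+-}, C_{-+}, C_{--}$ adjacent to $x=p_{ij}^{\ell}$ (with ternary labelings $s_{++}, s_{+-}, s_{-+}, s_{--}$ agreeing with $s_x$ except at the $i$th coordinate of layer $\ell-1$ and the $j$th coordinate of layer $\ell$), together with the $(n_0-2)$-cell $C_0 \subset \hat{H}_i^{\ell-1} \cap \hat{H}_j^{\ell}$ containing $x$ and carrying the ternary label $s_x$. Applying Proposition~\ref{prop:ternactivestable} to each of these five cells produces open neighborhoods $N_{++},N_{+-},N_{-+},N_{--},N_0$ of $\theta'$ in $\Omega$ on which $\theta$ remains generic and supertransversal and on which the corresponding ternary labeling is still realized by a non-empty cell of $\mathcal{C}(F_\theta)$. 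Set $N := N_{++} \cap N_{+-} \cap N_{-+} \cap N_{--} \cap N_0$.

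For any $\theta \in N$, persistence of $C_0$ supplies a non-empty intersection $\hat{H}_i^{\ell-1}(\theta) \cap \hat{H}_j^{\ell}(\theta)$, and the four cells with labelings $s_{\pm\pm}$ are the $\pm$-activation regions adjacent to this intersection (they share $C_0$ as a common boundary face by the structure of the ternary labelings). This yields assertion~(i), and $T_{ij}^{\ell}(\theta)$, the union of their closures, is non-empty. To obtain~(ii), observe that the hypothesis of Lemma~\ref{lem:openpaths}, namely the non-emptiness of $\Gamma_y \setminus \Gamma_{y,*}$ at a point $y \in \hat{H}_i^{\ell-1}(\theta) \cap \hat{H}_j^{\ell}(\theta)$, is a purely combinatorial condition on the ternary labeling $s_y$; since $s_y$ can be chosen to equal $s_x$ by persistence of $C_0$, and the hypothesis of our lemma gives $\Gamma_x \setminus \Gamma_{x,*} \neq \emptyset$, the hypothesis of Lemma~\ref{lem:openpaths} is automatically satisfied for every $\theta \in N$. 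Now Lemma~\ref{lem:openpaths} supplies an algebraic measure-zero set $B \subset \Omega$ such that LRA holds on $T_{ij}^{\ell}(\theta)$ for every generic supertransversal $\theta \in \Omega \setminus B$. Setting $S := N \cap B$ produces an algebraic measure-zero subset of $N$, and for each $\theta \in N \setminus S$ both~(i) and~(ii) hold.

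The main obstacle to watch is ensuring that the combinatorial input to Lemma~\ref{lem:openpaths} transports across the neighborhood $N$: one needs to confirm that for $\theta \in N$ one has a point $y \in \hat{H}_i^{\ell-1}(\theta) \cap \hat{H}_j^{\ell}(\theta)$ whose ternary labeling coincides with $s_x$ and whose four adjacent $\pm$-activation regions have labelings $s_{\pm\pm}$. Both facts follow directly from Proposition~\ref{prop:ternactivestable} applied to $C_0$ and to each of $C_{\pm\pm}$, together with the fact that adjacency of $\pm$-activation regions is determined by the positions of the $0$'s in their labelings. Once this bookkeeping is in place, the rest of the argument is a routine intersection of open sets with the complement of an algebraic measure-zero set.
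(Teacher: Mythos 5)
Your proposal is correct and follows essentially the same route as the paper's own (very terse) proof: apply Proposition~\ref{prop:ternactivestable} to obtain an open neighborhood on which the relevant ternary labelings persist, giving (i), and then intersect with the complement of the algebraic measure-zero set from Lemma~\ref{lem:openpaths} to get (ii). Your version simply spells out the cell-by-cell bookkeeping and the transport of the hypothesis $\Gamma_x \setminus \Gamma_{x,*} \neq \emptyset$ that the paper leaves implicit.
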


\begin{proof} Proposition \ref{prop:ternactivestable} tells us that there is an open neighborhood of $\theta'$ for which each $\theta$ in the neighborhood satisfies (i). 
Moreover Lemma 14 of \citet{Masden} and Lemmas \ref{lem:genericfullmeasure} and \ref{lem:openpaths} tell us that  away from a closed (algebraic) set of measure $0$, the parameters $\theta $ in this open neighborhood satisfy LRA on $T_{ij}^\ell(\theta)$, as desired. 
\end{proof}

\begin{lemma}[\citet{RolnickKording}, Thm. 2] \label{lem:PreciseLRA}
Let $\theta \in \Omega$ be a generic, supertransversal parameter satisfying TPIC. For each $i,j,\ell$, choose a point $p_{ij}^{\ell} \in \hat{H}^{(\ell-1)}_i \cap \hat{H}^{\ell}_j$. Let $T_{i,j}^{\ell}$ denote the union of the closures of the four $\pm$--activation regions adjacent to $p_{ij}^{\ell}$, and let $T = \bigcup_{i,j,\ell} T_{i,j}^{\ell}$. If $F_\theta$ satisfies LRA on $T$, then $\theta$ can be recovered from $F_\theta$ up to permutation and positive-rescaling.  
\end{lemma}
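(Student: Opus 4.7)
The plan is to follow the reverse-engineering algorithm of Theorem 2 of \citet{RolnickKording}, while observing that the algorithm only queries $F_\theta$ on neighborhoods of the intersection points $p_{ij}^\ell$, so the hypothesis that LRA holds on $T = \bigcup T_{ij}^\ell$ (and not necessarily globally) is sufficient. I would induct on the layer index $\ell$: assuming $W^1, b^1, \ldots, W^{\ell-1}, b^{\ell-1}$ have been reconstructed up to permutation and positive rescaling, I would reconstruct $W^\ell$ and (for $\ell < d$) $b^\ell$ from the four-piece local structure of $F_\theta$ at each $p_{ij}^\ell$.

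The key technical step is extracting weights via a \emph{mixed second difference}. At $p_{ij}^\ell$, the LRA on $T_{ij}^\ell$ ensures that the four $\pm$-activation regions adjacent to $p_{ij}^\ell$ give rise to four distinct affine-linear pieces of $F_\theta$; label them $F_{\epsilon \delta}$ for $\epsilon, \delta \in \{+,-\}$ according to the signs of the pre-activations of neurons $(i,\ell-1)$ and $(j,\ell)$. By Lemma \ref{lem:poly}, each $F_{\epsilon\delta}$ is a polynomial sum over open paths of the augmented computational graph, and the mixed difference
\[
\Delta_{ij}^\ell \; := \; \nabla (F_{++} - F_{+-} - F_{-+} + F_{--})
\]
selects exactly those paths that pass through both the edge into neuron $(i,\ell-1)$ and the edge out of neuron $(j,\ell)$. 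These paths factor as (incoming paths to $(i,\ell-1)$)$\,\cdot\, W^{\ell}_{j i} \,\cdot\,$(outgoing paths from $(j,\ell)$), so $\Delta_{ij}^\ell$ equals $W^\ell_{ji}$ times a tensor product of an upstream factor (known by the inductive hypothesis up to a per-neuron positive scalar) and a downstream factor (the same across all $i$ with fixed $j$).

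Fixing $j$ and letting $i$ range over all of layer $\ell-1$ lets me cancel the common downstream factor by ratios, isolating $W^\ell_{ji}$ up to a single multiplicative constant $c_{\ell-1,i} > 0$ per upstream neuron; these constants are absorbed into the scaling symmetry at the $(i,\ell-1)$-neurons, consistent with the inductive indeterminacy. TPIC guarantees that for each $(i,j)$ such a point $p_{ij}^\ell$ exists, so every entry $W^\ell_{ji}$ is determined (and its sign is fixed by the prescribed co-orientation of the bent hyperplanes). The bias $b^\ell_j$ is then read off from the location of the bent hyperplane $\hat{H}^\ell_j$, which by Lemma \ref{lem:1skeletonforCombStabLRA} is identifiable from the non-smoothness locus of $F_\theta$ on $T$. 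For the base case $\ell = 1$, the bent hyperplanes are genuine affine hyperplanes, and $(W^1, b^1)$ is read directly from the arrangement $\mathcal{A}^1$, up to the permutation of neurons and the positive rescaling of each row, which sets the initial ambiguity.

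The main obstacle is the careful bookkeeping of the permutation-and-rescaling indeterminacy through the induction, in particular verifying that the scalar ambiguity factors exactly as the scaling symmetry at each neuron and that the sign of each weight is pinned down. A secondary subtlety is ensuring that the mixed difference $\Delta_{ij}^\ell$ is actually nonzero at $p_{ij}^\ell$: this follows from genericity together with the non-vanishing polynomial computation that already appears in the proof of Lemma \ref{lem:openpaths}, applied to the four polynomials $\mathcal{F}_{s_{++}}, \mathcal{F}_{s_{+-}}, \mathcal{F}_{s_{-+}}, \mathcal{F}_{s_{--}}$ there. Once these points are verified, the inductive reconstruction terminates at layer $d$, yielding $\theta$ up to permutation and positive rescaling, as required.
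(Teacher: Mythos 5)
Your proposal is correct and takes essentially the same approach as the paper: the paper's entire proof is the observation that the reverse-engineering algorithm of Theorem 2 of \citet{RolnickKording} only requires LRA on the closures of the four $\pm$-activation regions adjacent to one point in each pairwise intersection, which is exactly your opening observation. The additional material you supply (the layer-by-layer induction and the mixed-second-difference extraction of $W^\ell_{ji}$ from the four local affine pieces) is a faithful sketch of the cited Rolnick--Kording algorithm rather than a different route.
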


\begin{proof}
The proof of Theorem 2 and associated algorithm in \citet{RolnickKording} require only that the LRA is satisfied in the (closures of) the four adjacent $\pm$-activation regions for one point in each relevant transverse pairwise intersection.
\end{proof}

\begin{remark}
    Lemma \ref{lem:PreciseLRA} tells us that in order for a parameter satisfying TPIC to admit no hidden symmetries, it need not satisfy LRA everywhere but only on the union of the closures of all activation regions near the intersection points (the set $T$ in the statement of Lemma \ref{lem:PreciseLRA}. Accordingly, we will say that a parameter $\theta \in \Omega$ satisfying the assumptions of Lemma \ref{lem:PreciseLRA} {\em satisfies TPIC and LRA on a neighborhood of the pairwise intersections}.
\end{remark}

\section{Main Theorem and Proof}

\begin{theorem} \label{thm:TPIC-formal} Let $(n_0, \ldots, n_d)$ be a neural network architecture satisfying $(n_0 = k) \leq n_\ell$ for all $\ell$, and let $\Omega = \mathbb{R}^D$ denote its parameter space. There exists a positive measure subset $Y \subset \Omega$ for which each $\theta \in Y$ satisfies TPIC and LRA on a neighborhood of the pairwise intersections, hence has no hidden symmetries.
\end{theorem}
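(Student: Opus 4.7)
The plan is to proceed by induction on the depth $d$, constructing at each stage a single parameter $\theta \in \Omega$ that is generic, supertransversal, satisfies TPIC, and satisfies LRA on a neighborhood of the pairwise intersections. Once such a $\theta$ is exhibited, combining Corollary \ref{p:TPICOpenCondition} (openness of TPIC) with Lemma \ref{lem:LRAinanhd} (stability of LRA on the relevant closed set) yields an open neighborhood $N$ of $\theta$ and a full-measure subset $Y \subset N$ on which all three properties persist; by Lemma \ref{lem:PreciseLRA}, every parameter in $Y$ has no hidden symmetries, and $Y$ has positive measure in $\Omega$.

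For the base case $d=2$, choose $F^1$ generically. By Lemma \ref{lem:intunbddcell}, for almost every such choice the image under $F^1$ of each affine hyperplane $H^1_i \subset \mathbb{R}^{n_0}$ contains unbounded rays in the non-negative orthant $\mathbb{O}^{\geq 0} \subset \mathbb{R}^{n_1}$. Pick the hyperplanes of $\mathcal{A}^2$ to be positive-axis hyperplanes with sufficiently negative biases; each then meets every such ray. Applying Lemma \ref{lem:Nonemptypreimage} with $G = F^1$, $S = H^1_i$, $S' = H^2_j$ gives $\hat{H}^1_i \cap \hat{H}^2_j \neq \emptyset$ in $\mathbb{R}^{n_0}$, which is TPIC. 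Genericity and supertransversality are full-measure conditions, and LRA on a neighborhood of the pairwise intersections follows from Lemma \ref{lem:openpaths} once one observes that each intersection point has a non-trivial open complete path (automatic, since sufficiently many first- and second-layer neurons are active at such points by construction).

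The inductive step assumes a depth $(d-1)$ parameter $\theta_{d-1}$ satisfying all conditions and appends a layer $F^d$ to produce $\theta_d$. Adding a layer preserves every previously-established bent hyperplane and its intersections, so the task is only to choose parameters of $F^d$ so that each new bent hyperplane $\hat{H}^d_j$ meets every $\hat{H}^{d-1}_i$. By Proposition \ref{prop:unbddimage} and Corollary \ref{cor:AffIsomorph}, there exists an unbounded polyhedral $n_0$-cell $\mathcal{S} \in \mathcal{C}(F_{\theta_{d-1}})$ whose sign sequence has full dimension $k$ in every layer, and $F_{(d-1)}$ restricts to a homeomorphism from $\mathrm{int}(\mathcal{S})$ onto its image $F_{(d-1)}(\mathcal{S}) \subset \mathbb{O}^{\geq 0}_k \subset \mathbb{R}^{n_{d-1}}$. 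Choose the hyperplanes of $\mathcal{A}^d$ to be positive-axis hyperplanes; using Lemma \ref{lem:TransByCvxCone} and Proposition \ref{prop:perturb}, translate them along $\mbox{Cone}(F_{(d-1)}(\mathcal{S}))$ and perturb generically so that the restricted arrangement inside $F_{(d-1)}(\mathcal{S})$ is generic, its bounded subcomplex lies in the interior of $F_{(d-1)}(\mathcal{S})$, and each $H^d_j$ meets every image $F_{(d-1)}(\hat{H}^{d-1}_i)$. Lemma \ref{lem:Nonemptypreimage}, combined with the homeomorphism property from Corollary \ref{cor:AffIsomorph}, then pulls these intersections back to non-empty transverse intersections $\hat{H}^d_j \cap \hat{H}^{d-1}_i$ in $\mathbb{R}^{n_0}$. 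Supertransversality of the full arrangement and LRA on a neighborhood of the new intersections follow exactly as in the base case, because at least $k$ neurons are active in every layer at each new intersection point, so the hypothesis $\Gamma_x \setminus \Gamma_{x,*} \neq \emptyset$ of Lemma \ref{lem:openpaths} is satisfied.

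The main obstacle is the inductive step: one must simultaneously preserve all previous intersections, force the new pairwise intersections with the penultimate layer, and retain genericity, supertransversality, and the open-path condition needed for LRA. The decisive tool is the homeomorphism property on $\mathcal{S}$ provided by Corollary \ref{cor:AffIsomorph}, which reduces a pull-back problem in $\mathbb{R}^{n_0}$ to an intersection problem in $\mathbb{R}^{n_{d-1}}$ solvable by appropriately biased positive-axis hyperplanes. Once a single valid $\theta$ is produced, the positive-measure conclusion follows at once from the openness and stability results cited in the first paragraph.
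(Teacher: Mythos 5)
Your proposal is correct and follows essentially the same route as the paper: induction on depth, positive-axis high-bias hyperplanes plus Lemma \ref{lem:intunbddcell} and Lemma \ref{lem:Nonemptypreimage} for the base case, and Proposition \ref{prop:unbddimage}, Corollary \ref{cor:AffIsomorph}, Proposition \ref{prop:perturb}, and Lemma \ref{lem:TransByCvxCone} for the inductive step, with the positive-measure conclusion coming from Corollary \ref{p:TPICOpenCondition}, Lemma \ref{lem:LRAinanhd}, and Lemma \ref{lem:PreciseLRA}. The only difference is presentational: the paper makes the inductive hypothesis explicitly stronger, carrying along a nested sequence of unbounded $k$-cells $\mathcal{S}_1 \supseteq \cdots \supseteq \mathcal{S}_d$ with prescribed ternary labels and a condition locating the bounded subcomplex of each restricted arrangement inside the previous cell, which is exactly the bookkeeping your sketch leaves implicit when asserting that a suitable cell $\mathcal{S}$ exists at each stage.
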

\color{black}

\begin{proof}[Proof of Theorem \ref{thm:TPIC-formal}] In the course of the proof we will need the following additional notation. Let $\mathcal{A}^\ell = \{H_1^{\ell}, \ldots, H_{n_{\ell}}\}$ be a hyperplane arrangement in $\mathbb{R}^{n_{\ell-1}}$. If the initial coordinate $k$--plane $\mathbb{R}^{n_\ell}_k$ is transverse to $H_i^{\ell}$, then the intersection, $H_i^{\ell} \cap \mathbb{R}^{n_{\ell-1}}_k$, is a hyperplane in $\mathbb{R}^{n_{\ell-1}}_k$. In this case we will use:
\begin{equation}\label{eqn:restricttoRk}
K_i^{\ell}:= H_i^{\ell} \pitchfork \mathbb{R}^{n_{\ell-1}}_k
\end{equation} 
to denote this hyperplane in $\mathbb{R}^{n_{\ell-1}}_k$. If $\mathcal{A}^{\ell} \pitchfork_c \mathbb{R}^{n_{\ell-1}}_k$, then this implies that all of the hyperplanes in $\mathcal{A}^{\ell}$ intersect $\mathbb{R}^{n_{\ell-1}}_k$ transversely, and we will use $\mathcal{K}^{\ell}$ to denote the corresponding hyperplane arrangement in $\mathbb{R}^{n_{\ell-1}}_k$.

We now proceed to prove the theorem  by construction. Our strategy will be to find a particular generic, supertransversal parameter $\theta' \in \Omega$ satisfying TPIC and LRA on a neighborhood of the intersections. It will then follow by Lemmas \ref{lem:LRAinanhd} and \ref{lem:PreciseLRA} that every parameter $\theta$ away from a measure zero set in a neighborhood of $\theta'$ satisfies TPIC and LRA on a neighborhood of the intersections. The conclusion of the theorem will then follow.

 Our construction will be \color{black} by induction on $d$. We will find it convenient to prove the following strictly stronger (and unavoidably technical) conclusion, since it helps with the inductive construction:  There exists a generic, supertransversal choice of parameters whose associated sequence of polyhedral refinements $\mathcal{C}(F_1 = F_{(1)}) \succeq \ldots \succeq \mathcal{C}(F=F_{(d)})$ contains a nested sequence of unbounded $k$--cells $\mathcal{S}_1 \supseteq \ldots \supseteq \mathcal{S}_d$ for which the $\ell$-th ternary labeling on $\mathcal{S}_\ell$ is $s^{\ell} = (\underbrace{+1, \ldots, +1}_{k}, \underbrace{-1, \ldots -1}_{n_\ell - k})$ for all $\ell \leq d$ and which satisfies the additional conditions that 
\begin{enumerate}[label=(\roman*)]
	\item $\mathcal{S}_\ell \pitchfork_c (\hat{H}_i^{\ell + 1} \pitchfork_c \hat{H}_j^{\ell + 2}) \neq \emptyset$ for all $i,j$ when $\ell \leq d-2$, and
	\item $\mathbb{R}^{n_{\ell}}_k \pitchfork_c \mathcal{C}(\mathcal{A}^{\ell+1})$, and the preimage of the bounded subcomplex of $\mathcal{C}(\mathcal{K}^{\ell+1})$ under the map $F_{(\ell)}$ is contained in the interior of $\mathcal{S}_{\ell}$ for all $\ell \leq d-1$. 
\end{enumerate}

Note that condition (i) above implies TPIC,  and the assumption that the $\ell$th ternary labeling on $\mathcal{S}_\ell$ has $k$ $+1$'s for all $\ell$ is enough to guarantee that for each pairwise intersection $x = p_{ij}^\ell$ the set $\Gamma_x \setminus \Gamma_{x,*}$ referenced in Lemma \ref{lem:openpaths} is non-empty, hence LRA will be satisfied in a neighborhood of the intersections. 

\color{black}

When $d=1$, choose $\mathcal{A}^1 = \{H^1_1, \ldots, H^1_{n_1}\}$ to be any generic arrangement of hyperplanes. Choose any unbounded $k$--cell of 
 $\mathcal{C}\left(\mathcal{A}^{1}\right)$ and call it $\mathcal{S}_1$. Note that we can alter the ordering and co-orientations on the hyperplanes of $\mathcal{A}^{1}$ (without affecting the arrangement) to ensure that the first $k$ neurons of $F^{1}$ are active on $\mathcal{S}_1$ and the remainder are inactive. That is, the ternary labeling on $\mathcal{S}_1$ is $s^{1} = (\underbrace{+1, \ldots, +1}_{k}, \underbrace{-1, \ldots -1}_{n_1 - k})$. The rest of the conditions are vacuously true.

Now suppose $d=2$. By Corollary \ref{cor:AffIsomorph}, we know that $F^{1}$ restricted to the interior of $\mathcal{S}_1$ is a homeomorphism onto the interior of the $k$--dimensional polyhedral set $F^{1}(\mathcal{S}_1)$ in $\mathbb{O}^{\geq 0}_k \subseteq \mathbb{R}^{n_1}$. Moreover, $F^{1}(\mathcal{S}_1)$ is unbounded by Proposition \ref{prop:unbddimage}. Lemma \ref{lem:intunbddcell} then tells us that there exists a positive-axis sufficiently high-bias hyperplane $H \subset \mathbb{R}^{n_1}$ for which $F^{1}(\mathcal{S}_1) \pitchfork_c H \neq \emptyset$. 

Proposition \ref{prop:perturb} ensures we can choose sufficiently small perturbations $H_1^{2}, \ldots, H_{n_2}^{2}$ of  $H$ so that all of the hyperplanes in $\mathcal{A}^{2}$ are transverse to $\mathcal{S}_1 \subseteq \mathbb{R}^{n_1}$, the parameters associated to $\mathcal{A}^{1}, \mathcal{A}^{2}$ are generic and supertransversal, and the bounded subcomplex of the restricted hyperplane arrangement $\mathcal{K}^{2} = \mathcal{A}^2 \cap \mathbb{R}^{n_1}_k$ is contained in $\mbox{int}(F^{1}(\mathcal{S}_1))$.  Another application of Corollary \ref{cor:AffIsomorph} allows us to conclude that the preimage of the bounded subcomplex of $\mathcal{C}(\mathcal{K}^2)$ is contained in the interior of $\mathcal{S}_1$, and hence the technical inductive conclusion (ii) is satisfied. Since $d = 2$, the technical inductive conclusion (i) is vacuous, so the $d=2$ case is proven.

Now assume $d > 2$. By the inductive assumption, there exists a generic, supertransversal choice of parameters for $F^{1}, \ldots, F^{d-1}$ for which the technical inductive conditions above are satisfied for all $\ell$ up through $d-1$. Therefore, we need only choose generic parameters for $F^{d}$ that are supertransversal to all previous choices of parameters and for which in the polyhedral refinement $\mathcal{C}(F_{(d-1)})$ of $\mathcal{C}(F_{(d-2)})$ we have identified an unbounded $k$--cell $\mathcal{S}_{d-1} \subseteq \mathcal{S}_{d-2}$ with ternary labeling $s^{d-1} = (\underbrace{+, \ldots, +}_{k}, \underbrace{-, \ldots -}_{n_{d-1} - k})$, 
\begin{enumerate}
	\item $\mathcal{S}_{d-2} \pitchfork_c (\hat{H}_i^{d-1} \pitchfork_c \hat{H}_j^{d}) \neq \emptyset$ for all $i,j$, and 
	\item $\mathbb{R}^{n_{d-1}}_k \pitchfork_c \mathcal{C}(\mathcal{A}^{d})$, and the preimage of the bounded subcomplex of $\mathcal{C}(\mathcal{K}^{d})$ under the map $F_{(d-1)}$ is contained in the interior of $\mathcal{S}_{d-1}$. 
\end{enumerate}	
	
Proceed by choosing any unbounded $k$--cell contained in $\mathcal{S}_{d-2}$ in the polyhedral refinement $\mathcal{C}(F_{(d-1)})$ and call it $\mathcal{S}_{d-1}$. As before, we are free to choose the ordering and co-orientations on the hyperplanes of $\mathcal{A}^{d-1}$ without altering $\hat{\mathcal{A}}^{d-1}$ or the polyhedral refinement $\mathcal{C}(F_{(d-1)})$. So we can arrange for $\mathcal{S}_{d-1}$ to have the desired ternary labeling. 

By Corollary \ref{cor:AffIsomorph} we know that $F_{(d-1)}$ restricted to the interior of $\mathcal{S}_{d-1}$ is a homeomorphism onto the interior of an unbounded $k$--dimensional polyhedral set in $\mathbb{O}^{\geq 0}_k \subseteq \mathbb{R}^{n_{d-1}}$, and we can therefore choose a sufficiently high bias positive-axis hyperplane $H$ in $\mathbb{R}^{n_{d-1}}$ so that $F_{(d-1)}(\mathcal{S}_{d-1}) \pitchfork_c H \neq \emptyset$. By Proposition \ref{prop:perturb} we can find small perturbations $H_1^{d}, \ldots, H_{n_d}^{d}$ such that the preimage of the bounded subcomplex of the restricted generic hyperplane arrangement is contained in $F_{(d-1)}(\mathcal{S}_{d-1})$, which we may assume is unbounded by Proposition  \color{red} \ref{prop:unbddimage}. \color{black}

Moreover, Lemma \ref{lem:intunbddcell} tells us that since $H \subseteq \mathbb{R}^{n_{d-1}}$ is a sufficiently high bias positive-axis hyperplane, \[F^{d-1}(H^{d-1}_i) \pitchfork H \neq \emptyset\] for all $i$. Since non-empty transverse intersection is an open condition, we also know that \[F^{d-1}(H^{d-1}_i) \pitchfork_c H^{d}_j \neq \emptyset\] for all $i,j$. An application of Lemma \ref{lem:transpullback} then tells us that \[H^{d-1}_i \pitchfork \grave{H}^{d}_j \neq \emptyset\] for all $i,j$.

We would now like to conclude that condition (i) is satisfied for $\ell = d-2$, but although we know that $F_{(d-1)}(\mathcal{S}_{d-2}) \pitchfork_c H_i^{d-1} \neq \emptyset$ and $H^{d-1}_i \pitchfork_c \grave{H}^d_j \neq \emptyset$ we don't yet know that $H^{d-1}_i \cap \grave{H}^d_j$ are contained in the unbounded polyhedral set $F_{(d-2)}(\mathcal{S}_{d-2})$, so we cannot yet conclude that the three-fold intersections $\mathcal{S}_{d-2} \pitchfork_c (\hat{H}_i^{d-1} \pitchfork_c \hat{H}_j^{d})$ are non-empty. 

To arrange for this, choose one point of intersection from $H^{d-1}_i \cap \grave{H}^d_j$ for each $i,j$. This is a finite, hence bounded, set. Call it $B$.

We now appeal to Lemma \ref{lem:TransByCvxCone}, which tells us that there is some vector $v \in \mbox{Cone}(F_{(d-2)}(\mathcal{S}_{d-2}))$ that can be added to $B$ so that $B + v \in \mbox{Cone}(F_{(d-2)}(\mathcal{S}_{d-2}))$. Since we can achieve this translation by altering just the biases of $F^{d}$ and $F^{d-1}$, we have now arranged that condition (i) is satisfied. The inductive proof that our construction satisfies TPIC is now complete,  and hence a positive measure subset of $\Omega$ has no hidden symmetries, as desired. \color{black}

\end{proof}

\color{black}

\color{black}


\section{Functional dimension, Fibers, and Symmetries}

Roughly speaking, the functional dimension of a parameter $\theta \in \Omega_{n_0,\ldots,n_d}$ is the dimension of the space of functions $F_{\theta}$ realizable by infinitesimally perturbing $\theta$. We will now make this more precise.  
Suppose $Z = \{z_1,\ldots,z_k\}$ is a finite collection of points in the domain $\mathbb{R}^{n_0}$ of $F_{\theta}$. For any $\theta$, we may record the result of evaluating the map $F_{\theta}$ at all $k$ points of $Z$ as a single ``unrolled'' vector, i.e. 
$$E_Z(\theta) \coloneqq (F_{\theta}(z_1),\ldots,F_{\theta}(z_k)) \in \mathbb{R}^{kn_d}.$$
We can then measure how many degrees of freedom we have to vary this data locally at $\theta$ by considering the rank of the total (Jacobian) derivative of the map $\theta \mapsto E_Z(\theta)$, $\textrm{rank}(\boldsymbol{J}E_z(\theta))$.

Of course, because ReLU is not differentiable at $0$, there is a (Lebesgue) measure $0$ set of pairs $(\theta,x)$ in $\Omega_{n_0,\ldots,n_d} \times \mathbb{R}^{n_0}$ at which the total derivative does not exist.  Consequently, we will wish to restrict our attention to pairs $(\theta,x)$ at which all relevant partial derivatives exist. 

\begin{definition}[\citet{GLMW}] \label{def:parametricallysmooth}
  A point $x \in \mathbb{R}^{n_0}$ is \emph{parametrically smooth} for a parameter $\theta \in \Omega_{n_0,\ldots,n_d}$ if $(\theta,x)$ is a smooth point for the \emph{parameterized family}
  $\mathcal{F}: \Omega_{n_0,\ldots,n_d} \times \mathbb{R}^{n_0} \to \mathbb{R}^{n_d}$ defined by 
  $\mathcal{F}(\theta,x) = F_{\theta}(x)$, where $F_{\theta}:\mathbb{R}^{n_0} \to \mathbb{R}^{n_d}$ is the neural network map determined by the parameter $\theta$.
\end{definition}


\begin{definition} \label{def:functionaldimension}
The functional dimension of a parameter $\theta \in \Omega_{n_0,\ldots,n_d}$ is
$$\textrm{dim}_{\textrm{fun}}(\theta) \coloneqq \sup \left \{ \textrm{rank}(\boldsymbol{J}E_Z(\theta)) \mid Z\subset \mathbb{R}^{n_0} \textrm{ is a finite set of parametrically smooth points for } \theta \right \}$$
where the supremum is taken over all sets $Z$ consisting of finitely many points in $\mathbb{R}^{n_0}$.
\end{definition}

In practice, in experiments such as those in \S\ref{sec:experiments}, we ignore the issue of differentiability, assuming that all points in a randomly selected set $Z \subset \mathbb{R}^{n_0}$ are parameterically smooth for the parameter; the assumption is supported by the fact that $\mathcal{F}$ is smooth except on a set of $0$ measure, 

\begin{definition}
The \emph{fiber}  (with respect to the realization map) of a parameter $\theta \in \Omega$ is the set $\{\tilde{\theta} \in \Omega \mid F_{\tilde{\theta}} = F_{\theta}\}.$ 

\end{definition}

Recall that elements of {\bf (P)}, the set  of permutations, and elements of {\bf (S)}, the scalings, act on $\Omega$. Moreover, these actions commute, so the semigroup generated by {\bf (P)} and {\bf (P)} equals $\{p \circ s \mid p \in \textbf{(P)}, s \in \textbf{(S)}\}$. The following definition formalizes the notion that a \emph{hidden symmetry} describes two parameters that define the same function, but do not differ by elements of {\bf (P)} and {\bf (S)}. 

\begin{definition} \label{def:noHiddenSym}
A parameter $\theta \in \Omega$ \emph{has no hidden symmetries} if the semigroup generated by {\bf (P)} and {\bf (S)} acts transitively on the fiber of $\theta$.  
\end{definition}

In other words, $\theta$ has no hidden symmetries if, given any two parameters $\theta_1$ and $\theta_2$ such that $F_{\theta_1} =F_{\theta_2} = F_{\theta}$, there exists $p \in \textbf{(P)}$ and $s \in \textbf{(S)}$ such that $p \circ s(\theta_1) = \theta_2$.
\color{black}

\begin{definition}  \label{def:localsymmetry}  \ 
\begin{enumerate}
\item A \emph{pointwise symmetry} of $\theta$ is a permutation (i.e a bijection, not necessarily continuous) of the fiber of $\theta$.
\item A \emph{local symmetry} of $\theta$ is a homeomorphism of the fiber of $\theta$ equipped with the subspace topology (as a subset of $\Omega$).  
\item A \emph{global symmetry} of $\Omega$ is a homeomorphism $T:\Omega \to \Omega$ such that $F_{T(\theta)} = F_{\theta}$ for all $\theta \in \Omega$. 
\end{enumerate}
\end{definition}

As shown in \citet{GLMW}, there exists a fiber and a permutation of that fiber that cannot be extended to a homeomorphism of an open neighborhood of that fiber.  This is because it is possible for a fiber to contain two parameters $\theta_1$ and $\theta_2$ (of the same architecture) such that  any arbitrarily small neighborhood of $\theta_1$ gives rise to functions that cannot be realized by parameters in arbitrarily small neighborhoods of $\theta_2$.  Said otherwise, there exist pointwise symmetries that cannot be extended to local symmetries. Our definition of \emph{no hidden symmetries} (Definition \ref{def:noHiddenSym}) is a statement about pointwise symmetries -- that every pointwise   symmetry of $\theta$ is the restriction to the fiber of $\theta$ of an element $p \circ s$ of the group of global symmetries generated by \textbf{(P)} and \textbf{(S)}.

The theoretical upper bound on functional dimension (\cite{GLMW}) comes from taking account of (only) the well-known global symmetries \textbf{(P)} and \textbf{(S)}. The intuition is that the set of functions realizable by parameters in a small neighborhood $U$ of $\theta$ should be modeled by the quotient of $U$ by the equivalence relation defined by $\bf{(P)}$ and $\bf{(S)}$.

\begin{lemma} \label{lem:AchievesUpperBound}
Let $U$ be an open ball in $\Omega$ such that if any two points $u_1, u_2 \in U$ satisfy $F_{u_1} = F_{u_2}$ if and only if $u_2 = p \circ s (u_1)$ for some $p \in \textbf{(P)}$ and $s \in \textbf{(S)}$. Then the functional dimension of any parameter $\theta \in U$ attains the theoretical upper bound. 
\end{lemma}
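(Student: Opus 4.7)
The plan has three steps: first, relate the functional dimension at $\theta$ to the local codimension at $\theta$ of the realization fiber $\mathrm{Fib}(\theta) := \{\tilde\theta \in \Omega : F_{\tilde\theta} = F_\theta\}$ via a rank--nullity argument on the evaluation map $E_Z$; second, use the hypothesis on $U$ to compute $\dim_\theta \mathrm{Fib}(\theta)$ exactly; third, match this with the upper bound from \citet{GLMW}.

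For step one, I would first observe the easy inclusion $T_\theta \mathrm{Fib}(\theta) \subseteq \ker \mathbf{J} E_Z(\theta)$ for every finite $Z \subset \mathbb{R}^{n_0}$ of parametrically smooth points: a smooth curve in $\mathrm{Fib}(\theta)$ through $\theta$ is a curve of parameters all yielding $F_\theta$, so $E_Z$ is constant along it. For the reverse, note that the subspaces $\ker \mathbf{J} E_Z(\theta)$ are monotonically decreasing in $Z$ and bounded below by $T_\theta \mathrm{Fib}(\theta)$; since everything is finite-dimensional, some finite $Z^\ast$ realizes the intersection $\bigcap_Z \ker \mathbf{J} E_Z(\theta)$, and this intersection consists of precisely those $v$ that generate no first-order change in $F_\theta(z)$ at any $z$. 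Using the piecewise-polynomial description of $\mathcal{F}(\theta,x)$ near $\theta$ (Lemma~\ref{l:3.4}), together with the local smooth manifold structure of the fiber established in step two, such $v$ must lie in $T_\theta \mathrm{Fib}(\theta)$. The definition of functional dimension as a supremum (Definition~\ref{def:functionaldimension}) then gives $\fdim(\theta) = D - \dim_\theta \mathrm{Fib}(\theta)$.

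For step two, I would unpack the hypothesis on $U$: $\mathrm{Fib}(\theta) \cap U$ coincides with the orbit of $\theta$ under the semigroup generated by \textbf{(P)} and \textbf{(S)}, intersected with $U$. Because \textbf{(P)} is finite, by shrinking $U$ around $\theta$ the permutation action contributes only isolated points, so the local dimension of $\mathrm{Fib}(\theta)$ at $\theta$ equals the dimension of the \textbf{(S)}-orbit of $\theta$. The group \textbf{(S)} is an abelian Lie group isomorphic to $\mathbb{R}_{>0}^N$, where $N = \sum_{i=1}^{d-1} n_i$ is the total number of hidden neurons; its orbit through $\theta$ is a smooth submanifold of dimension at most $N$, with equality exactly when the \textbf{(S)}-action is locally free at $\theta$. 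Local freeness can fail only if scaling at some hidden neuron acts trivially, which requires all incoming weights and the bias at that neuron to vanish; but then the neuron is either stably unactivated (mechanism~\ref{mech:stablyUnactivated}) or outputs a constant that can be re-absorbed into downstream biases, producing in either case a continuous family of hidden symmetries inside $U$ not of the form $p \circ s$, contradicting the hypothesis. Hence $\dim_\theta \mathrm{Fib}(\theta) = N$.

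For step three, the theoretical upper bound from \citet{GLMW} arises from exactly the same count of continuous symmetries, namely the $N$-dimensional \textbf{(S)}-action, so the bound equals $D - N$. Combining with the previous steps gives $\fdim(\theta) = D - N$, matching the upper bound, which proves the lemma. The main obstacle is the reverse inclusion in step one: converting ``$v$ annihilates the first-order variation of $F_\theta$ at every point of a sufficiently rich finite set $Z$'' into ``$v$ is tangent to $\mathrm{Fib}(\theta)$'' is plausible but requires care, using the local polynomial description from Lemma~\ref{l:3.4} in concert with the smooth manifold structure of the fiber established in step two to close the logical loop between the two steps.
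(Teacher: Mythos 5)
Your proposal is correct in outline and takes essentially the same route as the paper: both arguments reduce the claim to the statement that, under the hypothesis on $U$, the fiber of $\theta$ is locally the orbit of the $N$-dimensional scaling action \textbf{(S)} (the permutations \textbf{(P)} being discrete), so the local space of realizable functions has dimension $D-N$, which is exactly the upper bound of \citet{GLMW}. The one step you rightly flag as delicate --- that a direction annihilating all first-order variation of $F_\theta$ must actually be tangent to the fiber, i.e.\ that the Jacobian-rank definition of $\fdim$ computes the codimension of the fiber rather than something smaller --- is precisely the step the paper compresses into ``it follows that $\fdim(\theta)$ is the Euclidean dimension of $U/\sim$,'' so your write-up is no less complete than the published proof on this point.
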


\begin{proof}
Denote by $\sim$ the equivalence relation on $U$ defined by the semigroup generated by \textbf{(P)} and \textbf{(S)}.  Denote by $\mathcal{F}\vert_U$ the set of functions realizable by parameters in $U$, equipped with the compact-open topology. 
The condition that $F_{u_1} = F_{u_2}$ if and only if $u_2 = p \circ s (u_1)$ for some $p \in \textbf{(P)}$ and $s \in \textbf{(S)}$ is equivalent to the statement that the quotient space $U / \sim$ is homeomorphic to $\mathcal{F}_U$.  It follows that for any parameter $\theta \in U$, $\textrm{dim}_{\textrm{fun}}(\theta)$ is the Euclidean dimension of the quotient space $U / \sim$. The result follows. 
\end{proof}

\subsection{Proof of Proposition \ref{prop:containedInSubspace}} \label{ss:mechanismsProofs}

 \color{black}
   The following Lemma will be used to prove Proposition \ref{prop:containedInSubspace}.
   
\begin{lemma} \label{l:rotate}
Let $S$ be a hyperplane in $\mathbb{R}^n$.   Let $A:\mathbb{R}^{n} \to \mathbb{R}^{1}$ be an affine-linear map given by 
$$A(\vec{x}) = \vec{x} \cdot \vec{n}_H - b_H.$$ 
Let $\vec{o} \in \mathbb{R}^n$ be a vector orthogonal to $S$ (i.e. if $\vec{s}_1, \vec{s}_2$ are two points in $S$, then $(\vec{s}_2-\vec{s}_1) \cdot \vec{o} = 0$).  For each $t \in \mathbb{R}$, define the affine-linear map $A_t:\mathbb{R}^n \to \mathbb{R}$ by 
$$A_t(\vec{x}) = \vec{x} \cdot (\vec{n}_H + t \vec{o}) - b_H - t\vec{s}_H \cdot \vec{o}$$
for any fixed point $\vec{s}_H \in S \cap \{x \mid A(x) = 0\}$.
Then $A$ and $A_t$ coincide on $S$. 
\end{lemma}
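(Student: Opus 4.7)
The plan is to verify the identity $A_t(\vec{x}) = A(\vec{x})$ for every $\vec{x} \in S$ by direct substitution, then reduce the claim to the orthogonality hypothesis on $\vec{o}$. There is essentially one calculation to do.

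First I would form the difference of the two affine-linear maps at an arbitrary $\vec{x} \in \mathbb{R}^n$:
\[
A_t(\vec{x}) - A(\vec{x}) \;=\; \vec{x}\cdot(t\vec{o}) - t\,\vec{s}_H\cdot\vec{o} \;=\; t\,(\vec{x} - \vec{s}_H)\cdot\vec{o}.
\]
The lemma therefore reduces to showing that $(\vec{x} - \vec{s}_H)\cdot\vec{o} = 0$ whenever $\vec{x}\in S$. By hypothesis $\vec{s}_H \in S \cap \{A=0\}$, so for any $\vec{x}\in S$ the vector $\vec{x} - \vec{s}_H$ is a difference of two points of $S$. The stated orthogonality condition on $\vec{o}$ is exactly that $(\vec{s}_2 - \vec{s}_1)\cdot\vec{o} = 0$ for all $\vec{s}_1,\vec{s}_2\in S$, which applies here with $\vec{s}_2 = \vec{x}$ and $\vec{s}_1 = \vec{s}_H$. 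This forces the inner product to vanish and completes the argument.

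I do not anticipate any real obstacle. The only subtlety is bookkeeping: the hypothesis on $\vec{o}$ is phrased in terms of differences of points of $S$ (an affine statement) rather than as orthogonality to $S$ viewed as a linear subspace, and one must use the given point $\vec{s}_H \in S$, rather than the origin, as the base point in the subtraction $\vec{x} - \vec{s}_H$. This is precisely why the bias is adjusted by $-t\,\vec{s}_H\cdot\vec{o}$ in the definition of $A_t$: it is the unique bias correction that cancels the unwanted $t\,\vec{x}\cdot\vec{o}$ term on $S$.
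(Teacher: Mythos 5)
Your proof is correct and is essentially the same argument as the paper's: the paper verifies agreement at $\vec{s}_H$ and equality of differences $A(\vec{s}_1)-A(\vec{s}_2)=A_t(\vec{s}_1)-A_t(\vec{s}_2)$ separately, while you consolidate both steps into the single identity $A_t(\vec{x})-A(\vec{x})=t(\vec{x}-\vec{s}_H)\cdot\vec{o}$, which vanishes on $S$ by the orthogonality hypothesis. No gaps.
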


\begin{proof}
Since $A, A_t$ are affine-linear maps, it suffices to show that they agree at a point (in particular, at the point $\vec{s}_H$) and that $$A(\vec{s}_1) - A(\vec{s}_2) = A_t(\vec{s}_1) - A_t(\vec{s}_2)$$ for all $\vec{s}_1,\vec{s}_2 \in S$.  
First, observe that $$A(\vec{s}_H) =   \vec{s_H} \cdot \vec{n}_H - b_H = 0$$ by definition, and 
$$A_t(\vec{s}_H) = \vec{s_H} \cdot (\vec{n}_H + t \vec{o}) - b_H - t\vec{s}_H \cdot \vec{o} =   (\vec{s_H} \cdot \vec{n}_H - b_H) + s_H \cdot t \vec{o} -  t\vec{s}_H \cdot \vec{o} = 0. $$

Next, for any points $s_1,s_2 \in S$, 
$$A(\vec{s}_1) - A(\vec{s}_2) = (\vec{s}_1 - \vec{s}_2) \cdot \vec{n}_H $$ and 
$$A_t(\vec{s}_1) - A_t(\vec{s}_2) = (\vec{s}_1 - \vec{s}_2) \cdot \vec{n}_H + (\vec{s}_1 - \vec{s}_2) \cdot \vec{o}t =  (\vec{s}_1 - \vec{s}_2) \cdot \vec{n}_H $$ since $\vec{o}$ is perpendicular to $(\vec{s}_1 - \vec{s}_2)$. 
\end{proof}

\begin{proof}[Proof of Proposition \ref{prop:containedInSubspace}]
Fix a nonzero vector $\vec{o}$ that is orthogonal to the hyperplane $S$. 
For $t \in \mathbb{R}$, let $A_t$ be the map constructed in Lemma \ref{l:rotate}, set $f_{H_t} \coloneqq \sigma \circ A_t$, and denote the co-oriented hyperplane associated to $A_t$ by $H_t$. By construction, $f_{H_t}$ and $\eta$ coincide on $S$.   

Denote by $H_t^+$ (resp. $H^+$) the closed nonnegative half-spaces associated to $A_t$ (resp. $A$).  It suffices to show that there exists $\epsilon > 0$ such that $|t| < \epsilon$ implies 
\begin{equation} \label{eq:halfspacesmatch}
H_t^+ \cap \textrm{Im}_{(k)} = H^+ \cap \textrm{Im}_{(k)}.
\end{equation}  
(The desired one-parameter family is then the family parametrized by $t$ with $|t| < \epsilon$).  

For convenience, define $G_k := F^k \circ \ldots \circ F^1$.  Consider the complex $$M \coloneqq G_k(\mathcal{C}(G_k)) \cap H^-.$$
 (Here $\mathcal{C}(G_k)$ denotes the canonical polyhedral complex for $G_k$; we take the image (in $\mathbb{R}^{n_k}$) of this complex under $G_k$, which is itself a polyhedral complex, and then intersect it with the closed half-space $H^-$.)
 By condition \eqref{i:lowdiml}, $$\textrm{Im}_{(k)} \cap \{x \mid \eta(x) = 0\} = \textrm{Im}_{(k)} \cap H \subseteq S.$$  Consequently, each cell of $M$ either contains a cell in $H \cap S$ as a face, or is a positive distance away from $H$.  Consequently, there exists $\epsilon_1 > 0$ such that $|t| < \epsilon_1$ implies the intersection of any bounded cell of $M$ with $H_t^+$ is contained in $S \cap H$ or is empty.  
 By condition \eqref{i:nonparallel}, $|M|$ does not contain any unbounded geometric rays parallel to $H$.  
 \color{black} Consequently, there exists $\epsilon_2$ such that $|t| < \epsilon_2$ implies the intersection of any unbounded bounded cell of $M$ with $H_t^+$ is empty.  Set $\epsilon = \min\{\epsilon_1,\epsilon_2\}$. Then when $|t| < \epsilon$, $H_t^+ \cap \textrm{Im}_{(k)} = H^+ \cap \textrm{Im}_{(k)}$. 
\end{proof}

\section{Supplementary Experiments}
\label{app:experiments}

In this appendix, we consider the effect of varying the number $m$ of sample points when approximating the functional dimension. For a fixed network architecture of depth 4 and width 5, Figure \ref{fig:curve} shows the fraction of networks attaining the maximum possible value for $\fdim(\theta)$, as a fraction of $m$, where $m$ is shown as a multiple of the maximum possible functional dimension. We observe that the curve is very flat in the region of $x=100$ (the value used in Section \ref{sec:experiments}), suggesting that further increasing $m$ would likely not change the results meaningfully.

Figures \ref{fig:app_main} and \ref{fig:app_varying_width} below consider the effect of choosing a much smaller value of $m$. These figures are analogous to Figures \ref{fig:main} and \ref{fig:varying_width}, but with $m$ equal to twice instead of 100 times the maximum possible value for $\fdim(\theta)$. Each figure aggregates results for 20,000 different choices of $\theta \in \Omega$. We note that the bounds obtained on the functional dimension are unsurprisingly somewhat weaker for this very low value of $m$, but the distributions of approximate function dimension still show the same patterns as observed in Section \ref{sec:experiments}, indicating the robustness of our conclusions.

\begin{figure*}[ht]
\begin{center}
\includegraphics[width=0.9\linewidth]{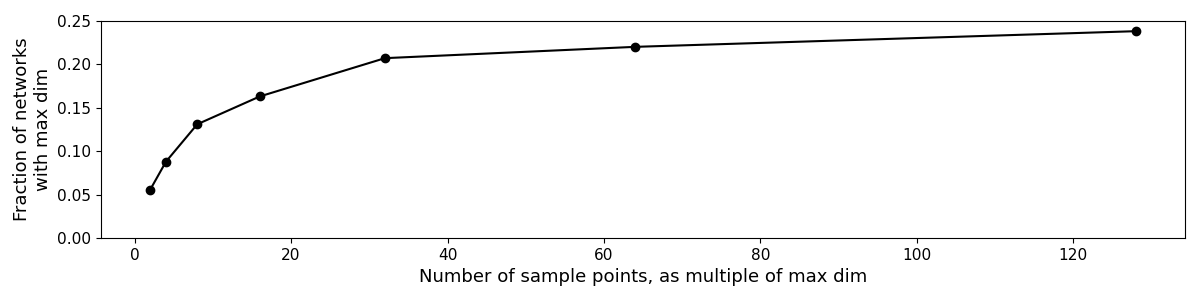}
\caption{This figure shows the fraction of networks attaining the maximum possible value for $\fdim(\theta)$, as a fraction of the number of sample points $m$, where $m$ is shown as a multiple of the maximum possible functional dimension, according to the upper bound given in \citet{GLMW}.}
\label{fig:curve}
\end{center}
\end{figure*}

\begin{figure*}[ht]
\begin{center}
\includegraphics[width=0.75\linewidth]{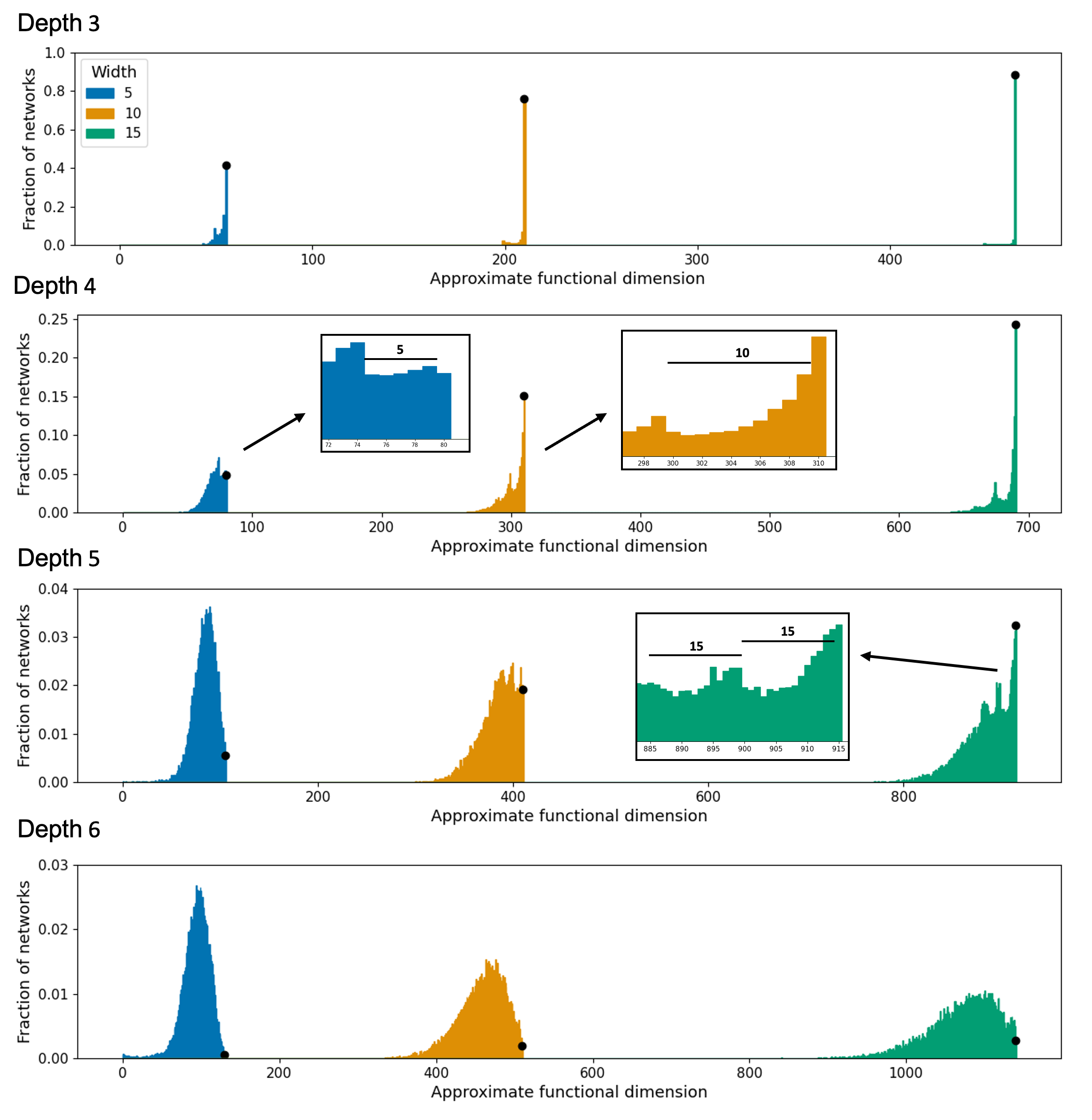}
\caption{For each of various network architectures, we approximate the distribution of functional dimensions as the parameter $\theta\in \Omega$ varies. Different plots show networks of different depths, while different colors in a plot correspond to varying the input dimension and width $n_0=n_1=\cdots = n_{d-1}$. Black dots show the fraction of networks with full functional dimension (no hidden symmetries). We observe as in Figure \ref{fig:main} that the proportion of networks with full functional dimension increases with width and decreases with depth. Insets in the figure zoom in on certain multimodal distributions, showing that modes are again spaced apart by approximately the width of the network.}
\label{fig:app_main}
\end{center}
\end{figure*}

\begin{figure*}[ht]
\begin{center}
\includegraphics[width=0.75\linewidth]{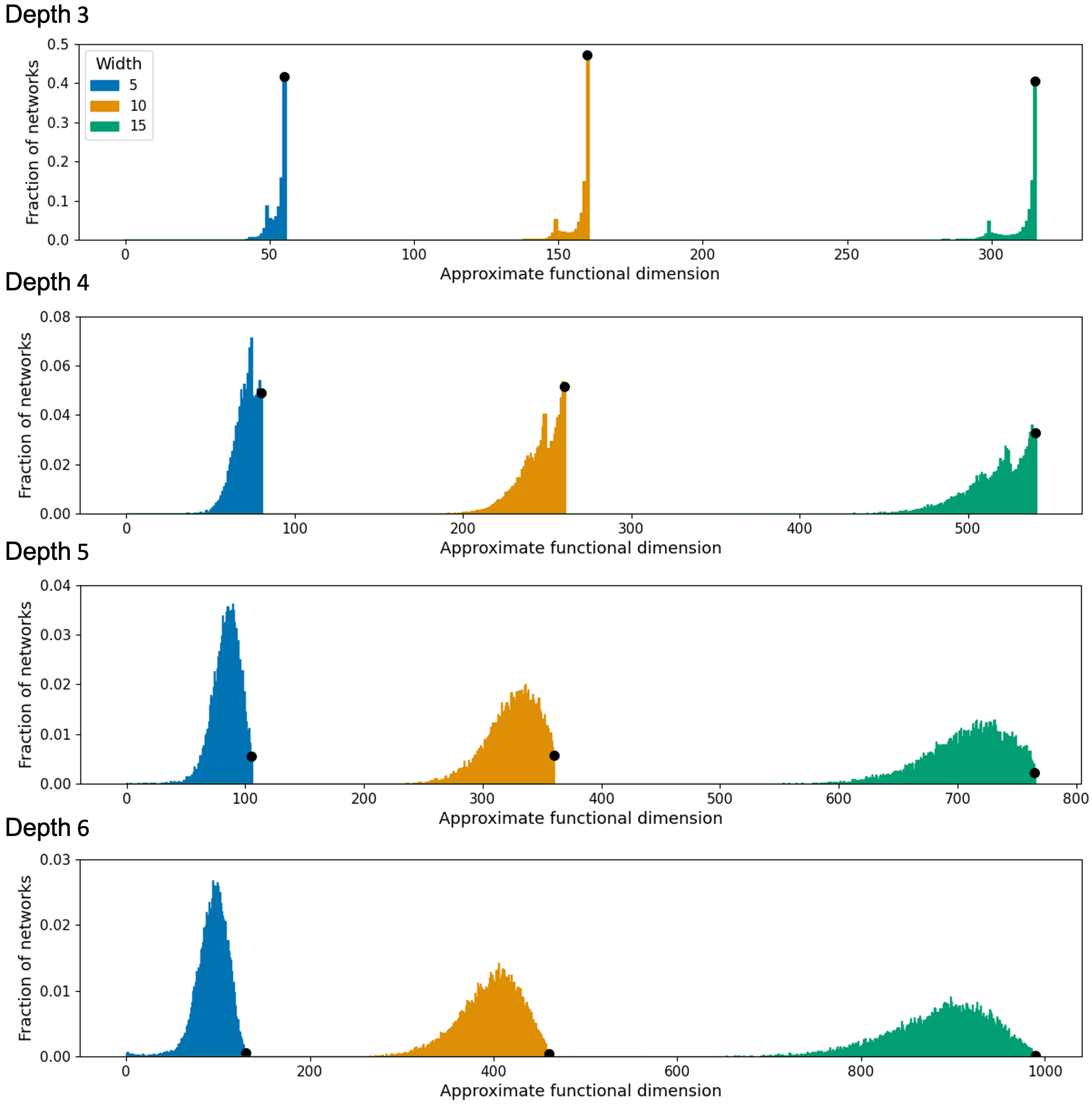}
\caption{This figure shows similar experiments as in Figure \ref{fig:app_main}, but with input dimension fixed at 5 instead of equal to the width of hidden layers. We observe as in Figure \ref{fig:varying_width} that the proportion of networks with full functional dimension decreases with depth, while slightly decreasing with width.}
\label{fig:app_varying_width}
\end{center}
\end{figure*}




\end{document}